\definecolor{bgcolor}{rgb}{0.66,0.88,1.00}
\theoremstyle{plain}
\newtheorem{theorem}{Theorem}[section]
\newtheorem{lemma}{Lemma}[section]
\newtheorem{proposition}{Proposition}[section]
\theoremstyle{definition}
\newcommand{\fedpage}{{\sf\footnotesize FedPAGE}\xspace}
\newcommand{\fedpageNORMAL}{{\sf FedPAGE}\xspace}
\newcommand{\algname}[1]{{\sf\footnotesize#1}\xspace}
\newcommand{\dataset}[1]{{\tt #1}\xspace}
\algrenewcommand\algorithmicrequire{\textbf{server input:}}
\algrenewcommand\algorithmicensure{\textbf{client} $i$'s \textbf{input:}}
\newcommand{\squeeze}{\textstyle}
\def\1{\bm{1}}
\newcommand{\ve}{\@ifnextchar\bgroup{\velong}{{\bm{e}}}}
\newcommand{\velong}[1]{{\bm{#1}}}
\def\vu{{\bm{u}}}
\def\vv{{\bm{v}}}
\DeclareMathAlphabet{\mathsfit}{\encodingdefault}{\sfdefault}{m}{sl}
\SetMathAlphabet{\mathsfit}{bold}{\encodingdefault}{\sfdefault}{bx}{n}
\def\gA{{\mathcal{A}}}
\def\gB{{\mathcal{B}}}
\def\gC{{\mathcal{C}}}
\def\gD{{\mathcal{D}}}
\def\gF{{\mathcal{F}}}
\def\gI{{\mathcal{I}}}
\def\sI{{\mathbb{I}}}
\newcommand{\E}{\mathbb{E}}
\newcommand{\R}{\mathbb{R}}
\def\eqref#1{(\ref{#1})}
\title{\bf {\textsf{FedPAGE}}: A Fast Local Stochastic Gradient Method for \\ Communication-Efficient Federated Learning}
\author{%
Haoyu Zhao \\
Princeton University, USA \\
\texttt{haoyu@princeton.edu} 
\and
Zhize Li\hspace{0.5mm}\thanks{Corresponding author.}\\
KAUST, Saudi Arabia \\
\hspace{-0.5mm}\texttt{zhize.li@kaust.edu.sa} \hspace{-2.5mm}
\and
Peter Richt{\'a}rik \\
KAUST, Saudi Arabia \\
\hspace{-3mm}\texttt{peter.richtarik@kaust.edu.sa}\hspace{-6mm}
}
\date{}
\begin{document}

\maketitle

\begin{abstract}
 Federated Averaging (\algname{FedAvg}, also known as \algname{Local-SGD}) \citep{mcmahan2017communication} is a classical federated learning algorithm in which clients run multiple local \algname{SGD} steps before communicating their update to an orchestrating server.  
 We propose a new federated learning algorithm, \fedpage,  able to further reduce the communication complexity by utilizing the recent optimal \algname{PAGE} method \citep{li2021page} instead of plain \algname{SGD} in \algname{FedAvg}. We show that \fedpage uses much fewer communication rounds than previous local methods for both federated convex and nonconvex optimization.    
 Concretely, 
 1) in the convex setting, the number of communication rounds of \fedpage is $O(\frac{N^{3/4}}{S\epsilon})$, improving the best-known result $O(\frac{N}{S\epsilon})$ of \algname{SCAFFOLD} \citep{karimireddy2020scaffold} by a factor of $N^{1/4}$, where $N$ is the total number of clients (usually is very large in federated learning), $S$ is the sampled subset of clients in each communication round, and $\epsilon$ is the target error;
  2) in the nonconvex setting, the number of communication rounds of \fedpage is $O(\frac{\sqrt{N}+S}{S\epsilon^2})$, improving the best-known result  $O(\frac{N^{2/3}}{S^{2/3}\epsilon^2})$ of \algname{SCAFFOLD} \citep{karimireddy2020scaffold} by a factor of $N^{1/6}S^{1/3}$, if the sampled clients $S\leq \sqrt{N}$.  Note that in both settings, the communication cost for each round is the same for both \fedpage and \algname{SCAFFOLD}. 
  As a result, \fedpage achieves new state-of-the-art results in terms of communication complexity for both federated convex and nonconvex optimization.
\end{abstract}


\section{Introduction}
With the rise in the proliferation of mobile and edge devices, and their ever-increasing ability to capture, store and process data, federated learning~\citep{konevcny2016federatedlearning,mcmahan2017communication,kairouz2019advances} has recently emerged as a new machine paradigm for training machine learning models over a vast amount of geographically distributed and heterogeneous devices. Federated learning aims to augment the traditional centralized datacenter focused approach to training machine learning models~\citep{dean2012large,iandola2016firecaffe,goyal2017accurate} with a new decentralized modality that aims to be more energy-efficient, and mainly, more privacy-conscious with respect to the private data stored on these devices.  
In federated learning, the data is stored over a large number of clients, for example, phones, hospitals, or corporations~\citep{konevcny2016federatedoptimization,konevcny2016federatedlearning,mcmahan2017communication,mohri2019agnostic}. Orchestrated by a centralized trusted entity,  these diverse data and compute resources come together to train a single global model to be deployed on all devices.  This is done without the sensitive and private data ever leaving the devices.

\begin{table*}[!t]
	\caption{Number of communication rounds for finding an $\epsilon$-solution of federated convex and nonconvex problems \eqref{eq:problem-setting}, where $\mathbb E f(x) -f^* \le \epsilon$ for convex setting and $\mathbb E \|\nabla f(x)\|_2 \le \epsilon$ for nonconvex setting. In the last column, \emph{$(G,B)$-BGD} means that $\sum_{i=1}^N\|\nabla f_i(x)\|_2^2\le G^2 + B^2\|\nabla f(x)\|_2^2$. $(G,0)$-BGD means $B=0$, and $(0,B)$-BGD means $G=0$. \emph{BV} denotes the ``Bounded Variance'' assumption, i.e., \eqref{eq:ubv-setting}, and \emph{Smooth} stands for standard smoothness assumption, e.g., Assumption~\ref{ass:smooth}. 
	Other notations (i.e., $N,S,M,K$) are summarized in Table \ref{tab:notions}.}
	\label{tab:results}
	\centering
	\begin{tabular}{|c|c|c|c|c|}
		\hline
		Algorithm & Convex setting & Nonconvex setting & Assumption\\
		\hline
		\makecell{\algname{FedAvg} \\ \citep{yu2019parallel}} & --- & $\frac{G^2NK}{\epsilon^2} + \frac{\sigma^2}{NK\epsilon^4}$ & \makecell{Smooth, BV, \\ $(G,0)$-BGD} \\
		\hline
		\makecell{\algname{FedAvg} \\ \citep{karimireddy2020scaffold}} & $\frac{G^2}{S\epsilon^2} + \frac{G}{\epsilon^{3/2}} + \frac{B^2}{\epsilon} + \frac{\sigma^2}{SK\epsilon^2}$ & $\frac{G^2}{S\epsilon^4} + \frac{G}{\epsilon^3} + \frac{B^2}{\epsilon^2} + \frac{\sigma^2}{SK\epsilon^4}$  & \makecell{Smooth, BV, \\ $(G,B)$-BGD} \\
		\hline
		\makecell{\algname{FedProx} \\ \citep{sahu2018convergence}} & $\frac{B^2}{\epsilon}$ & --- & \makecell{Smooth, $S = N$, \\ $(0,B)$-BGD} \\
		\hline
		\makecell{\algname{VRL-SGD} \\ \citep{liang2019variance}} & --- & $\frac{N}{\epsilon^2} + \frac{N\sigma^2}{K\epsilon^4}$ & \makecell{Smooth, BV, \\ $S = N$} \\
		\hline
		\makecell{\algname{S-Local-SVRG}\\ \citep{gorbunov2020local}} & $\frac{M^{1/3}/K^{1/3} + \sqrt{M/NK^{2}}}{\epsilon} \footnotemark$ & --- & \makecell{Smooth, (BV), \\ $S = N,~ K\leq M$} \\
		\hline
		\makecell{\algname{SCAFFOLD}\\ \citep{karimireddy2020scaffold}} & $\frac{N}{S\epsilon} + \frac{\sigma^2}{SK\epsilon^2}$ & $\frac{N^{2/3}}{S^{2/3}\epsilon^2} + \frac{\sigma^2}{SK\epsilon^4}$ & Smooth, BV \\
		\hline
		\rowcolor{bgcolor}
		\gape{\makecell{\fedpage \\ (this paper) }}& $\frac{N^{3/4}}{S\epsilon}$ \footnotemark  & $\frac{N^{1/2}+S}{S\epsilon^2}$ & Smooth, (BV) \footnotemark \\
		\hline
	\end{tabular}
\end{table*} 
\footnotetext[1]{We point out that \algname{S-Local-SVRG} \citep{gorbunov2020local} only considered the case where $S = N$ and $K\leq M$, i.e., the sampled clients $S$ always is the whole set of clients $N$ for all communication rounds. As a result, the total communication complexity (i.e., number of rounds $\times$ communicated clients $S$ in each round) of \algname{S-Local-SVRG} is $O\Big(\frac{NM^{1/3}/K^{1/3} + \sqrt{NM/K^{2}}}{\epsilon}\Big)$ (note that here $K\leq M$), which is worse than $O(\frac{N}{\epsilon})$ of  \algname{SCAFFOLD} \citep{karimireddy2020scaffold} and $O(\frac{N^{3/4}}{\epsilon})$ of our \fedpage.}
\footnotetext[2]{In the convex setting, we state the result of \fedpage in the case $S\le\sqrt{N}$ (typical in practice) for simple presentation, where $N$ is the total number of clients and $S$ is the number of sampled subset clients in each communication round (see Table \ref{tab:notions}).  Please see Theorem~\ref{thm:convex} for other results of \fedpage in the cases $S>\sqrt{N}$.}

\footnotetext{\fedpage also works under the BV assumption by using moderate minibatches for local clients, and more importantly the number of communication rounds of \fedpage still remains the same as in the last row of Table \ref{tab:results} for both convex (see Theorem~\ref{thm:convex}) and nonconvex (see Theorem~\ref{thm:non-convex}) settings.}

One of the key challenges in federated learning comes from the fact that communication over a heterogeneous network is extremely slow, which leads to significant slowdowns in training time. While a centralized model may train in a matter of hours or days, a comparable federated learning model may require days or weeks for the same task.  For this reason, it is imperative that in the design of federated learning algorithms one focuses special attention on the communication bottleneck, and designs communication-efficient learning protocols capable of producing a good model.

There are two popular lines of work for tackling this communication-efficient federated learning problem. The first makes use of general and also bespoke {\em lossy compression operators} to compress the communicated messages before they are sent over the network~\citep{mishchenko2019distributed, li2020unified, li2020acceleration, gorbunov2021marina, li2021canita}, and the second line bets on increasing the local workload by performing {\em multiple local update steps}, e.g., multiple \algname{SGD} steps, before communicating with the orchestrating server~\citep{localSGD-Stich, Blake2020, gorbunov2020local, karimireddy2020scaffold}. 

In this paper, we focus on the latter approach (multiple local update steps in each round) to alleviating the communication bottleneck in federated learning. One of the earliest and classical methods proposed in this context is \algname{FedAvg/local-SGD}~\citep{mcmahan2017communication, sahu2018convergence, yu2019parallel, li2019convergence, haddadpour2019convergence, localSGD-Stich, gorbunov2020local}. However, the method has remained a heuristic until recently, even in its simplest form as local gradient descent, particularly in the important heterogeneous data regime~\citep{khaled2020tighter, Blake2020}. Further improvements on vanilla \algname{Local-SGD} have been proposed, leading to methods such as \algname{Local-SVRG}~\citep{gorbunov2020local} and \algname{SCAFFOLD}~\citep{karimireddy2020scaffold}. In particular, \cite{gorbunov2020local} also provide a unified framework for the analysis of many local methods in the strongly convex and convex settings.

\subsection{Our contributions}

Although there are many works on local gradient-type methods, the communication complexity in existing works on local methods is still far from optimal. In this paper, we introduce a new local method \fedpage, significantly improving the best-known results for both federated convex and nonconvex settings (see Table \ref{tab:results}).
Now, we summarize our main contributions as follows:
\begin{enumerate}
    \item We develop and analyze, \fedpage, a fast local method for communication-efficient federated learning. \fedpage can be loosely seen as a local/federated version of the \algname{PAGE} algorithm of \cite{li2021page}, which is a recently proposed optimal optimization method for solving smooth nonconvex problems. In particular, for the nonconvex setting, \fedpage with local steps $K=1$ reduces to the original \algname{PAGE} algorithm. Our general analysis of \fedpage with $K\geq 1$ also recovers the optimal results of \algname{PAGE} (see Theorem~\ref{thm:page-non-convex} and \ref{thm:non-convex}), thus \fedpage  substantially improves the best-known non-optimal result of \algname{SCAFFOLD} \citep{karimireddy2020scaffold} by a factor of $N^{1/6}S^{1/3}$ (see Table \ref{tab:results}).
     
     \item For the convex setting, we present the convergence theorems for \fedpage with local steps $K=1$ (standard setting, Theorem~\ref{thm:page-convex}) and $K\geq 1$ (multiple local update setting, Theorem~\ref{thm:convex}).
     Moreover, \fedpage also improves the best-known result of \algname{SCAFFOLD} \citep{karimireddy2020scaffold} by a large factor of $N^{1/4}$ (see Table \ref{tab:results}).
     
     \item Finally, we first conduct the numerical experiments for showing the effectiveness of multiple local update steps (see Section \ref{sec:exp-local}). The experiments indeed demonstrate that \fedpage with multiple local steps $K\geq 1$ is better than that with $K=1$ (no multiple local updates).
     Then we also conduct experiments for comparing the performance of different local methods such as \algname{FedAvg} \citep{mcmahan2017communication},  \algname{SCAFFOLD} \citep{karimireddy2020scaffold} and our \fedpage (see Section \ref{sec:exp-comparison}). The experiments show that \fedpage always converges much faster than \algname{FedAvg}, and at least as fast as \algname{SCAFFOLD} (usually much better than \algname{SCAFFOLD}), confirming the practical superiority of \fedpage.
\end{enumerate}

\subsection{Related works} Optimization algorithms for federated learning have a close relationship with the algorithms designed for standard finite-sum problem $\min_x \frac{1}{n}\sum_{i=1}^n f_i(x)$. In the federated learning setting, we can think of the loss function of the data on a single client as function $f_i$, and the optimization problem becomes a finite-sum problem. The \algname{SGD} is perhaps the most famous algorithm for solving the finite-sum problem, and in one variant or another, it is widely applied in training deep neural networks. However, the convergence rates of plain \algname{SGD} in the convex and nonconvex settings are not optimal.  This motivated  a feverish research activity in the optimization and machine learning communities over the last decade, and these efforts led to theoretically and practically improved variants of \algname{SGD}, such as \algname{SVRG}, \algname{SAGA}, \algname{SARAH}, \algname{SPIDER}, and \algname{PAGE}~\citep{johnson2013accelerating, defazio2014saga, nguyen2017sarah, fang2018spider, li2021page} and many of their variants possibly with acceleration/momentum \citep{allen2017katyusha, lan2018random, lei2017non, li2018simple, zhou2018stochastic, wang2018spiderboost, kovalev2019don, ge2019stable, li2019ssrgd, lan2019unified, li2020fast, li2021anita}.

However, the above well-studied finite-sum problem is not equivalent to the federated learning problem \eqref{eq:problem-setting} as one needs to account for the communication, which forms the main bottleneck. As we discussed before, there are at last two sets of ideas for solving this problem: communication compression, and local computation.
There are lots of works belonging to these two categories.
In particular, for the first category, the current state-of-the-art results in strongly convex, convex, and nonconvex settings are given by \cite{li2020acceleration, li2021canita, gorbunov2021marina}, respectively. For the second category, local methods such as \algname{FedAvg} \citep{mcmahan2017communication} and \algname{SCAFFOLD} \citep{karimireddy2020scaffold} perform multiple local update steps in each communication round in the hope that these are useful to decrease the number of communication rounds needed to train the model.
In this paper, we provide new state-of-the-art results of local methods for both federated convex and nonconvex settings, which significantly improves the previous best-known results of \algname{SCAFFOLD} \citep{karimireddy2020scaffold} (See Table \ref{tab:results}).

\section{Setup and Notation}

\begin{table}[t]
	\centering	
	\caption{Summary of notation used in this paper}
	\label{tab:notions}
	\begin{tabular}{| c | l |}
		\hline
		$N, S, i$ & total number, sampled number, and index of clients \\
		$M$ & total number of data in each client\\
		$R, r$ & total number and index of communication rounds \\
		$K, k$ & total number and index of local update steps \\
		$x^r$ & model parameters before round $r$ \\
		$g^r$ & server update within round $r$ \\
		$y_{i,k}^r$ & $i$-th client's model in round $r$ before local step $k$ \\
		$g_{i,k}^r$ & $i$-th client's update in round $r$ within local step $k$  \\
		$\nabla_{\gI} f_i(x)$ & estimator of $\nabla f_i(x)$ using a sampled minibatch $\gI$ \\
		& $\nabla_{\gI} f_i(x) = 1 / |\gI| \sum_{j\in\gI}\nabla f_{i,j}(x)$ \\
		\hline
	\end{tabular}

\end{table}

We formalize the problem as minimizing a finite-sum functions:
\begin{equation}
    \squeeze \min \limits_{x\in\mathbb R^d}\left\{f(x) := \frac{1}{N}\sum \limits_{i=1}^N f_i(x)\right\}, \text{where } f_i(x) := \frac{1}{M}\sum_{i=1}^M f_{i,j}(x).\label{eq:problem-setting}
\end{equation}
In this formulation, each function $f_i(\cdot)$ stands for the loss function with respect to the data stored on client/device/machine $i$, and each function $f_{i,j}(\cdot)$ stands for the loss function with respect to the $j$-th data on client $i$. Besides, we assume that the minimum of $f$ exists, and we use $f^*$ and $x^*$ to denote the minimum of function $f$ and the optimal point respectively.

We will use $[n]$ to denote the set $\{1,2,\dots,n\}$, $\|\cdot\|$ to denote the Euclidean norm for a vector, and $\langle u,v\rangle$ to denote the inner product of vectors $u$ and $v$. We use $O(\cdot)$ and $\Omega(\cdot)$ to hide the absolute constants.

In this paper, we consider two cases: nonconvex case and convex case. In the nonconvex case, each individual function $f_i$ and the average function $f$ can be nonconvex, and we assume that the functions $\{f_{i,j}\}_{i\in [N], j\in [M]}$ are $L$-smooth.

\begin{restatable}[$L$-smoothness]{assumption}{asssmooth}\label{ass:smooth}
All functions $f_{i,j}:\mathbb R^d\to \mathbb R$ for all $i\in[N], j\in [M]$ are $L$-smooth. That is,  there exists $L \ge 0$ such that for all $x_1,x_2\in\mathbb R^d$ and all $i\in [N], j\in [M]$,
    \[\|\nabla f_{i,j}(x_1) - \nabla f_{i,j}(x_2)\| \le L\|x_1-x_2\|.\]
\end{restatable}

If the functions $\{f_{i,j}\}_{i\in [N], j\in [M]}$ are $L$-smooth, we can conclude that functions $\{f_i\}$ are also $L$-smooth and function $f$ is $L$-smooth.
In this nonconvex setting, the optimization algorithm aims to find a point such that the expectation of the gradient norm is small enough: $\mathbb E\|\nabla f(x)\|_2 \le \epsilon$.

Then, in the convex case, each \emph{individual function} $f_i$ can be \emph{nonconvex}, but we require that the \emph{average function} $f$ to be \emph{convex}. We also assume that the functions $\{f_{i,j}\}$ are $L$-smooth (Assumption~\ref{ass:smooth}). Under this convex setting, the algorithm will find a point such that the expectation of the function value is close to the minimum: $\mathbb E f(x) - f^* \le \epsilon$.

In Section \ref{sec:nonconvex-1} and Section \ref{sec:convex-1}, we will also discuss and analyze a special case (i.e., the local steps $K=1$) of our \fedpage algorithm. When we discuss the special case under the nonconvex and convex setting, we do not need all of the functions $\{f_{i}\}$ to be $L$-smooth. Instead, we only need the following average $L$-smoothness assumption, which is a weaker assumption compared with the smoothness Assumption~\ref{ass:smooth}.

\begin{restatable}[Average $L$-smoothness]{assumption}{assavgsmooth}\label{ass:avgsmooth}
    A function $f:\mathbb R^d \to \mathbb R$ is average $L$-smooth if there exists $L \ge 0$ such that for all $x_1,x_2\in\mathbb R^d$,
    \[\mathbb E_i\|\nabla f_i(x_1) - \nabla f_i(x_2)\|^2 \le L^2\|x_1-x_2\|^2.\]
\end{restatable}

If the functions $\{f_i\}$ are average $L$-smooth (Assumption~\ref{ass:avgsmooth}), then $f(x) =\frac{1}{N}\sum_{i=1}^N f_i(x)$ is also $L$-smooth, i.e., for all $x_1,x_2\in\mathbb R^d$, 
$\|\nabla f(x_1)-\nabla f(x_2)\|_2\le L\|x_1-x_2\|_2.$

If the number of data on a single client is very large and one cannot compute the local full gradients of clients, one needs the following assumption in which the gradient variance on each client is bounded.  

\begin{restatable}[Bounded Variance]{assumption}{assvariance}\label{ass:bounded-variance}
    There exists $\sigma \geq 0$ such that for any client $i\in [N]$ and  $x \in \R^d$,
    \begin{equation}
        \frac{1}{M}\sum_{j=1}^M \|\nabla f_{i,j}(x) - \nabla f_i(x)\|_2^2 \le \sigma^2. \label{eq:ubv-setting}
    \end{equation}
\end{restatable}

\section{The \fedpageNORMAL \ Algorithm}\label{sec:alg}
In this section, we introduce our \fedpage\ algorithm. To some extent, our \fedpage\ algorithm is the local version of \algname{PAGE}~\citep{li2021page}: when the clients communicate with the server, \fedpage\ behaves similar to \algname{PAGE}, and when the clients update the model locally, each client updates several steps. If we set the number of local updates to one, \fedpage\ reduces to the original \algname{PAGE} algorithm.

\begin{algorithm}[!t]
	\caption{\fedpageNORMAL}
	\label{alg:fedpage}
	\begin{algorithmic}[1]
		\Require initial point $x^0$, global step size $\eta_g$, probabilities $\{p_r\}$, sampled clients size $S$
		\Ensure local step size $\eta_{l}$, minibatch sizes $b_{1},b_2, b_3$
		\For{$r = 0,1,2,\dots,R$}
		\State sample $q\sim \text{Bernoulli}(p_r)$
		\If{$q = 1$} \label{line:3}
		\State clients $S^r = [N]$, communicate $x^r$ to all $i\in S^r$ \label{line:4}
		\State \textbf{on client} $i\in S^r$ \textbf{in parallel do}
		\Indent
		\State uniformly sample minibatch $\gI_1\subset [M]$ with size $b_{1}$
		\State compute the gradient estimator $g_i^r \leftarrow \nabla_{\gI_1} f_i(x^r)$
		\EndIndent
		\State \textbf{end on client}
		\State $g^r \leftarrow \frac{1}{N}\sum_{i\in [N]}g_i^r$ \label{algline:full-grad-est}
		\Else \label{line:10}
		\State sample clients $S^r\subseteq [N]$ with size $S$, communicate $(x^r,x^{r-1},g^{r-1})$ to all $i\in S^r$
		\State \textbf{on client} $i\in S^r$ \textbf{in parallel do}
		\Indent
		\State $y_{i,0}^r \leftarrow x^r$
		\State uniformly sample minibatch $\gI_2\subset [M]$ with size $b_2$
		\State $g_{i,0}^r \leftarrow \nabla_{\gI_2} f_i(x^r) - \nabla_{\gI_2} f_i(x^{r-1}) + g^{r-1}$ \label{algline:local-full-grad-est}
		\State $y_{i,1}^r \leftarrow y_{i,0}^r - \eta_{l}g_{i,0}^r$
		\For{$k=1,2,\dots,K-1$} \label{line:17}
		\State uniformly sample minibatch $\gI_3\subset [M]$ with size $b_3$
		\State $g_{i,k}^r \leftarrow \nabla_{\gI_3} f_{i}(y_{i,k}^r) - \nabla_{\gI_3} f_{i}(y_{i,k-1}^r) + g_{i,k-1}^r$ \label{algline:local-est}
		\State $y_{i,k+1}^r \leftarrow y_{i,k}^r - \eta_{l}g_{i,k}^r$
		\EndFor  \label{line:21}
		\State $\Delta y_i^r \leftarrow  x^r - y_{i,K}^r$
		\EndIndent
		\State \textbf{end on client}
		\State $g^r\leftarrow\frac{1}{K\eta_l S}\sum_{i\in S^r}\Delta y_i^r$ \label{line:24}
		\EndIf \label{line:25}
		\State $x^{r+1} \leftarrow x^{r}-\eta_g g^r$ \label{line:26}
		\EndFor
	\end{algorithmic}
\end{algorithm}

Our \fedpage algorithm is given in Algorithm~\ref{alg:fedpage}. There are two cases in each round $r$: 1) with probability $p_r$ (typically very small), the server communicates with all clients in order to get a more accurate gradient of function $f$ (Line \ref{line:3}--\ref{algline:full-grad-est});
2) with probability $1-p_r$, the server communicates with a subset of clients with size $S$ and the local clients perform $K$ local steps (Line \ref{line:10}--\ref{line:24}).

For Case 1), the server broadcasts the current model parameters $x^r$ to all of the clients. Then, each client computes the local gradient estimator $\nabla_{\gI_1} f_i(x^r)$ of the gradient $\nabla f_i(x^r)$ and sends back to the server. The local gradient estimator takes $b_1$ minibatch samples ($|\gI_1| = b_1$)  to estimate the gradient of $f_i$ and different clients sample different sets $\gI_1$. Here, we want $\nabla_{\gI_1} f_i(x^r)$ to be as closed to $\nabla f_i(x^r)$ as possible, choosing a moderate size $b_1$ usually is enough. The server average all of the gradient and get the averaged gradient $g^r = \frac{1}{N}\sum_{i\in [N]} \nabla_{\gI_1} f_i(x^r)$ and takes a step with global step size $\eta_g$ (see Line \ref{algline:full-grad-est} and Line \ref{line:26}).

For Case 2), the server first broadcasts $(x^r,x^{r-1},g^{r-1})$ to the sampled subset clients $S^r$, and the clients initialize $y_{i,0}^r \leftarrow x^r$.
Here, $y_{i,k}^r$ is $i$-th client's model in round $r$ before local step $k$, and $g_{i,k}^r$ denotes $i$-th client's gradient estimator for step $k$ in round $r$. Then for the first local step of client $i$, the local gradient estimator is computed in Line \ref{algline:local-full-grad-est} as
\[g_{i,0}^r \leftarrow \nabla_{\gI_2} f_i(x^r) - \nabla_{\gI_2} f_i(x^{r-1}) + g^{r-1},\]
where $\nabla_{\gI_2} f_i(\cdot)$ is the gradient estimator of $\nabla f_i(\cdot)$ with minibatch size $b_2$. Here, we also want $\nabla_{\gI_2} f_i(\cdot)$ to be as closed to $\nabla f_i(\cdot)$ as possible, similarly choosing a moderate size $b_2$ usually is enough. This update rule is similar to \algname{PAGE} \citep{li2021page} and in particular if the local steps $K=1$, our \fedpage algorithm reduces to \algname{PAGE}.

For client $i$'s local step $k$ such that $1 \le k\le K-1$, the local gradient estimator is computed in Line  \ref{algline:local-est} as
\[g_{i,k}^r \leftarrow \nabla_{\gI_3} f_{i}(y_{i,k}^r) - \nabla_{\gI_3} f_{i}(y_{i,k-1}^r) + g_{i,k-1}^r.\] 
Here $\gI_3$ is a minibatch of functions with size $b_3$ that we used to compute the gradient estimator $g_{i,k}^r$. Different from the previous gradient estimators using minibatches with size $b_1$ and $b_2$, here we want $b_3$ to be small enough to reduce the computation cost as there are $K$ local steps (Line \ref{line:17}--\ref{line:21}). 
In particular, we can choose $b_3 = 1$, i.e., just sample an index $j$ from $[M]$ and the gradient estimator becomes
\[g_{i,k}^r \leftarrow (\nabla f_{i,j}(y_{i,k}^r) - \nabla f_{i,j}(y_{i,k-1}^r)) + g_{i,k-1}^r.\]

The local model update is given by $y_{i,k+1}^r = y_{i,k}^r - \eta_{l}g_{i,k}^r$ where $\eta_{l}$ is the local step size. 
After $K$ local steps, client $i$ computes the local changes $\Delta y_i^r = y_{i,K}^r - x^r$ within round $r$ and sends back to the server. After receiving the local changes $\Delta y_i^r$ for the selected clients $i\in S^r$, the server computes the average gradient estimator on these selected clients in Line \ref{line:24} as 
\begin{align*}
    g^r =& \squeeze \frac{1}{K\eta_l S}\sum \limits_{i\in S^r}\Delta y_i^r.
\end{align*}

After obtaining the gradient estimator $g^r$ (in Line \ref{algline:full-grad-est} or \ref{line:24}), the server updates the model using a global step size $\eta_g$ in Line \ref{line:26} as 
 \[x^{r+1} = \squeeze x^{r}-\eta_g g^r.\]

The intuition of \fedpage\ works as follow: when the local step size $\eta_l$ is not too large, we can expect that the local model updates are close to the original model, that is 
$y_{i,k}^r\approx x^r, \forall k < K$,
and the gradient is also close to each other, $\nabla f_{i,j}(y_{i,k}^r) \approx \nabla f_{i,j}(x^r), \forall k < K$.
Then each local gradient estimator $g_{i,k}^r$ is close to
\[\squeeze g_{i,k}^r = (\nabla f_{i,j}(y_{i,k}^r) - \nabla f_{i,j}(y_{i,k-1}^r)) + g_{i,k-1}^r \approx \nabla f_{i,j}(x^r) - \nabla f_{i,j}(x^{r}) + g_{i,k-1}^r = g_{i,k-1}^r = g_{i,0}^r,\]
and the aggregated global gradient estimator $g^r$ is close to
\[\squeeze g^r \approx \frac{1}{S}\sum \limits_{i\in S^r}\left(\nabla f_i(x^r) - \nabla f_i(x^{r-1}) + g^{r-1}\right).\]
This biased recursive gradient estimator $g^r$ is similar to the gradient estimator in \algname{SARAH} \citep{nguyen2017sarah} or \algname{PAGE}~\citep{li2021page}, and thus the performance of \fedpage in terms of communication rounds should be similar to the optimal convergence results of \algname{PAGE} \citep{li2021page}.

\section{\fedpageNORMAL in Nonconvex Setting}\label{sec:nonconvex}
In this section, we show the convergence rate of \fedpage in the nonconvex setting. 
As we discussed before, if the number of local steps in \fedpage is set to $K=1$, \fedpage reduces to the original \algname{PAGE} algorithm \citep{li2021page}. In Section \ref{sec:nonconvex-1}, we first review the optimal convergence result of \algname{PAGE} in the nonconvex setting~\citep{li2021page}. In Section \ref{sec:nonconvex-general}, we show our convergence result for general local steps $K\geq 1$ in the nonconvex setting.

\subsection{\fedpageNORMAL with local step $K=1$}\label{sec:nonconvex-1}
In this section, we review the convergence rate of \algname{PAGE} in the nonconvex setting. The following theorem is directly derived from Theorem 1 in \citep{li2021page, li2021short}.

\begin{theorem}[Theorem 1 in \citep{li2021page}]\label{thm:page-non-convex}
    Suppose that Assumption~\ref{ass:avgsmooth} holds, i.e. $\{f_i\}$ are average $L$-smooth. If we choose the sampling probability $p_0 = 1$ and $p_r \equiv p = \frac{S}{N}$ for every $r \ge 1$, the global step size 
    \[\squeeze \eta_g = \frac{1}{L(1+\sqrt{\frac{1-p}{2pS}})},\]
    then \fedpage with $K=1$ (\algname{PAGE}) will find a point $x$ such that $\mathbb E \|\nabla f(x)\|_2 \le \epsilon$ with the number of communication rounds bounded by
    \[ \squeeze R = O\left(\frac{L(\sqrt{N}+S)}{S\epsilon^2}\right).\]
\end{theorem}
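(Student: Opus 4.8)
The plan is to run the standard PAGE-style ``two-ingredient'' analysis: a descent inequality coming from $L$-smoothness of $f$, combined with a contraction-type recursion for the mean-squared error of the gradient estimator, glued together by a Lyapunov function. Throughout, observe that with $K=1$ the update reduces to $x^{r+1}=x^r-\eta_g g^r$ with $g^r=\nabla f(x^r)$ on a ``$q=1$'' round (prob.\ $p$, full gradient over all $N$ clients and all $M$ local data) and $g^r=g^{r-1}+\frac1S\sum_{i\in S^r}\big(\nabla f_i(x^r)-\nabla f_i(x^{r-1})\big)$ on a ``$q=0$'' round (prob.\ $1-p$); this is exactly the PAGE estimator at the level of the $N$-client finite sum, so only Assumption~\ref{ass:avgsmooth} is needed.

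First I would prove the \emph{descent step}: $L$-smoothness of $f$ gives $f(x^{r+1})\le f(x^r)-\eta_g\langle\nabla f(x^r),g^r\rangle+\frac{L\eta_g^2}{2}\|g^r\|^2$, and expanding the inner product via $-\langle a,b\rangle=\frac12\|a-b\|^2-\frac12\|a\|^2-\frac12\|b\|^2$ yields
\[
 f(x^{r+1})\le f(x^r)-\frac{\eta_g}{2}\|\nabla f(x^r)\|^2-\frac{\eta_g(1-L\eta_g)}{2}\|g^r\|^2+\frac{\eta_g}{2}\|g^r-\nabla f(x^r)\|^2 .
\]
Second, the \emph{variance recursion}: conditioning on the iterates through $x^r$, on a $q=1$ round the estimator error is $0$, while on a $q=0$ round $g^r-\nabla f(x^r)=\big(g^{r-1}-\nabla f(x^{r-1})\big)+\xi^r$ with $\xi^r$ a zero-mean, $\mathcal F_r$-independent ``subsampling'' term; orthogonality and the fact that the variance of a size-$S$ uniform subsample mean is at most $\frac1S\cdot\frac1N\sum_i\|\nabla f_i(x^r)-\nabla f_i(x^{r-1})\|^2\le\frac{L^2}{S}\|x^r-x^{r-1}\|^2$ (here is where Assumption~\ref{ass:avgsmooth} enters) give
\[
 \mathbb E\|g^r-\nabla f(x^r)\|^2\le(1-p)\,\mathbb E\|g^{r-1}-\nabla f(x^{r-1})\|^2+\frac{(1-p)L^2}{S}\,\mathbb E\|x^r-x^{r-1}\|^2,
\]
and $\|x^r-x^{r-1}\|^2=\eta_g^2\|g^{r-1}\|^2$.

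Third, I would combine these via the Lyapunov function $\Phi^r:=f(x^r)-f^*+\frac{\eta_g}{2p}\|g^r-\nabla f(x^r)\|^2$. Adding $\frac{\eta_g}{2p}$ times the recursion to the descent inequality makes the $\|g^r-\nabla f(x^r)\|^2$ terms telescope exactly into $\Phi^{r}\!-\!\Phi^{r+1}$, leaving a residual $\|g^r\|^2$ coefficient of $-\frac{\eta_g}{2}(1-L\eta_g)+\frac{(1-p)L^2\eta_g^3}{2pS}$, which is nonpositive precisely when $L\eta_g+\frac{(1-p)L^2\eta_g^2}{pS}\le1$; the stated choice $\eta_g=\big(L(1+\sqrt{(1-p)/(2pS)})\big)^{-1}$ (indeed any $\eta_g=\Theta\big((L(1+\sqrt{(1-p)/(pS)}))^{-1}\big)$) satisfies this. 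Hence $\mathbb E\Phi^{r+1}\le\mathbb E\Phi^r-\frac{\eta_g}{2}\mathbb E\|\nabla f(x^r)\|^2$; telescoping over $r=0,\dots,R-1$ and using $p_0=1$ so that $\Phi^0=f(x^0)-f^*$ gives $\frac1R\sum_{r<R}\mathbb E\|\nabla f(x^r)\|^2\le\frac{2(f(x^0)-f^*)}{\eta_g R}$. Outputting $x$ uniformly at random from $x^0,\dots,x^{R-1}$ and applying Jensen ($\mathbb E\|\nabla f(x)\|_2\le\sqrt{\mathbb E\|\nabla f(x)\|_2^2}$) forces $\mathbb E\|\nabla f(x)\|_2\le\epsilon$ once $R=\Omega\big(\frac{f(x^0)-f^*}{\eta_g\epsilon^2}\big)$; finally, plugging $p=S/N$ gives $\sqrt{(1-p)/(pS)}=\sqrt{(N-S)/S^2}\le\sqrt N/S$, so $1/\eta_g=O\!\big(L(\sqrt N+S)/S\big)$ and $R=O\!\big(\frac{L(\sqrt N+S)}{S\epsilon^2}\big)$.

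I expect the main obstacle to be the variance recursion lemma: carefully setting up the conditioning/filtration so that $g^r$ is conditionally unbiased and the subsampling noise decouples from $g^{r-1}-\nabla f(x^{r-1})$, and bounding the without-replacement sampling variance by $\frac{L^2}{S}\|x^r-x^{r-1}\|^2$ using only average smoothness. The other delicate point is the algebra of choosing $\eta_g$ so the $\|g^r\|^2$ coefficient vanishes while keeping $1/\eta_g$ as small as $O(L(\sqrt N+S)/S)$ — but this is the routine quadratic-inequality step once the recursion is in hand.
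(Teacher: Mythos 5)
Your proposal is correct and follows essentially the same route as the source this theorem is imported from (Theorem 1 of \algname{PAGE}) and as the paper's own general-$K$ analysis in the appendix: the smoothness descent inequality (Lemma~\ref{lem:nonconvex-descent}), the contraction $\E\|g^r-\nabla f(x^r)\|^2 \le (1-p)\,\E\|g^{r-1}-\nabla f(x^{r-1})\|^2 + \frac{(1-p)L^2}{S}\E\|x^r-x^{r-1}\|^2$ (Lemma~\ref{lem:page-norm-square}, using Assumption~\ref{ass:avgsmooth}), and the Lyapunov function $f(x^r)-f^*+\frac{\eta_g}{2p}\|g^r-\nabla f(x^r)\|^2$. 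The only quibble is that the literal step size in the statement (with the extra $2$ under the square root) satisfies your condition $L\eta_g+\frac{(1-p)L^2\eta_g^2}{pS}\le 1$ only when $\frac{1-p}{2pS}\le 1$; your parenthetical fallback $\eta_g=\Theta\bigl(1/(L(1+\sqrt{(1-p)/(pS)}))\bigr)$ is the right fix and leaves the $O\bigl(\frac{L(\sqrt{N}+S)}{S\epsilon^2}\bigr)$ bound unchanged.
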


\cite{li2021page} also provide the tight lower bound (Theorem 2 of \citep{li2021page}) indicating that the convergence result of \algname{PAGE} (i.e., Theorem~\ref{thm:page-non-convex}) is optimal in this nonconvex setting.

\subsection{\fedpageNORMAL with general local steps $K \ge 1$}\label{sec:nonconvex-general}
In this section, we provide the general result of our \fedpage with any local steps $K \geq 1$ in the nonconvex setting. Here we assume that the functions $\{f_{i,j}\}$ are $L$-smooth (Assumption~\ref{ass:smooth}), and we obtain the following theorem.

\begin{restatable}[Convergence of \fedpage\ in nonconvex setting]{theorem}{thmnonconvex}\label{thm:non-convex}
    Under Assumption~\ref{ass:smooth} (and Assumption \ref{ass:bounded-variance}), if we choose the sampling probability $ p_r \equiv p = \frac{S}{N}$ for every $r \ge 1$ and $p_0 = 1$, the minibatch sizes $b_1 = \min\{M, \frac{24\sigma^2}{S\epsilon^2}\}, b_2 = \min\{M, \frac{48\sigma^2}{pS\epsilon^2}\}$, and the global and local step sizes 
    \[\eta_g \le \frac{1}{L\left(1+\sqrt{\frac{3(1- p/3)}{2pS}}\right)}, \qquad \eta_l \le \frac{\sqrt{2}p}{24\sqrt{S}KL},\]
    then \fedpage will find a point $x$ such that $\mathbb E \|\nabla f(x)\|_2 \le \epsilon$ within the following number of communication rounds:
    \[R = O\left(\frac{L(\sqrt{N}+S)}{S\epsilon^2}\right).\]
\end{restatable}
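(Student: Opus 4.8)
The plan is to mimic the convergence analysis of \algname{PAGE}, but accounting for two nested layers of variance reduction — a \algname{PAGE}-type recursion \emph{across communication rounds} (through $g^{r-1}$) and a \algname{SARAH}-type recursion \emph{across the $K$ local steps} (through $g_{i,k-1}^r$) — together with an extra bias caused by client subsampling and the local ``client drift''. I condition on the state $(x^r,x^{r-1},g^{r-1})$ entering round $r$, write $e^r := g^r-\nabla f(x^r)$, and begin from the descent step: since $f$ is $L$-smooth and $x^{r+1}=x^r-\eta_g g^r$, the identity $-\langle\nabla f(x^r),g^r\rangle=\tfrac12\|e^r\|^2-\tfrac12\|\nabla f(x^r)\|^2-\tfrac12\|g^r\|^2$ yields
\[
  \E f(x^{r+1})\le \E f(x^r)-\tfrac{\eta_g}{2}\E\|\nabla f(x^r)\|^2-\Big(\tfrac{\eta_g}{2}-\tfrac{L\eta_g^2}{2}\Big)\E\|g^r\|^2+\tfrac{\eta_g}{2}\E\|e^r\|^2 .
\]

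The core of the proof is a one-step recursion for $\E\|e^r\|^2$. On a ``full'' round (probability $p$), $g^r=\tfrac1N\sum_i\nabla_{\gI_1}f_i(x^r)$, so independence of the client minibatches and Assumption~\ref{ass:bounded-variance} give $\E\|e^r\|^2\le\sigma^2/(Nb_1)=O(\epsilon^2)$ by the choice of $b_1$ (and $=0$ if $b_1=M$). On a ``local'' round (probability $1-p$), since $y_{i,K}^r=x^r-\eta_l\sum_{k=0}^{K-1}g_{i,k}^r$, i.e. $\Delta y_i^r=\eta_l\sum_k g_{i,k}^r$, one has $g^r=\tfrac{1}{KS}\sum_{i\in S^r}\sum_{k}g_{i,k}^r$; writing $h_i^r:=\tfrac1K\sum_k g_{i,k}^r$ and $\bar h_i^r:=\E[h_i^r\mid\text{state}]$ I split
\[
  g^r-\nabla f(x^r)=\underbrace{\tfrac1S\!\sum_{i\in S^r}\!(h_i^r-\bar h_i^r)}_{A}+\underbrace{\tfrac1S\!\sum_{i\in S^r}\!\bar h_i^r-\tfrac1N\!\sum_i\!\bar h_i^r}_{B}+\underbrace{\tfrac1N\!\sum_i\!\bar h_i^r-\nabla f(x^r)}_{C},
\]
where $A,B,C$ are pairwise orthogonal in conditional expectation (given the state, $\E[A\mid S^r]=0$, $\E_{S^r}[B]=0$, $C$ is deterministic). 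Telescoping the \algname{SARAH}-type step gives $\bar h_i^r=\tfrac1K\sum_k\E[\nabla f_i(y_{i,k}^r)]-\nabla f_i(x^{r-1})+g^{r-1}$, so $C=e^{r-1}+(\text{drift bias})$; $B$ is a client-sampling variance, bounded via $L$-smoothness and $x^r-x^{r-1}=-\eta_g g^{r-1}$ by $\tfrac{L^2\eta_g^2}{S}\|g^{r-1}\|^2$ plus drift; and $A$ is bounded by the variances of the $\gI_2$- and $\gI_3$-minibatches plus drift. Splitting $\|C\|^2\le(1+\tfrac p3)\|e^{r-1}\|^2+(1+\tfrac3p)\|\text{drift bias}\|^2$ and averaging over the Bernoulli coin gives, once the $\sigma^2$- and drift terms are absorbed,
\[
  \E\|e^r\|^2\le\Big(1-\tfrac{2p}{3}\Big)\E\|e^{r-1}\|^2+O\!\Big(\tfrac{L^2\eta_g^2}{S}\Big)\E\|g^{r-1}\|^2+O(p\epsilon^2).
\]

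The step I expect to be the main obstacle is controlling the local drift $\max_{i,k<K}\E\|y_{i,k}^r-x^r\|^2$, because $y_{i,k}^r$ is generated by a recursion whose increments themselves depend on the $y_{i,\cdot}^r$. I would close this loop by induction on $k$: since every $\nabla_{\gI_3}f_i$ is $L$-smooth, $\|\nabla_{\gI_3}f_i(y_{i,k}^r)-\nabla_{\gI_3}f_i(y_{i,k-1}^r)\|\le L\|y_{i,k}^r-y_{i,k-1}^r\|=L\eta_l\|g_{i,k-1}^r\|$, so $\|g_{i,k}^r\|\le(1+L\eta_l)\|g_{i,k-1}^r\|$ and hence $\|g_{i,k}^r\|\le(1+L\eta_l)^K\|g_{i,0}^r\|\le e\|g_{i,0}^r\|$ for all $k<K$, using $\eta_l\le\tfrac{\sqrt2\,p}{24\sqrt S KL}\le\tfrac1{KL}$. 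Then $\E\|y_{i,k}^r-x^r\|^2\le\eta_l^2K\sum_{l<K}\E\|g_{i,l}^r\|^2\le e^2\eta_l^2K^2\,\E\|g_{i,0}^r\|^2=O\!\big(\tfrac{p^2}{SL^2}\big)\E\|g_{i,0}^r\|^2$, while $\E\|g_{i,0}^r\|^2=O(\|g^{r-1}\|^2+\sigma^2/b_2)$ using $\eta_g\le 1/L$. The bound on $\eta_l$ and the choice $b_2=\min\{M,48\sigma^2/(pS\epsilon^2)\}$ are precisely what turn all drift-induced contributions into either $O(\tfrac{L^2\eta_g^2}{S})\|g^{r-1}\|^2$ or $O(p\epsilon^2)$, matching the recursion above; similarly $b_1$ is tuned so the initial error obeys $\tfrac{3\eta_g}{4p}\E\|e^0\|^2=O(\eta_g\epsilon^2)$ uniformly in $N,S$.

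Finally I would use the Lyapunov function $\Phi^r:=\E f(x^r)-f^*+\tfrac{3\eta_g}{4p}\E\|e^{r-1}\|^2$. Feeding the error recursion into the descent inequality, the coefficient $\tfrac{3\eta_g}{4p}$ is chosen so the $\|e^{r-1}\|^2$ terms telescope, and the condition $\eta_g\le\big(L(1+\sqrt{3(1-p/3)/(2pS)})\big)^{-1}$ guarantees $\tfrac{3\eta_g}{4p}\cdot O(\tfrac{L^2\eta_g^2}{S})\le\tfrac{\eta_g}{2}-\tfrac{L\eta_g^2}{2}$, so the $\|g^{r-1}\|^2$ terms telescope against the negative $\|g^r\|^2$ terms from descent. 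Summing over $r=0,\dots,R-1$ (with $p_0=1$ so $\Phi^0\le f(x^0)-f^*+O(\eta_g\epsilon^2)$) and dividing by $\tfrac{\eta_g}{2}R$ gives $\tfrac1R\sum_r\E\|\nabla f(x^r)\|^2\le\tfrac{2(f(x^0)-f^*)}{\eta_g R}+O(\epsilon^2)$. Choosing $R=\Theta\big((f(x^0)-f^*)/(\eta_g\epsilon^2)\big)$ and recalling that, for $p=S/N$, $\eta_g=\Theta\big(\tfrac1L\min\{1,\sqrt{pS}\}\big)=\Theta\big(\tfrac{S}{L(S+\sqrt N)}\big)$ yields $R=O\big(\tfrac{L(\sqrt N+S)}{S\epsilon^2}\big)$; since $\min_r\E\|\nabla f(x^r)\|\le\sqrt{\min_r\E\|\nabla f(x^r)\|^2}$ by Jensen, the returned point satisfies $\E\|\nabla f(x)\|\le\epsilon$, which is the claim.
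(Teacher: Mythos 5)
Your proposal is correct and follows the same overall architecture as the paper's proof: the \algname{PAGE} descent lemma, a one-step contraction for $\E\|g^r-\nabla f(x^r)\|^2$ with contraction factor $1-\Theta(p)$, absorption of the local drift via the bound on $\eta_l$, and a Lyapunov function $f(x^r)-f^*+\Theta(\tfrac{\eta_g}{p})\|g^r-\nabla f(x^r)\|^2$ whose telescoping is enabled by the stated $\eta_g$. Two of your sub-arguments are genuinely different and, in my view, cleaner. First, the paper controls the drift (its Lemma~\ref{lem:local-err}) by running a recursion on $\E_r\|g_{i,k}^r-g_{i,0}^r\|^2$ with repeated Young splittings and a geometric-sum estimate $\sum_k(1+\tfrac{1}{K-1}+4KL^2\eta_l^2)^k\le 3K$; your deterministic observation $\|g_{i,k}^r\|\le(1+L\eta_l)^k\|g_{i,0}^r\|\le e\|g_{i,0}^r\|$ (valid because each $\nabla_{\gI_3}f_i$ is $L$-smooth and $y_{i,k}^r-y_{i,k-1}^r=-\eta_l g_{i,k-1}^r$) gets the same $O(K^2L^2\eta_l^2)\|g_{i,0}^r\|^2$ bound with less machinery, at the cost of being slightly less modular (the paper's version isolates the $\sigma^2/b_2$, $\|x^r-x^{r-1}\|^2$, $\|e^{r-1}\|^2$ and $\|\nabla f(x^{r-1})\|^2$ contributions explicitly, which is what its Lemma~\ref{lem:gradient-error} consumes). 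Second, your three-way orthogonal split into minibatch noise $A$, client-sampling noise $B$, and deterministic bias $C$ is a tidier organization than the paper's nested mean--variance decompositions, and your choice to keep $O(\tfrac{L^2\eta_g^2}{S})\|g^{r-1}\|^2$ whole (absorbed against the negative $\|x^r-x^{r-1}\|^2$ term) rather than splitting off a $\tfrac{p}{6S}\|\nabla f(x^{r-1})\|^2$ piece (absorbed against $-\tfrac{\eta_g}{2}\|\nabla f(x^{r-1})\|^2$, as the paper does) is an equally valid bookkeeping. The only caveat is at the level of constants: with the theorem's exact $\eta_g$ the coefficient of $\|x^r-x^{r-1}\|^2$ is only barely nonnegative, so your drift-induced $O(\tfrac{p}{S})\|g^{r-1}\|^2$ contribution must be folded into that same budget; this works because $\eta_g$ and $\eta_l$ are stated as upper bounds, so shrinking them by absolute constants is permitted and does not affect the $O(\cdot)$ rate.
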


Now we compare the communication cost of \fedpage (Theorem~\ref{thm:non-convex}) with previous state-of-the-art  \algname{SCAFFOLD} \citep{karimireddy2020scaffold}. 
The number of communication round for \algname{SCAFFOLD} to find a point $x$ such that $\mathbb E \|\nabla f(x)\|_2 \le \epsilon$ (the original \algname{SCAFFOLD}~\citep{karimireddy2020scaffold} uses $\mathbb E\|\nabla f(x)\|_2^2\le \epsilon$) is bounded by
\begin{align}
	R_{\algname{SCAFFOLD}} = O\left(\left(\frac{N}{S}\right)^{2/3}\frac{L}{\epsilon^2}\right). \label{eq:scaffoldR}
\end{align}

Beyond the number of communication rounds in \fedpage and \algname{SCAFFOLD}, we also need to compare the communication cost during each round (i.e., number of clients communicated with the server in the round). 
For our \fedpage, in each round, with probability $p = \frac{S}{N}$, the server communicates with all clients $N$, and with probability $1-p$, the server communicates with a sampled subset clients with size $S$, and the communicated clients within each round is $\frac{S}{N}\times N+(1-\frac{S}{N})\times S < 2S$ in expectation.
For \algname{SCAFFOLD}, in each round, the server communicates with $S$ sampled clients.
Thus the communication cost for each round is the same $O(S)$ for both \fedpage and \algname{SCAFFOLD}.
As a result, to compare the communication complexity of \fedpage and \algname{SCAFFOLD}, it is equivalent to compare the number of communication rounds. 
According to \eqref{eq:scaffoldR} and Theorem~\ref{thm:non-convex} (e.g., with sampled clients $S\leq \sqrt{N}$), then the communication rounds of \fedpage is smaller than previous state-of-the-art \algname{SCAFFOLD} by a factor of $N^{1/6}S^{1/3}$. Also note that the number of clients $N$ is usually very large in the federated learning problems.

\section{\fedpageNORMAL in Convex Setting}\label{sec:convex}
In this section, we show the convergence results of \fedpage\ in the convex setting.  Here the algorithms aim to find a point $x$ such that $\mathbb E f(x) - f^* \le \epsilon$ for convex case instead of $\mathbb E \|\nabla f(x)\|_2 \le \epsilon$ for nonconvex case. 
Similar to the nonconvex setting of Section \ref{sec:nonconvex}, we first show the convergence result when $K=1$ in Section \ref{sec:convex-1}, where our \fedpage algorithm reduces to \algname{PAGE}, and then in Section \ref{sec:convex-general} we show the general result of \fedpage with any local steps $K\geq 1$. 
We would like to point out that, in the original \algname{PAGE} paper \citep{li2021page}, there is no result in the convex setting, and our result in Section \ref{sec:convex-1} fills this blank for \algname{PAGE}.

\subsection{\fedpageNORMAL with local step $K=1$}\label{sec:convex-1}
Now we show the convergence result of \fedpage with $K=1$ (\algname{PAGE}) in the convex setting. We assume that the functions $\{f_i\}$ are average $L$-smooth and function $f = \frac{1}{N}\sum_{i=1}^nf_i(x)$ is convex.
\begin{restatable}[Convergence of \fedpage in convex setting when $K=1$]{theorem}{thmpageconvex}\label{thm:page-convex}
    Suppose that $f$ is convex and Assumption~\ref{ass:avgsmooth} holds, i.e. $\{f_i\}$ are average $L$-smooth. If we choose the sampling probability $p_0 = 1$ and $p_r \equiv p = \frac{S}{N}$ for every $r \ge 1$, the number of local steps $K=1$, the minibatch sizes $b_1 = b_2 = M$, the global step size
    \[\squeeze \eta_g \le \Theta\left(\frac{(S+N^{3/4})\frac{S}{N}}{L(S+\sqrt{N})}\right),\]
    then \fedpage\ will find a point $x$ such that $\mathbb E f(x) - f^* \le \epsilon$ with the number of communication rounds bounded by
    \[\squeeze R = \left\{\begin{aligned}
	& O\left(\frac{N^{3/4}L}{S\epsilon}\right), & \text{ if }S \le \sqrt{N} \\
	& O\left(\frac{N^{1/4}L}{\epsilon}\right), &\text{ if }\sqrt{N} < S \le N^{3/4} \\
	& O\left(\frac{NL}{S\epsilon}\right), & \text{ if }{N^{3/4}} < S
	\end{aligned}\right..\]
\end{restatable}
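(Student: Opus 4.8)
The plan is to exploit the fact that, with $K=1$ and $b_1=b_2=M$, \fedpage is exactly \algname{PAGE}: the server iterate obeys $x^{r+1}=x^r-\eta_g g^r$, where $g^r=\nabla f(x^r)$ with probability $p$ and $g^r=g^{r-1}+\frac1S\sum_{i\in S^r}\big(\nabla f_i(x^r)-\nabla f_i(x^{r-1})\big)$ with probability $1-p$, and $p_0=1$ forces $g^0=\nabla f(x^0)$. As the text notes there is no convex analysis of \algname{PAGE} in the literature, so I would build one, tracking $F^r:=\E[f(x^r)-f^*]$, $D^r:=\E\|x^r-x^*\|^2$, and the estimator error $e^r:=g^r-\nabla f(x^r)$, $A^r:=\E\|e^r\|^2$ (note $A^0=0$).

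\emph{Step 1 (variance recursion).} Since the client-subsampling correction is conditionally unbiased, average $L$-smoothness (Assumption~\ref{ass:avgsmooth}) gives the \algname{PAGE}-type bound $A^r\le(1-p)A^{r-1}+\tfrac{(1-p)L^2}{S}\E\|x^r-x^{r-1}\|^2$. Inserting $\E\|x^r-x^{r-1}\|^2=\eta_g^2\E\|g^{r-1}\|^2\le\eta_g^2(2A^{r-1}+4LF^{r-1})$, where I use $\|\nabla f(x)\|^2\le 2L(f(x)-f^*)$ for smooth convex $f$, yields $A^r\le(1-p)(1+\tfrac{2L^2\eta_g^2}{S})A^{r-1}+\tfrac{4(1-p)L^3\eta_g^2}{S}F^{r-1}$. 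Choosing $\eta_g^2\lesssim pS/L^2$ makes the contraction factor $\le1-p/2$, so unrolling (with $A^0=0$) gives $\sum_rA^r\lesssim\tfrac{L^3\eta_g^2}{pS}\sum_rF^r$.

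\emph{Step 2 (one-step progress and the bias).} Combine the smoothness descent lemma $f(x^{r+1})\le f(x^r)-\eta_g\langle\nabla f(x^r),g^r\rangle+\tfrac{L\eta_g^2}{2}\|g^r\|^2$, the expansion $D^{r+1}=D^r-2\eta_g\E\langle g^r,x^r-x^*\rangle+\eta_g^2\E\|g^r\|^2$, and convexity $\langle\nabla f(x^r),x^r-x^*\rangle\ge f(x^r)-f^*$. The obstacle---and the part I expect to be hardest---is that $g^r$ is \emph{biased}: $\E[g^r\mid\mathcal F^r]=\nabla f(x^r)+(1-p)e^{r-1}$, so both relations pick up cross terms in $e^{r-1}$, and a naive Young step against $\|x^r-x^*\|^2$ would be fatal in the absence of strong convexity. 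The resolution is the martingale structure of the error: $\E[e^r\mid\mathcal F^r]=(1-p)e^{r-1}$ and $e^0=0$ give $\E[e^{r-1}\mid\mathcal F^{s+1}]=(1-p)^{r-1-s}e^s$ and $\E[e^{r-1}]=0$, so the total cross term telescopes as $\sum_r\E\langle e^{r-1},x^r-x^*\rangle=-\eta_g\sum_r\sum_{s<r}(1-p)^{r-1-s}\E\langle e^s,g^s\rangle$; summing the geometric weights and writing $\E\langle e^s,g^s\rangle=A^s+\E\langle e^s,\nabla f(x^s)\rangle$ (the remainder bounded by $\sqrt{A^{s-1}}\sqrt{2LF^s}$ via Cauchy--Schwarz and then Young) reduces everything to the already-controlled $\sum_sA^s$ and $\sum_sF^s$.

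\emph{Step 3 (assembly and the step-size).} Combining the above into a Lyapunov inequality, summing over $r=0,\dots,R-1$, and feeding in $\sum_rA^r\lesssim\tfrac{L^3\eta_g^2}{pS}\sum_rF^r$ together with $\eta_g^2\E\|g^r\|^2\le\eta_g^2(2A^r+4LF^r)$, one finds that all variance and bias contributions collapse into a small multiple of $\eta_g\sum_rF^r$ exactly under the stated restriction on $\eta_g$---this bound is the common refinement of the several requirements ($\eta_g=\Theta(1/L)$ for the descent step, $\eta_g^2\lesssim pS/L^2$ for the variance contraction, and the further powers of $\eta_g$ needed to absorb the accumulated error into $\eta_g\sum_rF^r$), and it is what produces the three-way formula $\eta_g\le\Theta\!\big(\frac{(S+N^{3/4})(S/N)}{L(S+\sqrt N)}\big)$. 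What is left is $\eta_g\sum_{r=0}^{R-1}F^r\lesssim\|x^0-x^*\|^2$, hence $\min_rF^r\le\frac1R\sum_rF^r=O\!\big(\frac{\|x^0-x^*\|^2}{\eta_g R}\big)$ and $R=O(1/(\eta_g\epsilon))$. Substituting $\eta_g\asymp\frac{S}{LN^{3/4}}$ when $S\le\sqrt N$, $\eta_g\asymp\frac{1}{LN^{1/4}}$ when $\sqrt N<S\le N^{3/4}$, and $\eta_g\asymp\frac{S}{LN}$ when $S>N^{3/4}$ then yields the three claimed bounds on the number of communication rounds.
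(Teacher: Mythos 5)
Your proposal is correct in its essentials and shares the paper's two core ingredients --- the bias contraction $\E[g^r-\nabla f(x^r)\mid \mathcal F^r]=(1-p)(g^{r-1}-\nabla f(x^{r-1}))$ (the paper's Lemma~\ref{lem:page-basic-property}) and the variance recursion $A^r\le(1-p)A^{r-1}+\tfrac{(1-p)L^2}{S}\E\|x^r-x^{r-1}\|^2$ (Lemma~\ref{lem:page-norm-square}) --- and your geometric unrolling of the cross term is essentially the paper's Lemma~\ref{lem:page-inner}, except that you expand $x^r-x^*$ all the way back to $x^0-x^*$ and use $\E e^{r-1}=0$, while the paper peels off one step at a time. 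Where you genuinely diverge is in the final assembly. The paper adds $\delta\bigl(f(x^r)-f(x^{r+1})\bigr)$ to the per-round inequality (Lemma~\ref{lem:descent-2} weighted by $\delta$), which telescopes to $\delta\bigl(f(x^0)-f(x^T)\bigr)\le\tfrac{L\delta}{2}\|x^0-x^*\|^2$ and boosts the negative coefficient of $\sum\|x^{r+1}-x^r\|^2$ from $\tfrac12$ to $\tfrac{1+2\delta}{2}$; choosing $\delta=1/(S^{1/4}p^{3/4})=N^{3/4}/S$ and balancing $1/\eta_g$ against $L\delta$ is exactly how the paper manufactures the $N^{3/4}$ and the enlarged step size $\eta_g=\Theta((1+\delta)p/(L(1+\sqrt{1/(Sp)})))$. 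You instead absorb every error term into a fraction of the left-hand side $\eta_g\sum_r F^r$, so the $N^{3/4}$ must emerge from the step-size threshold alone. It does: writing the bias contribution as $\tfrac{2\eta_g^2}{p}\bigl[(1+\tfrac{1}{2\beta})\sum_s A^s+\beta L\sum_s F^s\bigr]$ after Young with weight $\beta$, feeding in $\sum_s A^s\lesssim\tfrac{L^3\eta_g^2}{pS}\sum_s F^s$, and optimizing $\beta\asymp L\eta_g/\sqrt{pS}$ yields the binding condition $\eta_g\lesssim p^{3/4}S^{1/4}/L=S/(LN^{3/4})$, which reproduces the stated step size and rate in the regime $S\le\sqrt N$ (and gives at least the claimed rates in the other two regimes). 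This balancing computation is the one thing your Step~3 asserts rather than carries out --- it is precisely where the exponent $3/4$ is decided, so it deserves to be made explicit --- but the structure you set up does support it, and your route has the mild advantage of dispensing with the auxiliary $\delta$-telescoping term altogether.
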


To understand this result, we can set $S = 1$, i.e., in each round as long as the server does not communicate with all clients, it only selects one client to communicate. Then, the total communication cost of \fedpage (here also the convergence result for \algname{PAGE}) becomes
$O\left(\frac{N^{3/4}}{\epsilon}\right).$
Recall that in the convex setting, \algname{SVRG}/\algname{SAGA} has convergence result $O\left(\frac{N}{\epsilon}\right)$. Thus \fedpage/\algname{PAGE} has much better convergence result compared with \algname{SVRG}/\algname{SAGA} in terms of the total number of clients $N$.

\subsection{\fedpageNORMAL with general local steps $K \ge 1$}\label{sec:convex-general}
In this part, we show the general result of \fedpage with any local steps $K \geq 1$ in the convex setting. Here we assume that $f$ is convex and the functions $\{f_{i,j}\}$ satisfy $L$-smoothness assumption (Assumption~\ref{ass:smooth}). The following Theorem~\ref{thm:convex} formally states the result.
\begin{restatable}[Convergence of \fedpage\ in convex setting]{theorem}{thmconvex}\label{thm:convex}
Under Assumption~\ref{ass:smooth} (and Assumption \ref{ass:bounded-variance}), if we choose the sampling probability $p_r = p = \frac{S}{N}$ for every $r \ge 1$ and $p_0 = 1$, the minibatch sizes $b_1 = \min\{M, \frac{24\sigma^2}{p^{1/2}\sqrt{S}\epsilon}\}, b_2 = \min\{M, \frac{48\sigma^2}{p^{3/2}\sqrt{S}\epsilon}\}$, and the global and local step size 
	\[\eta_g = \Theta\left(\frac{(S+N^{3/4})\frac{S}{N}}{L(S+\sqrt{N})}\right), \qquad \eta_l =  O\left(\frac{S}{N^{5/4}KL_c\sqrt{T}}\right),\]
	then \fedpage satisfies
	\[\frac{1}{R}\sum_{r=0}^{R-1}\E[f(x^{r+1})-f(x^*)] \le \left\{\begin{aligned}
	O\left(\frac{N^{3/4}L}{SR} + \epsilon\right), &\text{ if }S \le \sqrt{N} \\
	O\left(\frac{N^{1/4}L}{R} + \epsilon\right), &\text{ if }\sqrt{N} < S \le N^{3/4} \\
	O\left(\frac{NL}{SR} + \epsilon\right), &\text{ if } N^{3/4} < S
	\end{aligned}\right..\]
\end{restatable}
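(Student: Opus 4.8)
The plan is to mimic the Lyapunov analysis behind the $K=1$ case (Theorem~\ref{thm:page-convex}), but carry along one extra \emph{client-drift} error term generated by the $K-1$ inner local steps, and show this term is negligible once $\eta_l$ is as small as prescribed. Write the round-$r$ update as $x^{r+1}=x^r-\eta_g g^r$ with, when $q=0$, $g^r=\frac{1}{K\eta_l S}\sum_{i\in S^r}\Delta y_i^r=\frac{1}{KS}\sum_{i\in S^r}\sum_{k=0}^{K-1}g_{i,k}^r$, and split $g^r=\widetilde g^r+\xi^r$, where $\widetilde g^r:=\frac{1}{S}\sum_{i\in S^r}g_{i,0}^r$ is exactly the \algname{PAGE}-type estimator (the client-sampled version of $\nabla f(x^r)-\nabla f(x^{r-1})+g^{r-1}$, up to $\gI_2$-minibatch noise) and $\xi^r:=\frac{1}{KS}\sum_{i\in S^r}\sum_{k=0}^{K-1}(g_{i,k}^r-g_{i,0}^r)$ collects the drift.

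First I would bound the drift deterministically. From $g_{i,k}^r-g_{i,k-1}^r=\nabla_{\gI_3}f_i(y_{i,k}^r)-\nabla_{\gI_3}f_i(y_{i,k-1}^r)$ and $L$-smoothness of every $f_{i,j}$ (Assumption~\ref{ass:smooth}), $\|g_{i,k}^r-g_{i,k-1}^r\|\le L\eta_l\|g_{i,k-1}^r\|$, so when $L\eta_l K\le 1$ one gets $\|g_{i,k}^r\|\le 2\|g_{i,0}^r\|$, $\|y_{i,k}^r-x^r\|\le 2\eta_l K\|g_{i,0}^r\|$, and $\|g_{i,k}^r-g_{i,0}^r\|\le 2L\eta_l K\|g_{i,0}^r\|$; hence $\|\xi^r\|\le 2L\eta_l K\cdot\frac{1}{S}\sum_{i\in S^r}\|g_{i,0}^r\|$. (This bound is uniform in the local minibatch, so $b_3$ may be as small as $1$.) In expectation, $\E\|g_{i,0}^r\|^2$ is $O\!\big(\E\|\nabla f(x^r)\|^2+\E\|g^{r-1}-\nabla f(x^{r-1})\|^2+L^2\E\|x^r-x^{r-1}\|^2+\sigma^2/b_2\big)$ by the definition of $g_{i,0}^r$ together with smoothness, so the drift contributes only an $O((L\eta_l K)^2)$ multiple of quantities already present in the $K=1$ analysis.

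Next I would redo the descent step with $g^r=\widetilde g^r+\xi^r$. Using $L$-smoothness and convexity of $f$, the distance recursion $\E\|x^{r+1}-x^*\|^2=\E\|x^r-x^*\|^2-2\eta_g\E\langle g^r,x^r-x^*\rangle+\eta_g^2\E\|g^r\|^2$, and the inequality $\|\nabla f(x^r)\|^2\le 2L(f(x^r)-f^*)$, I would derive a Lyapunov inequality for the same potential $\Phi^r:=\E\|x^r-x^*\|^2+c\,\eta_g^2\,\E\|g^r-\nabla f(x^r)\|^2$ used in the $K=1$ proof, of the form $\Phi^{r+1}\le\Phi^r-c'\eta_g\,\E[f(x^{r+1})-f^*]+(\text{drift})+(\text{minibatch noise})$. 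The \algname{PAGE} variance recursion $\E\|\widetilde g^r-\nabla f(x^r)\|^2\le(1-p)\E\|g^{r-1}-\nabla f(x^{r-1})\|^2+\frac{c''L^2}{S}\E\|x^r-x^{r-1}\|^2+\frac{c''\sigma^2}{b_2}$ enters exactly as in Theorems~\ref{thm:page-non-convex}--\ref{thm:page-convex} (the client-sampling variance handled by average $L$-smoothness); the $\sigma^2/b_1,\sigma^2/b_2$ terms become $O(\epsilon)$ by the stated $b_1,b_2$; and the drift terms are killed by the prescribed $\eta_l$.

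Finally, telescoping over $r=0,\dots,R-1$ (with $p_0=1$, so $g^0$ is a near-full gradient and $\Phi^0$ is controlled) and dividing by $c'\eta_g R$ gives $\frac{1}{R}\sum_{r=0}^{R-1}\E[f(x^{r+1})-f^*]\le\frac{\Phi^0}{c'\eta_g R}+O(\epsilon)$; plugging in the three regimes of $\eta_g=\Theta\!\big(\frac{(S+N^{3/4})S/N}{L(S+\sqrt N)}\big)$ produces the three cases in the statement. The main obstacle I expect is the bookkeeping of the coupled recursion among $\E\|g^r-\nabla f(x^r)\|^2$, the weighted progress $\eta_g\E[f(x^{r+1})-f^*]$, and $\E\|\xi^r\|^2$: one must verify that a single choice of the Lyapunov weight $c$ and of $\eta_l$ simultaneously closes the \algname{PAGE} recursion and absorbs the drift without eroding the $N^{3/4}$ rate, and handle sampling-without-replacement of $S^r$ together with the bias of $g^r$ — these are the delicate technical points.
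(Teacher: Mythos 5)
Your plan is correct and follows essentially the same route as the paper's proof: a \algname{PAGE}-type Lyapunov analysis in which the extra local steps appear only through the drift $\frac{1}{K}\sum_k\|g_{i,k}^r-g_{i,0}^r\|^2$, which is bounded by $O(K^2L^2\eta_l^2)$ times quantities already present in the $K=1$ analysis (the paper's Lemma~\ref{lem:local-err}), the biased inner product $\E\langle\nabla f(x^r)-g^r,x^r-x^*\rangle$ is unrolled via the conditional-expectation identity for the $g_{i,0}^r$ part plus Young's inequality for the drift part (Lemmas~\ref{lem:inner} and the following lemma, which also controls the resulting $\sum_r\|x^{r-1}-x^*\|^2$ accumulation via the choice $c'=2T\eta_g/p$ — the bookkeeping you correctly flag as the delicate point), and the three rate regimes come from plugging in $\eta_g$. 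The only cosmetic difference is that you bound the drift deterministically via $\|g_{i,k}^r\|\le 2\|g_{i,0}^r\|$ under $L\eta_lK\lesssim 1$, whereas the paper runs the same recursion in expectation; both yield the same $O(K^2L^2\eta_l^2\,\E\|g_{i,0}^r\|^2)$ control.
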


As we discussed before, the expected communication cost of \fedpage is the same as \algname{SCAFFOLD} in each communication round.
Then if the sampled clients $S\le \sqrt{N}$, \fedpage can find a solution $x$ such that $\E f(x) - f(x^*) \le \epsilon$ within $O(\frac{N^{3/4}L}{S\epsilon})$ number of communication rounds, improving the previous state-of-the-art $O(\frac{N}{S\epsilon})$ of \algname{SCAFFOLD} \citep{karimireddy2020scaffold} by a large factor of $N^{1/4}$. Recall that $N$ denotes the total number of clients.

\section{Numerical Experiments}\label{sec:exp}

In this section, we present our numerical experiments. We conducted two experiments: the first shows the effectiveness of the local steps (Section \ref{sec:exp-local}), and the second compares \fedpage with \algname{SCAFFOLD} and \algname{FedAvg} (Section \ref{sec:exp-comparison}). 
Before we present the results of these two experiments, we first state the experiment setups.
\paragraph{Experiment setup}
We run experiments on two nonconvex problems used in e.g. \citep{wang2018spiderboost,li2021zerosarah}: robust linear regression and logistic regression with nonconvex regularizer. 
The standard datasets \dataset{a9a} (32,561 samples) and \dataset{w8a} (49,749 samples) are downloaded from LIBSVM~\citep{chang2011libsvm}.  The objective function for robust linear regression is
\[\squeeze f(x) = \frac{1}{n}\sum \limits_{i=1}^n\ell(x^T a_i - b_i),\]
where $\ell(t) = \log(1+\frac{t^2}{2})$. Here $b_i\in\{\pm 1\}$ is a binary label.

The objective function for logistic regression with nonconvex regularizer is
\[\squeeze f(x) = \frac{1}{n}\sum \limits_{i=1}^n \log\left(1+\exp(-b_ix^T  a_i)\right) + \alpha\sum \limits_{j=1}^d\frac{x_j^2}{1+x_j^2}.\]
Here, the last term is the regularizer term and we set $\alpha=0.1$.

Besides, different algorithms have different definitions of the local step size and global step size, thus we compare these algorithms with the `effective step size' $\tilde\eta$. Here for \fedpage, the effective step size is just the global step size $\tilde\eta = \eta_g$, and for \algname{SCAFFOLD} and \algname{FedAvg}, the effective step size is defined as $\tilde\eta = K\eta_g\eta_l$. We run experiments with $\tilde\eta=0.1,0.03,0.01$. If the effective step size is larger, the algorithms may diverge. Also note that although we compare these algorithms with the same effective step size, \fedpage can use a larger step size from our theoretical results.
Finally we select the total number of communication rounds such that the algorithms converge or we can distinguish their performance difference.

\subsection{Effectiveness of local steps}\label{sec:exp-local}
In this experiment we compare the convergence performance of \fedpage using different number of local update steps: $K=1,10,20$ (see Line \ref{line:17} of Algorithm~\ref{alg:fedpage}). \algname{FedPAGE-1} means that the number of local step $K=1$, which reduces to the original \algname{PAGE} \citep{li2021page}, and \algname{FedPAGE-10} and \algname{FedPAGE-20} represent \fedpage with 10 and 20 local steps respectively.

We use the robust linear regression as the objective function. 
We run experiment on the \dataset{a9a} dataset in which the total number of data samples is $32500$ (here we drop the last 61 samples for easy implementation of different number of clients).
We choose the number of clients to be $3250, 325, 10$, and the numbers of data on a single client are $10, 100$, and $3250$, respectively. When the number of clients is $3250$, we choose $S=10$, i.e. the server communicates with $10$ clients in each communication round, and when the number of clients is $325$ or $10$, we set $S=1$. For all settings, we optimize the global step size $\eta_g$ and choose the local step size $\eta_l$ heuristically such that the algorithms converge as fast as possible. For \algname{FedPAGE-1} (or \algname{PAGE}), the local step size does not matter and choosing the optimal global step size achieves its best convergence rate, however for \algname{FedPAGE-10} and \algname{FedPAGE-20}, choosing $\eta_g,\eta_l$ with some heuristics does not guarantee the best performance. We also perform the similar experiments on another dataset \dataset{w8a}.

The experimental results are presented in Figure~\ref{fig:exp-local}. Figure~\ref{fig:exp-local-a9a} shows the robust linear regression results of \fedpage using different number of local steps $K=1, 10, 20$ on \dataset{a9a} dataset, and Figure~\ref{fig:exp-local-w8a} shows the result on \dataset{w8a} dataset.

\begin{figure}[!t]
	\centering
	\begin{subfigure}[b]{\textwidth}
		\centering
		\includegraphics[width=\textwidth]{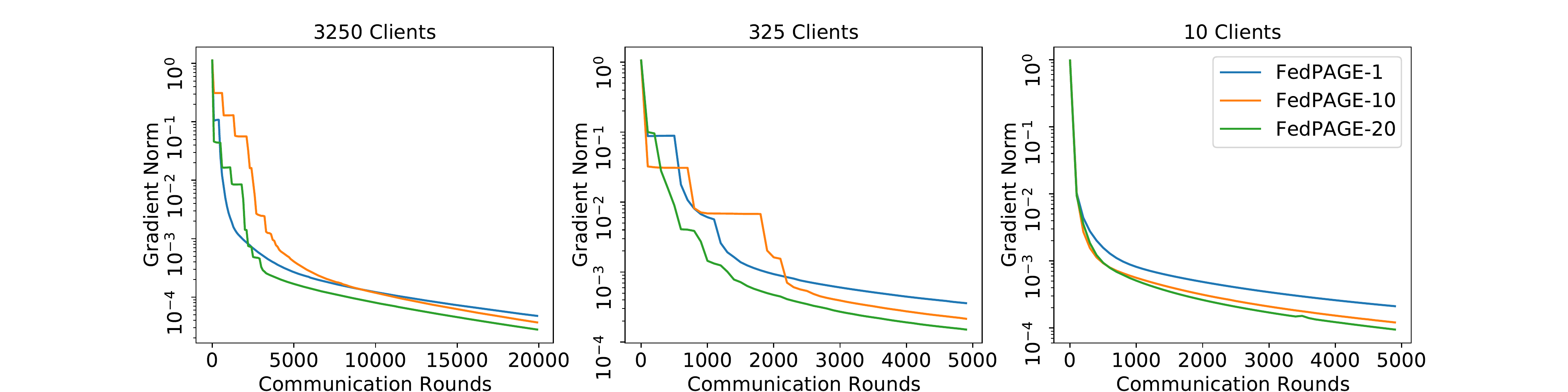}
		\caption{\dataset{a9a}.}
		\label{fig:exp-local-a9a}
	\end{subfigure}
	\begin{subfigure}[b]{\textwidth}
		\centering
		\includegraphics[width=\textwidth]{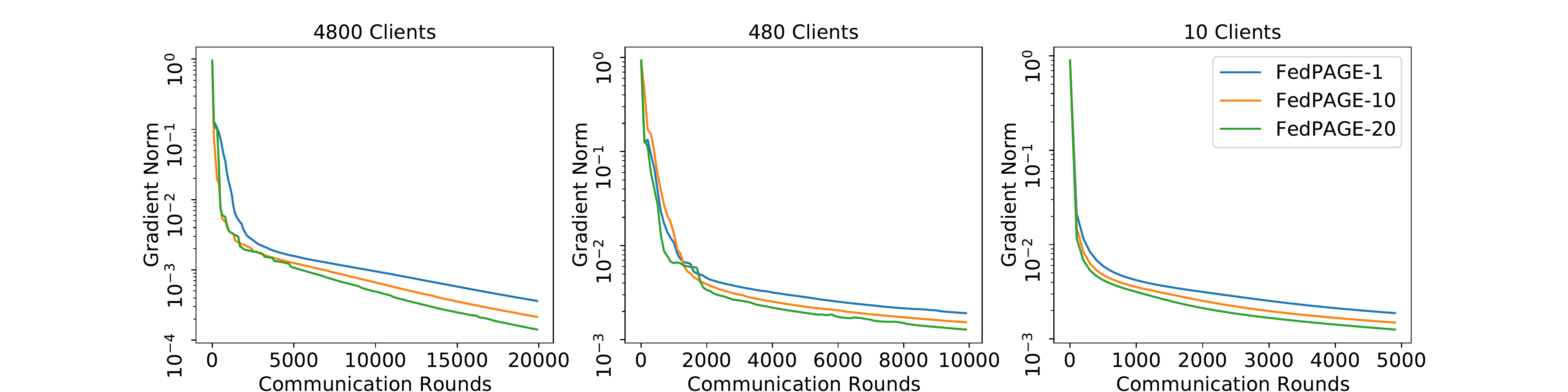}
		\caption{\dataset{w8a}.}
		\label{fig:exp-local-w8a}
	\end{subfigure}
	\caption{\fedpage with different number of local steps on different datasets.}
	\label{fig:exp-local}
\end{figure}

\paragraph{Local steps speed up the convergence rate} The experimental results in Figure~\ref{fig:exp-local} show that the multiple local steps of \fedpage can speed up the convergence in terms of the communication rounds. Although there are some fluctuations when the number of communication round is not large (early-stage), \algname{FedPAGE-10} and \algname{FedPAGE-20} outperform \algname{FedPAGE-1} in the end.

\paragraph{Algorithm with multiple local steps can choose a larger effective step size} From our hyperparameter optimization results, we also find that \fedpage with multiple local steps can choose a larger effective step size ($\eta_g$ in \fedpage). On \dataset{a9a} dataset, when there are $3250$ clients, the effective step size for \algname{FedPAGE-1}, \algname{FedPAGE-10}, and \algname{FedPAGE-20} are optimized to be $0.3, 0.4, 0.4$ respectively; when there are $325$ clients, the effective step size for \algname{FedPAGE-1}, \algname{FedPAGE-10}, and \algname{FedPAGE-20} are optimized to be $0.2, 0.4, 0.5$; when there are $10$ clients, the effective step size for \algname{FedPAGE-1}, \algname{FedPAGE-10}, and \algname{FedPAGE-20} are optimized to be $0.3, 0.5, 0.6$. The experiments on \dataset{w8a} dataset also support this finding.

\subsection{Comparison with previous methods}\label{sec:exp-comparison}
Now, we compare our \fedpage with two other methods: \algname{SCAFFOLD} \citep{karimireddy2020scaffold} and \algname{FedAvg} \citep{mcmahan2017communication}. The experimental results are presented in Figure~\ref{fig:exp-comparison-rlr} and \ref{fig:exp-comparison-lrnc}.
We plot the gradient norm versus the number of communication rounds. Figures~\ref{fig:robust-a9a}, \ref{fig:robust-w8a}, \ref{fig:lg-a9a}, and \ref{fig:lg-w8a} show the performance of each algorithm using different objective functions and different datasets.

For the experiments with \dataset{a9a} dataset, we omit the last 61 samples and set the number of clients to be 3250, and for experiments with \dataset{w8a}, we omit the last 1749 samples and there are 4800 clients in total. Here we omit the samples because it is more convenient to change the number of clients.
We let each `client' contains 10 samples from the datasets.
For \algname{SCAFFOLD} and \algname{FedAvg}, in each communication round, the server will communicate with 20 clients ($S=20$ in their algorithms). For \fedpage, we set $S = 10$ because \fedpage will communicate with all clients with probability $\frac{S}{N}$ and the expected communication for all three algorithms in each round are almost the same. We choose the local steps of all these three methods to be $10$. For \fedpage, we choose the minibatch size $b_3 = 1$ and for \algname{SCAFFOLD} and \algname{FedAvg}, we choose the minibatch size that estimate the local full gradient to be $4$. In this way, the local computations are nearly the same for all methods.

\begin{figure}[!b]
	\centering
	\begin{subfigure}[b]{\textwidth}
		\centering
		\includegraphics[width=\textwidth]{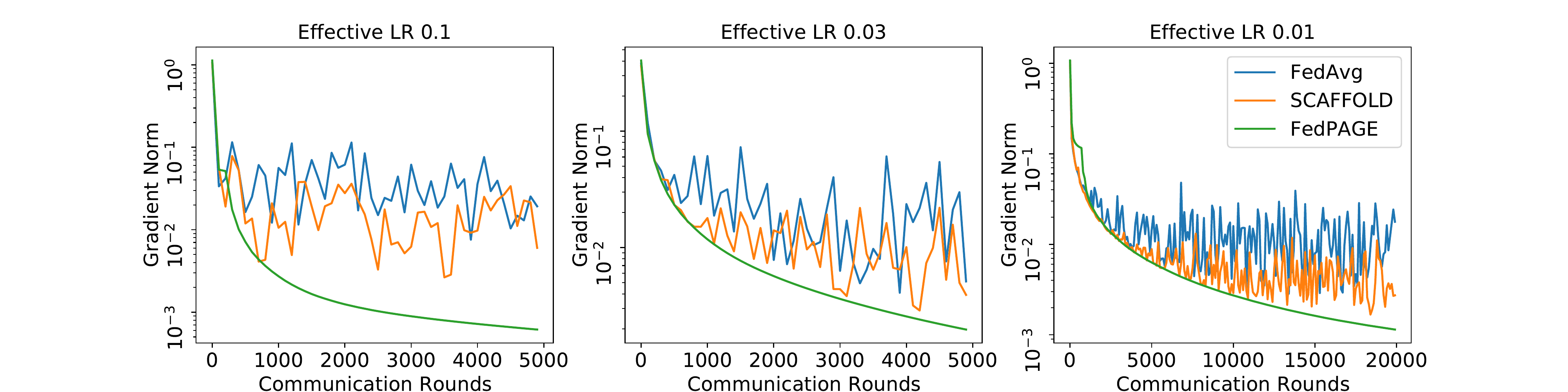}
		\caption{Robust linear regression on \dataset{a9a} dataset.}
		\label{fig:robust-a9a}
	\end{subfigure}
	\begin{subfigure}[b]{\textwidth}
		\centering
		\includegraphics[width=\textwidth]{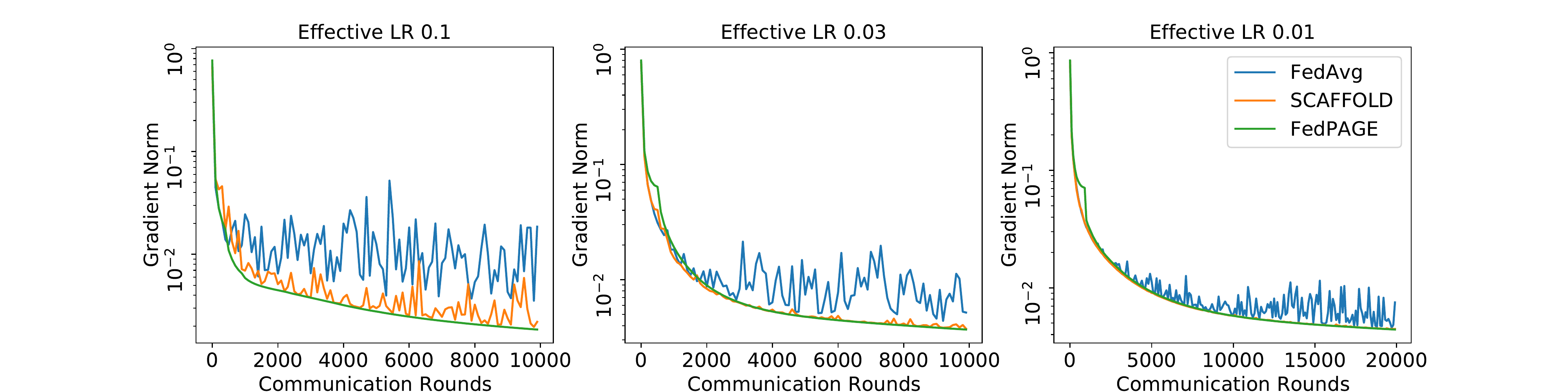}
		\caption{Robust linear regression on \dataset{w8a} dataset.}
		\label{fig:robust-w8a}
	\end{subfigure}
	\caption{Comparison of different methods with robust linear regression.}
	\label{fig:exp-comparison-rlr}
\end{figure}

\begin{figure}[!t]
	\centering
	\begin{subfigure}[b]{\textwidth}
		\centering
		\includegraphics[width=\textwidth]{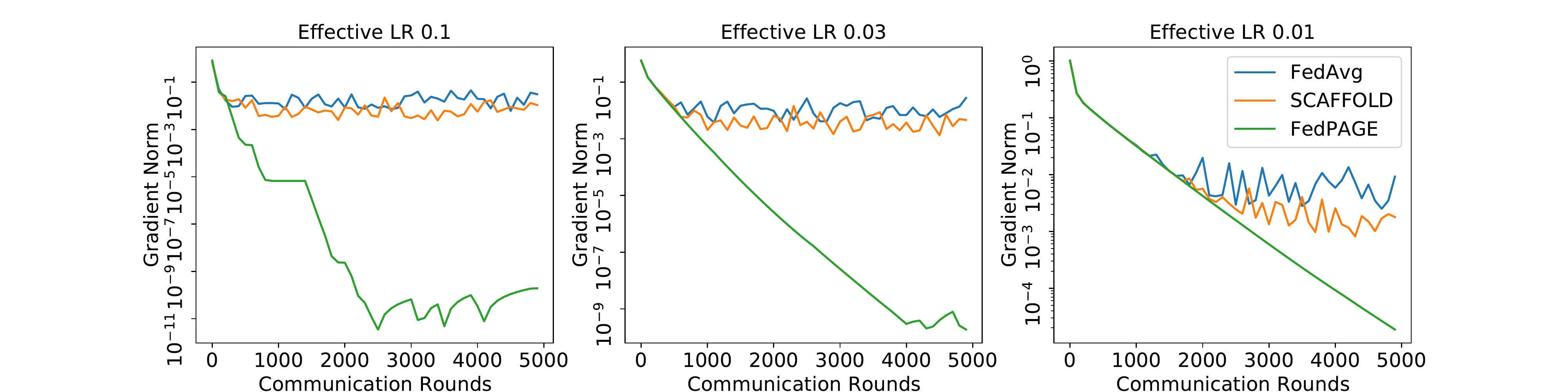}
		\caption{Logistic regression with nonconvex regularizer on \dataset{a9a} dataset.}
		\label{fig:lg-a9a}
	\end{subfigure}
	\begin{subfigure}[b]{\textwidth}
		\centering
		\includegraphics[width=\textwidth]{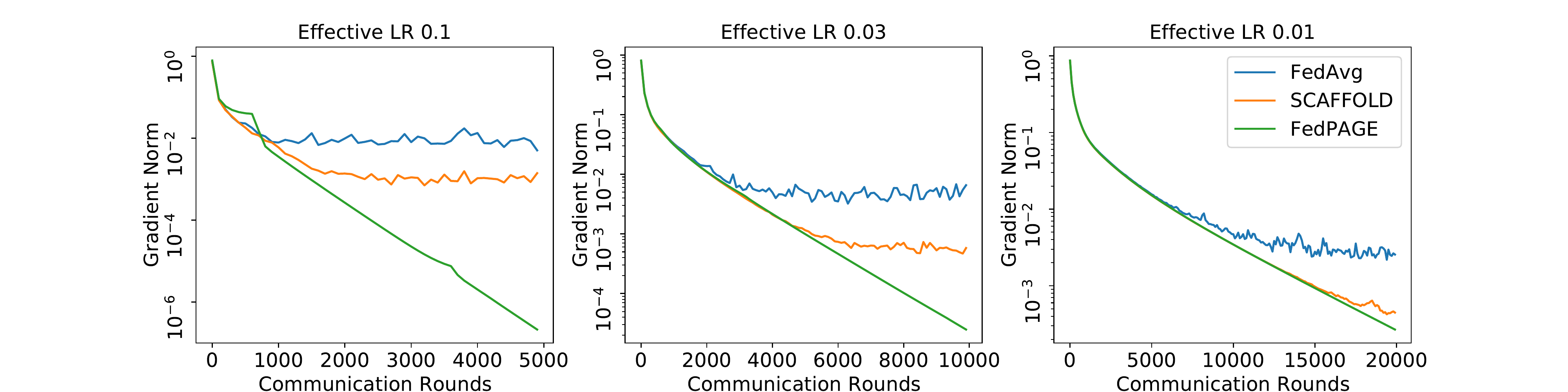}
		\caption{Logistic regression with nonconvex regularizer on \dataset{w8a} dataset.}
		\label{fig:lg-w8a}
	\end{subfigure}
	\caption{Comparison of different methods with logistic regression with nonconvex regularizer.}
	\label{fig:exp-comparison-lrnc}
\end{figure}

\paragraph{Performance of different methods} The experiments show that \fedpage $\ge$ \algname{SCAFFOLD} $>$ \algname{FedAvg}. Among all the cases, under the same effective step size, we find that both \fedpage and  \algname{SCAFFOLD} converge faster than \algname{FedAvg}. 
\fedpage converges at least as fast as \algname{SCAFFOLD}, and in most of the cases \fedpage converges faster than \algname{SCAFFOLD}. 

\paragraph{Larger effective step size converges faster} The experiments also show that a larger effective step size leads to a faster convergence as long as the algorithm converges. Note that \fedpage can use a larger step size with theoretical guarantee compared with \algname{SCAFFOLD}, if we choose the same parameters of the objective function (e.g. the same smoothness constant) and use the step size with theoretical guarantees, \fedpage converges faster than \algname{SCAFFOLD} than \algname{FedAvg}.

\section{Conclusion}
In this paper, we propose a new federated learning algorithm, \fedpage, providing much better state-of-the-art communication complexity for both federated convex and nonconvex optimization. 
Concretely, 
in the convex setting, the number of communication rounds of \fedpage is $O(\frac{N^{3/4}}{S\epsilon})$, which substantially improves previous best-known result $O(\frac{N}{S\epsilon})$ of \algname{SCAFFOLD} \citep{karimireddy2020scaffold} by a factor of $N^{1/4}$. 
In the nonconvex setting, the number of communication rounds of \fedpage is $O(\frac{\sqrt{N}+S}{S\epsilon^2})$, which also improves the best-known result  $O(\frac{N^{2/3}}{S^{2/3}\epsilon^2})$ of \algname{SCAFFOLD} \citep{karimireddy2020scaffold} by a large factor of $N^{1/6}S^{1/3}$.
Finally, we conduct several numerical experiments showing the effectiveness of multiple local update steps in \fedpage and verifying the practical superiority of \fedpage over other classical methods.

\clearpage
\bibliography{fedpage}
\bibliographystyle{plainnat}

\newpage
\appendix
\section*{Appendices}
\section{More Experiments}
In this section we present more numerical experiments. We perform two different experiments: the first is to compare the performance of different algorithm with different number of clients and data on a single client (Section \ref{sec:exp-clients}), and the second is to compare different algorithm with local full gradient computations, which shows the limitation of different algorithms (Section \ref{sec:exp-full}).

\subsection{Comparison of different methods with different number of clients}\label{sec:exp-clients}
\subsubsection{Experiment setup}
In previous Section \ref{sec:exp}, we compare different methods with a large number of clients (on \dataset{a9a} dataset, there are $3250$ clients, and on \dataset{w8a} dataset, there are $4800$ clients). In this experiment, we vary the number of clients and compare the performance of \fedpage, \algname{SCAFFOLD}, and \algname{FedAvg}.

For the number of clients, we choose the number of clients to be $325$ and $10$, and the number of data on a single client are $100$ and $3250$. We choose the number of local steps to be 10 for all three methods. When the number of clients are $325$ and $10$, we set $S=1$ for \fedpage and $S=2$ for \algname{SCAFFOLD} and \algname{FedAvg}, making the communication cost in each round to be nearly the same. We set \fedpage to compute the full local gradient for the first local step, and choose only one sample to estimate the gradient for the following local steps. For \algname{SCAFFOLD} and \algname{FedAvg}, we set the minibatch size estimating the local full gradient to be $22$ when the number of client is $325$ and $652$ when the number of client is $10$. When the number of client is $325$, there are $100$ data on a single client. \fedpage need to compute two full gradient at the beginning of each local computations, costing $200$ number of gradient computations. Then it needs to compute two gradient (the gradient of a same sample at different points), and it cost about $220$ gradient computations in total. Choosing the minibatch size to be $22$ in \algname{SCAFFOLD} and \algname{FedAvg} makes the local computations nearly the same, because \algname{SCAFFOLD} and \algname{FedAvg} use the same minibatch size in every local step. When the number of client is $10$, the minibatch size for \algname{SCAFFOLD} and \algname{FedAvg} can be computed as $3250\times 2 / 10 + 2 = 652$. This makes the local computations of these three algorithms to be nearly the same. 

For the step sizes, we choose the effective step sizes to be $0.1$, $0.03$, and $0.01$.

\subsubsection{Experiment results}

\begin{figure}[t]
    \centering
    \begin{subfigure}[b]{\textwidth}
        \centering
	    \includegraphics[width=\textwidth]{figs/grad_norm_c_3250_rlr_a9a.pdf}
	    \caption{3250 clients (10 data per client).}
	    \label{fig:exp-client-3250}
    \end{subfigure}
    
    \begin{subfigure}[b]{\textwidth}
        \centering
	    \includegraphics[width=\textwidth]{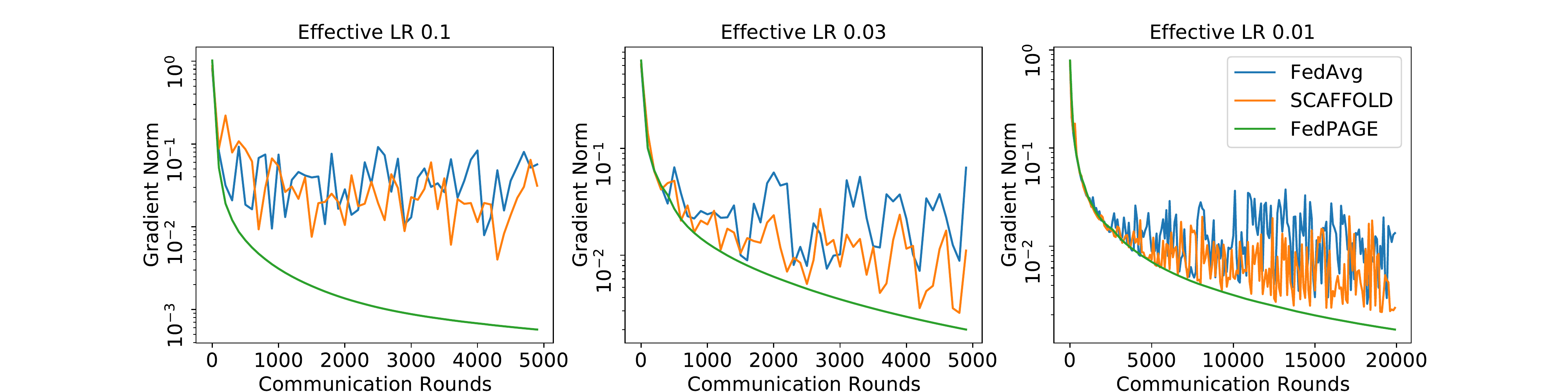}
	    \caption{325 clients (100 data per client).}
	    \label{fig:exp-client-325}
    \end{subfigure}
    
    \begin{subfigure}[b]{\textwidth}
        \centering
	    \includegraphics[width=\textwidth]{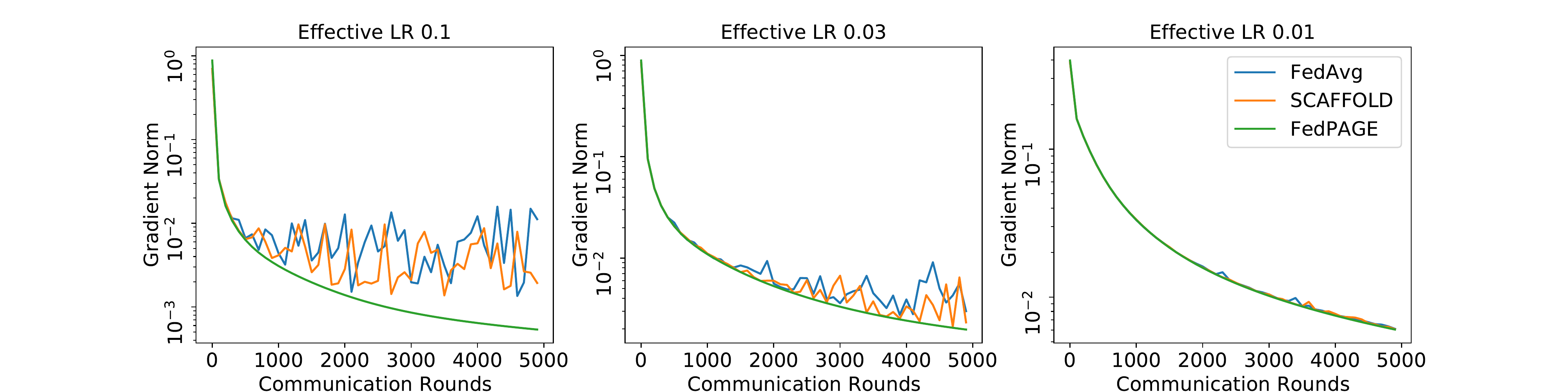}
	    \caption{10 clients (3250 data per client).}
	    \label{fig:exp-client-10}
    \end{subfigure}
    \caption{Experiment results of different methods with different number of clients.}
    \label{fig:exp-clients}
\end{figure}

The experimental results are presented in Figure~\ref{fig:exp-clients}. 
Figure~\ref{fig:exp-client-325} and \ref{fig:exp-client-10} shows the experiment results with 325 clients and 10 clients on \dataset{a9a} dataset. We also include Figure~\ref{fig:exp-client-3250} (i.e., Figure~\ref{fig:robust-a9a} in Section \ref{sec:exp-comparison}) with $3250$ clients for better comparison.
Similar to the experimental results in Section \ref{sec:exp},  Figure~\ref{fig:exp-clients} also demonstrates that \fedpage typically converges faster than \algname{SCAFFOLD} faster than \algname{FedAvg}. 

\newpage
\subsection{Comparison of different methods with local full gradient computation}\label{sec:exp-full}
\subsubsection{Experiment setup}
In this section, we design another experiment to observe the performance limitation of \fedpage, \algname{SCAFFOLD}, and \algname{FedAvg}. We substitute all the steps that use a minibatch to estimate the local full gradient to the actual full gradient computation. In \fedpage, we choose $b_3 = 1$ in the previous experiments and now we set $b_3 = M$, the number of data on a single client. We also choose $b_1 = b_2 = M$. We denote the resulting algorithm \algname{FedPAGE-Full}. 
Similarly, for \algname{SCAFFOLD} and \algname{FedAvg}, they choose a minibatch to estimate the local full gradient, and now we change them to computing the local full gradient, i.e., $b=M$. We denote the resulting algorithms as \algname{SCAFFOLD-Full} and \algname{FedAvg-Full}.

We then compare four different methods: \fedpage, \algname{FedPAGE-Full}, \algname{SCAFFOLD-Full}, and \algname{FedAvg-Full}. We perform the experiments on \dataset{a9a} and \dataset{w8a} datasets with robust linear regression objective and logistic regression with nonconvex regularizer objective. We let each `client' contains 10 samples from the dataset. We set all the algorithm to run with 10 local steps ($K=10$). We run the experiments with effective step size 0.1, 0.03, and 0.01. For experiment on \dataset{w8a} dataset with logistic regression with nonconvex regularizer, we also test the algorithms with effective step size $0.3$. For \fedpage and \algname{FedPAGE-Full}, we set $S=10$ and for \algname{SCAFFOLD-Full} and \algname{FedAvg-Full}, we set $S=20$ to make the communication cost in each round to be nearly the same.

\subsubsection{Experiment results}

\begin{figure}
    \centering
    \begin{subfigure}[b]{\textwidth}
        \centering
        \includegraphics[width=\textwidth]{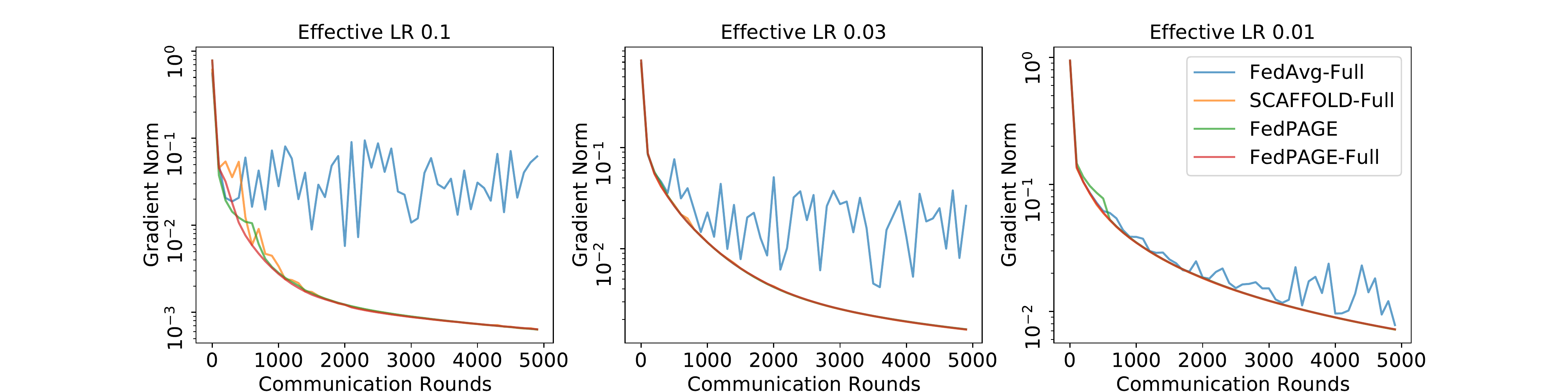}
        \caption{Robust linear regression on \dataset{a9a} dataset.}
        \label{fig:robust-a9a-full}
    \end{subfigure}
    \begin{subfigure}[b]{\textwidth}
        \centering
        \includegraphics[width=\textwidth]{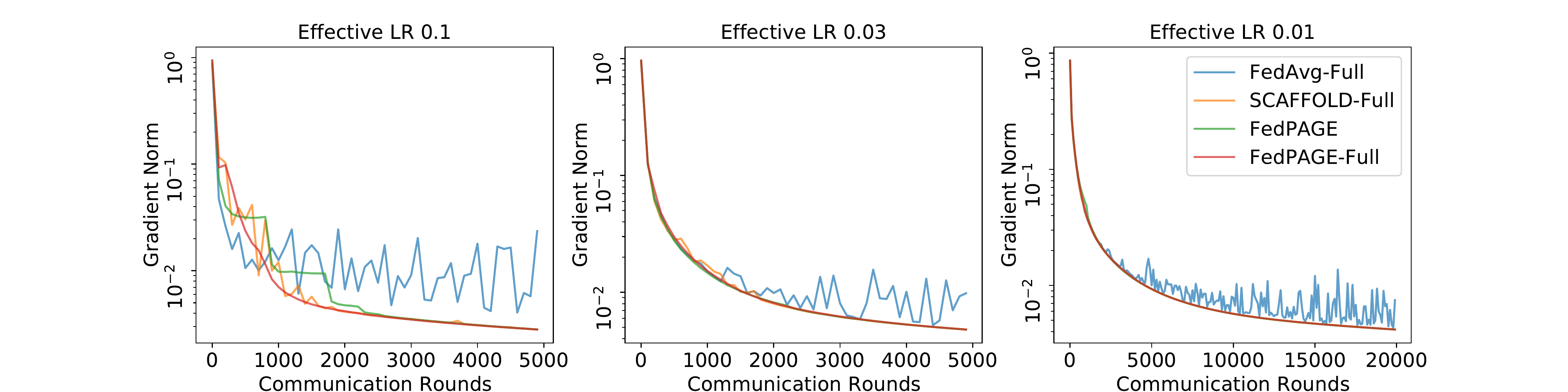}
        \caption{Robust linear regression on \dataset{w8a} dataset.}
        \label{fig:robust-w8a-full}
    \end{subfigure}
    \caption{Comparison of different methods with robust linear regression.}
    \label{fig:exp-comparison-rlr-full}
\end{figure}

\begin{figure}
    \centering
    \begin{subfigure}[b]{\textwidth}
        \centering
        \includegraphics[width=\textwidth]{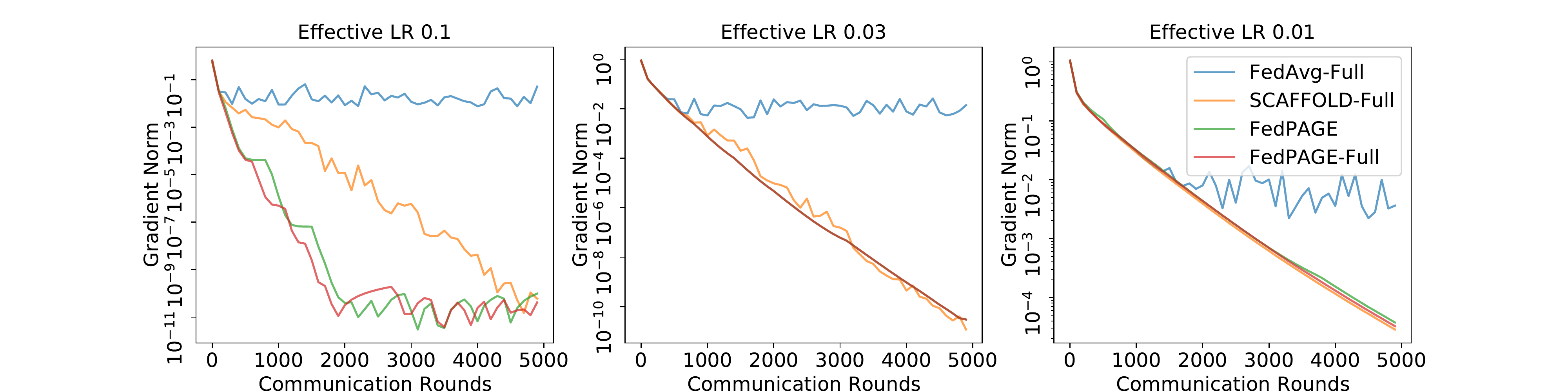}
        \caption{Logistic regression with nonconvex regularizer on \dataset{a9a} dataset.}
        \label{fig:lg-a9a-full}
    \end{subfigure}
    \begin{subfigure}[b]{\textwidth}
        \centering
        \includegraphics[width=\textwidth]{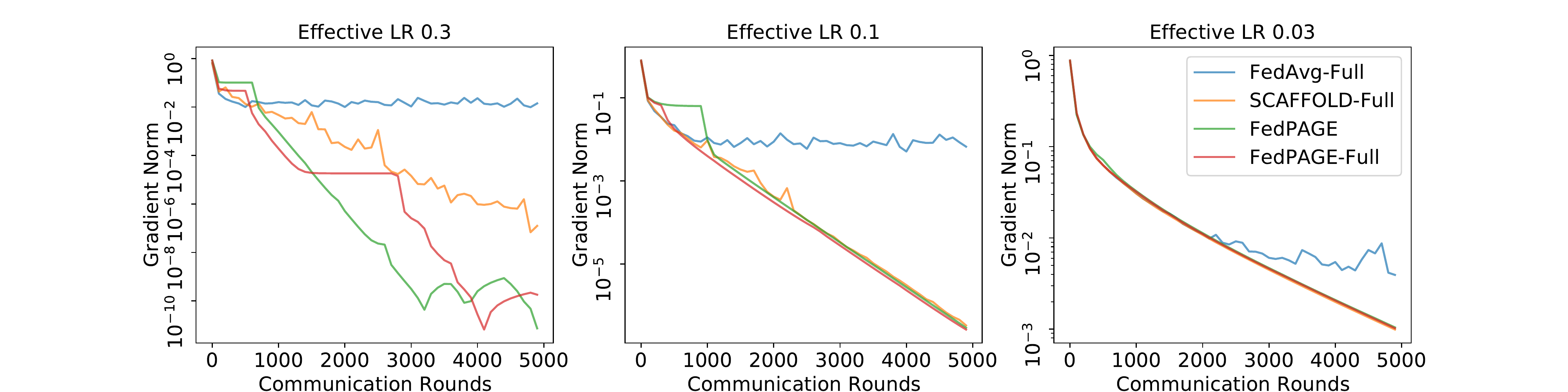}
        \caption{Logistic regression with nonconvex regularizer on \dataset{w8a} dataset.}
        \label{fig:lg-w8a-full}
    \end{subfigure}
    \caption{Comparison of different methods  with logistic regression with nonconvex regularizer.}
    \label{fig:exp-comparison-lrnc-full}
\end{figure}

The experimental results are presented in Figure~\ref{fig:exp-comparison-rlr-full} and \ref{fig:exp-comparison-lrnc-full}. Figure~\ref{fig:robust-a9a-full}, \ref{fig:robust-w8a-full}, \ref{fig:lg-a9a-full}, and \ref{fig:lg-w8a-full} show the experimental results of different methods on different problems and different datasets as stated in their captions.

\paragraph{\fedpage $\approx$ \algname{FedPAGE-Full}} First, the experiments show that the convergence performance of \fedpage and \algname{FedPAGE-Full} are nearly the same under the same effective step size. Although there are some fluctuations in the convergence process, the fluctuations are not large enough to conclude any difference between the convergence speed of \fedpage and \algname{FedPAGE-Full}.

\paragraph{\algname{FedPAGE-Full} $\ge$ \algname{SCAFFOLD-Full} $>$ \algname{FedAvg-Full}} Next, the experiments show that \algname{FedPAGE-Full} converges at least as fast as (usually outperforms) \algname{SCAFFOLD-Full} and both of them converge faster than \algname{FedAvg-Full} in all cases. Using the robust linear regression objective in Figure~\ref{fig:exp-comparison-rlr-full}, \algname{FedPAGE-Full} and \algname{SCAFFOLD-Full} converges nearly at the same speed, but in the experiments with logistic regression with nonconvex regularizer in Figure~\ref{fig:exp-comparison-lrnc-full}, \algname{FedPAGE-Full} usually outperforms \algname{SCAFFOLD-Full} especially when the effective step size is large. 
From the experiments, \fedpage either has faster convergence performance under the same local computation cost, or can use less local computational resources and achieve the same or even better performance.

\section{Gradient Complexity of Different Methods}

\begin{table}[!t]
	\caption{Number of gradient computations per client for finding an $\epsilon$-solution of federated convex and nonconvex problems \eqref{eq:problem-setting}.}
	\label{tab:compuataion-results}
	\centering
	\small
	\begin{tabular}{|c|c|c|c|c|}
		\hline
		Algorithm & Convex setting & Nonconvex setting & Assumption\\
		\hline
		\makecell{\algname{FedAvg} \\ \citep{yu2019parallel}} & --- & $\frac{G^2NK^2}{\epsilon^2} + \frac{\sigma^2}{N\epsilon^4}$ & \makecell{Smooth, BV \\ $(G,0)$-BGD} \\
		\hline
		\makecell{\algname{FedAvg} \\ \citep{karimireddy2020scaffold}} & $\frac{G^2K}{N\epsilon^2} + \frac{GSK}{N\epsilon^{3/2}} + \frac{B^2SK}{N\epsilon} + \frac{\sigma^2}{N\epsilon^2}$ & $\frac{G^2K}{N\epsilon^4} + \frac{GSK}{N\epsilon^3} + \frac{B^2SK}{N\epsilon^2} + \frac{\sigma^2}{N\epsilon^4}$  & \makecell{Smooth, BV \\ $(G,B)$-BGD} \\
		\hline
		\makecell{\algname{FedProx} \\ \citep{sahu2018convergence}} & $\frac{B^2}{\epsilon}$ & --- & \makecell{Smooth, $S = N$, \\ $(0,B)$-BGD} \\
		\hline
		\makecell{\algname{VRL-SGD} \\ \citep{liang2019variance}} & --- & $\frac{N K}{\epsilon^2} + \frac{N \sigma^2}{\epsilon^4}$ & \makecell{Smooth, BV,\\ $S = N$} \\
		\hline
		\makecell{\algname{S-Local-SVRG}\\ \citep{gorbunov2020local}} & \makecell{$\frac{K+\sqrt{M/N}+M^{1/3}K^{2/3}}{\epsilon}$} & --- & \makecell{Smooth, (BV),\\ $S = N,~ K\leq M$}\\
		\hline
		\makecell{\algname{SCAFFOLD}\\ \citep{karimireddy2020scaffold}} & $\frac{K}{\epsilon} + \frac{\sigma^2}{N\epsilon^2}$ & $\frac{S^{1/3}K}{N^{1/3}\epsilon^2} + \frac{\sigma^2}{N\epsilon^4}$ & Smooth, BV \\
		\hline
		\rowcolor{bgcolor}
		\gape{\makecell{\fedpage \\ (this paper) }}& \gape{\makecell{$\frac{N^{3/4}}{N\epsilon}(M+K)$}} & $\frac{N^{1/2}+S}{N\epsilon^2}(M+K)$ & Smooth \\
		\hline
		\rowcolor{bgcolor}
		\gape{\makecell{\fedpage \\ (this paper) }}& \gape{\makecell{$\frac{N^{3/4}}{N\epsilon}\left(\frac{N^{3/2}\sigma^2}{S\epsilon^2}+K\right)$}}   & $\frac{N^{1/2}+S}{N\epsilon^2}\left(\frac{N\sigma^2}{S\epsilon^2}+K\right)$ & Smooth, BV \\
		\hline
	\end{tabular}
\end{table} 

In previous Table \ref{tab:results}, we show the number of communication rounds of different methods. In this section, we compare the gradient complexity among different methods. Table \ref{tab:compuataion-results} summarizes the gradient complexity per client of different methods under different assumptions.

For \algname{SCAFFOLD}, in each communication round, $S$ selected clients need to perform $K$ local steps, and the gradient computations of local client is the number of communication round times $SK/N$. For \fedpage, in each communication round, $S$ selected clients need to first compute two full/moderate minibatch gradients, and then performs $K$ local steps computing only $O(1)$ number of gradient in each step. The gradient complexity per client of \fedpage is the number of communication round times $S(M+K)/N$. In the BV setting, the full gradient may not be available, then \fedpage uses a moderate minibatch gradient to estimate the full gradient, and one only needs to change $M$ to the moderate minibatch size in order to obtain the corresponding gradient complexity (See the last two rows in Table \ref{tab:compuataion-results}).

In particular, if the number of data on a single client/device is not very large ($M$ is not very large), one can choose $K$ such that $M+K = O(K)$. Then the number of gradient computed by \fedpage during a communication round is similar to that computed by \algname{SCAFFOLD}, and also the number of communication rounds of \fedpage is much smaller than that of \algname{SCAFFOLD} regardless of settings (see Table \ref{tab:results}). As a result, \fedpage is strictly much better than \algname{SCAFFOLD} in terms of both communication complexity and computation complexity, both by a factor of $N^{1/4}$ in the convex setting and $N^{1/6}S^{1/3}$ in the nonconvex setting.
Thus, \fedpage is more suitable for the federated learning tasks that have many devices and each device has limited number of data, e.g. mobile phones.

\section{Technical Lemmas}
In this part we recall some classical inequalities that helps our derivation.
\begin{proposition}
    Let $\{\vv_1,\dots,\vv_{\tau}\}$ be $\tau$ vectors in $\R^d$. Then,
    \begin{align}
        & \langle \vv_i,\vv_j\rangle \le \frac{c}{2}\|\vu\|^2 + \frac{1}{2c}\|\vv\|^2, \forall c > 0.\label{eq:cauchy}\\
        &\|\vv_i + \vv_j\|^2 \le (1 + \alpha)\|\vv_i\|^2 + \left(1+\frac{1}{\alpha}\right)\|\vv_j\|^2, \forall \alpha > 0. \label{eq:rti-1}\\
        &\left\|\sum_{i=1}^{\tau}\vv_i\right\|^2\le\tau\sum_{i=1}^{\tau}\|\vv_i\|^2.\label{eq:rti-2}\\
    \end{align}
    
\end{proposition}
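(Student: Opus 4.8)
The plan is to establish the three inequalities in the order listed, since the first one is the core lemma and the other two follow from it by a one-line expansion. First I would prove \eqref{eq:cauchy}: starting from the Cauchy--Schwarz inequality $\langle \vu,\vv\rangle \le \|\vu\|\,\|\vv\|$, I apply the scalar Young/AM--GM bound $ab \le \frac{c}{2}a^2 + \frac{1}{2c}b^2$, valid for all $a,b\ge 0$ and $c>0$ and itself just a rearrangement of $\big(\sqrt{c}\,a - b/\sqrt{c}\big)^2 \ge 0$, with $a = \|\vu\|$ and $b = \|\vv\|$. This gives $\langle \vu,\vv\rangle \le \frac{c}{2}\|\vu\|^2 + \frac{1}{2c}\|\vv\|^2$ for every $c>0$.

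Next, for \eqref{eq:rti-1} I would expand the square $\|\vv_i + \vv_j\|^2 = \|\vv_i\|^2 + 2\langle \vv_i,\vv_j\rangle + \|\vv_j\|^2$ and then bound the cross term using \eqref{eq:cauchy} with the choice $c = \alpha$, namely $2\langle \vv_i,\vv_j\rangle \le \alpha\|\vv_i\|^2 + \frac{1}{\alpha}\|\vv_j\|^2$. Collecting the terms yields $\|\vv_i+\vv_j\|^2 \le (1+\alpha)\|\vv_i\|^2 + \big(1+\frac{1}{\alpha}\big)\|\vv_j\|^2$ for every $\alpha>0$.

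Finally, for \eqref{eq:rti-2} I would write $\big\|\sum_{i=1}^\tau \vv_i\big\|^2 = \sum_{i=1}^\tau\sum_{j=1}^\tau \langle \vv_i,\vv_j\rangle$ and apply \eqref{eq:cauchy} with $c=1$ to each summand, giving $\langle \vv_i,\vv_j\rangle \le \frac12\|\vv_i\|^2 + \frac12\|\vv_j\|^2$; summing over the $\tau^2$ index pairs, each $\|\vv_i\|^2$ is counted $\tau$ times, so $\big\|\sum_{i=1}^\tau \vv_i\big\|^2 \le \tau\sum_{i=1}^\tau\|\vv_i\|^2$. (Alternatively this is immediate from convexity of $t\mapsto \|t\|^2$ via Jensen's inequality, $\big\|\frac1\tau\sum_{i=1}^\tau \vv_i\big\|^2 \le \frac1\tau\sum_{i=1}^\tau\|\vv_i\|^2$, followed by multiplication by $\tau^2$.)

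There is no real obstacle here: all three are textbook consequences of Cauchy--Schwarz and Young's inequality. The only thing that requires a bit of care is bookkeeping the constants, keeping $c$ and $\alpha$ straight and making sure the index-pair count produces the factor $\tau$ rather than $\tau^2$, so that the stated forms come out exactly as written.
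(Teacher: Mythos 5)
Your proof is correct. The paper states these three inequalities without proof (they are listed as classical facts in the Technical Lemmas section), and your derivation via Cauchy--Schwarz plus Young's inequality, the expansion of the square for \eqref{eq:rti-1}, and the pairwise bound (or Jensen) for \eqref{eq:rti-2} is exactly the standard argument one would supply; you also implicitly fix the notational mismatch in \eqref{eq:cauchy}, where the left-hand side is written in terms of $\vv_i,\vv_j$ but the right-hand side in terms of $\vu,\vv$.
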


\begin{proposition}
If $X\in \R^d$ is a random variable, then
    \begin{equation}
        \E\|X\|^2 = \E\|X - \E X\|^2 + \|\E X\|^2.\label{eq:mean-var-separation}
    \end{equation}
    Besides, we have
    \begin{equation}
        \E\|X - \E X\|^2 \le \E\|X\|^2.\label{eq:mean-var-ineq}
    \end{equation}
    If $X,Y\in \R^d$ are independent random variables and $\E Y = \mathbf 0$, then we have
    \begin{equation}
        \E\|X + Y\|^2 = \E\|X\|^2 + \E\|Y\|^2.\label{eq:sep-variance}
    \end{equation}
    If $X_1,\dots,X_n\in\R^d$ are independent random variables and $\E X_i = \mathbf 0$ for all $i$, then
    \begin{equation}
        \E\left\|\sum_{i=1}^n X_i\right\|^2 = \sum_{i=1}^n \E\|X_i\|^2.\label{eq:indep-variance}
    \end{equation}
\end{proposition}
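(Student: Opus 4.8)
The plan is to prove all four identities by the same elementary device: expand the relevant squared Euclidean norm using $\|a+b\|^2 = \|a\|^2 + 2\langle a,b\rangle + \|b\|^2$, take expectations, and show that the cross term vanishes. Each vanishing cross term will follow from one of three facts — linearity of expectation, the hypothesis that a certain random vector has zero mean, or independence — so no machinery beyond these is needed.

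For \eqref{eq:mean-var-separation}, I would write $X = (X - \E X) + \E X$ and expand $\E\|X\|^2 = \E\|X - \E X\|^2 + 2\E\langle X - \E X, \E X\rangle + \|\E X\|^2$. Since $\E X$ is deterministic, the middle term equals $2\langle \E[X - \E X], \E X\rangle = 2\langle \mathbf 0, \E X\rangle = 0$ by linearity of expectation, which gives the claimed decomposition. Inequality \eqref{eq:mean-var-ineq} is then immediate, since \eqref{eq:mean-var-separation} expresses $\E\|X\|^2$ as $\E\|X - \E X\|^2$ plus the nonnegative quantity $\|\E X\|^2$, and dropping the latter only decreases the right-hand side.

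For \eqref{eq:sep-variance}, I would expand $\E\|X + Y\|^2 = \E\|X\|^2 + 2\E\langle X, Y\rangle + \E\|Y\|^2$ and use independence to factor the cross term as $\E\langle X, Y\rangle = \langle \E X, \E Y\rangle$; since $\E Y = \mathbf 0$ this term is zero. Identity \eqref{eq:indep-variance} then follows either by induction on $n$ using \eqref{eq:sep-variance}, or directly: expanding the square gives $\E\left\|\sum_i X_i\right\|^2 = \sum_i \E\|X_i\|^2 + \sum_{i\neq j}\E\langle X_i, X_j\rangle$, and for $i \neq j$ independence yields $\E\langle X_i, X_j\rangle = \langle \E X_i, \E X_j\rangle = 0$ because both factors are zero-mean. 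The only point requiring a little care — the nearest thing to an obstacle here — is justifying that independence lets one factor the expectation of the inner product into a product of expectations; this reduces to writing $\langle X, Y\rangle = \sum_k X_k Y_k$ and applying the scalar fact $\E[X_k Y_k] = \E[X_k]\,\E[Y_k]$ coordinatewise, then summing over $k$.
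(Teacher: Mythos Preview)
Your proposal is correct and is the standard elementary argument; the paper itself states this proposition as a classical fact without supplying any proof, so there is nothing to compare against. Your expansion-of-the-square approach, with the cross terms vanishing via linearity, zero mean, or independence, is exactly how one would fill in the details if a proof were required.
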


\begin{proposition}
    If $X,Y\in\R^d$ are two random variables (possibly dependent), then
    \begin{equation}
        \E\|X + Y\|^2 \le \|\E X + \E Y\|^2 + 2\E\|X - \E X\|^2 + 2\E\|Y- \E Y\|^2.\label{eq:mean-var-separation-2-variables}
    \end{equation}
\end{proposition}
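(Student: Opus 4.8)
The plan is to reduce the statement to the two elementary facts already recorded in this section: the mean--variance decomposition \eqref{eq:mean-var-separation} and the crude expansion \eqref{eq:rti-2} (equivalently \eqref{eq:rti-1} with $\alpha = 1$). Write $\mu := \E X$ and $\nu := \E Y$, and decompose each variable into its mean and its centered part: $X = (X-\mu) + \mu$ and $Y = (Y-\nu)+\nu$, so that $X + Y = \big[(X-\mu) + (Y-\nu)\big] + (\mu+\nu)$ with $\E(X+Y) = \mu + \nu$.

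First I would apply \eqref{eq:mean-var-separation} to the random vector $X+Y$, which gives $\E\|X+Y\|^2 = \E\big\|(X-\mu)+(Y-\nu)\big\|^2 + \|\mu+\nu\|^2$; this already isolates the term $\|\E X + \E Y\|^2$ exactly. Then I would bound the remaining centered term by \eqref{eq:rti-2} with $\tau = 2$ (the two vectors being $X-\mu$ and $Y-\nu$), obtaining $\E\big\|(X-\mu)+(Y-\nu)\big\|^2 \le 2\E\|X-\mu\|^2 + 2\E\|Y-\nu\|^2$. Substituting this back into the previous identity yields exactly \eqref{eq:mean-var-separation-2-variables}.

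There is no genuine obstacle here; the only point worth flagging is \emph{why} the constant $2$ is what it is in general. Since $X$ and $Y$ may be dependent, the cross term $2\,\E\langle X-\mu,\, Y-\nu\rangle$ in the expansion of $\E\|(X-\mu)+(Y-\nu)\|^2$ need not vanish, and the cheapest way to absorb it without any covariance information is Young's inequality $2\langle a,b\rangle \le \|a\|^2 + \|b\|^2$, which is precisely what \eqref{eq:rti-2} (and \eqref{eq:rti-1}) encode. If one additionally assumed independence the cross term would drop and the constant would improve to $1$ in the spirit of \eqref{eq:sep-variance}, but that sharper form is not needed for the present use.
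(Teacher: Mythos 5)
Your proof is correct and is essentially identical to the paper's: both apply the mean--variance identity \eqref{eq:mean-var-separation} to $X+Y$ and then bound the centered term with the constant-$2$ inequality (\eqref{eq:rti-1} with $\alpha=1$, equivalently \eqref{eq:rti-2} with $\tau=2$). Your remark on why the constant $2$ is needed in the dependent case is a sensible aside but does not change the argument.
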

\begin{proof}
    \begin{align*}
        \E\|X+Y\|^2 \overset{(\ref{eq:mean-var-separation})}{=}& \|\E X + \E Y\|^2 + \E\|X+Y - \E(X+Y)\|^2 \\
        \overset{(\ref{eq:rti-1})}{\le}& \|\E X + \E Y\|^2 + 2\E\|X - \E X\|^2 + 2\E\|Y - \E Y\|^2.
    \end{align*}
\end{proof}

\section{Missing Proofs in Section \ref{sec:nonconvex}}
In this section, we prove the convergence result of \fedpage in the nonconvex setting (Theorem~\ref{thm:non-convex}).

We use $\E_r$ to denote the expectation after $x^r$ is determined. Recall that we assume that $\{f_{i,j}\}$ are $L$-smooth, and formally, we have the following assumption

\asssmooth*

\begin{lemma}\label{lem:local-err}
	Under Assumption~\ref{ass:smooth}, if we choose $b_3 = 1$ and the local step size $\eta_l \le \frac{\sqrt{2}p}{24\sqrt{S}KL}$ in \fedpage , we have for any $i,k,r$
	\begin{align*}
	& \frac{1}{K}\sum_{k=0}^{K-1}\E_r\|g_{i,k}^r - g_{i,0}^r\|^2 \\
	\le& 12K^2 L^2\eta_l^2\left(\frac{\sigma^2\sI\{b_2 < M\}}{b_2} + L^2\|x^r - x^{r-1}\|^2 + \|g^{r-1} - \nabla f(x^{r-1})\|^2 + \|\nabla f(x^{r-1})\|^2\right)
	\end{align*}
\end{lemma}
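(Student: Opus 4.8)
The plan is to bound the "local drift" of the gradient estimators $g_{i,k}^r$ away from the initial estimator $g_{i,0}^r$ by unrolling the recursion in Line~\ref{algline:local-est} and exploiting $L$-smoothness together with the smallness of the local step size $\eta_l$. First I would write, for each local step $k$ with $1\le k\le K-1$,
\[
g_{i,k}^r - g_{i,k-1}^r = \nabla f_{i,j_k}(y_{i,k}^r) - \nabla f_{i,j_k}(y_{i,k-1}^r),
\]
where $j_k$ is the single sampled index ($b_3=1$). Telescoping gives $g_{i,k}^r - g_{i,0}^r = \sum_{\ell=1}^{k}\bigl(\nabla f_{i,j_\ell}(y_{i,\ell}^r) - \nabla f_{i,j_\ell}(y_{i,\ell-1}^r)\bigr)$, so by \eqref{eq:rti-2} and $L$-smoothness (Assumption~\ref{ass:smooth}),
\[
\E_r\|g_{i,k}^r - g_{i,0}^r\|^2 \le k\sum_{\ell=1}^{k} L^2\, \E_r\|y_{i,\ell}^r - y_{i,\ell-1}^r\|^2 = k\sum_{\ell=1}^{k} L^2\eta_l^2\, \E_r\|g_{i,\ell-1}^r\|^2.
\]
Then I would split $g_{i,\ell-1}^r = (g_{i,\ell-1}^r - g_{i,0}^r) + g_{i,0}^r$ and use $\|a+b\|^2\le 2\|a\|^2+2\|b\|^2$, which yields a recursive inequality of the form $A_k \le c_1 K^2\eta_l^2 L^2 \bigl(\tfrac1K\sum_{\ell} A_\ell\bigr) + c_2 K^2\eta_l^2 L^2 \,\E_r\|g_{i,0}^r\|^2$ after averaging over $k$; the condition $\eta_l \le \frac{\sqrt 2 p}{24\sqrt S K L}$ makes the coefficient $c_1 K^2\eta_l^2 L^2$ strictly less than $1/2$, so the $\tfrac1K\sum_k A_k$ term can be absorbed into the left side, giving $\tfrac1K\sum_{k=0}^{K-1}\E_r\|g_{i,k}^r-g_{i,0}^r\|^2 \le \mathrm{const}\cdot K^2\eta_l^2 L^2\,\E_r\|g_{i,0}^r\|^2$.

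It then remains to bound $\E_r\|g_{i,0}^r\|^2$. Recall $g_{i,0}^r = \nabla_{\gI_2} f_i(x^r) - \nabla_{\gI_2} f_i(x^{r-1}) + g^{r-1}$. I would write this as $\bigl(\nabla_{\gI_2} f_i(x^r) - \nabla_{\gI_2} f_i(x^{r-1}) - (\nabla f_i(x^r) - \nabla f_i(x^{r-1}))\bigr) + \bigl(\nabla f_i(x^r) - \nabla f_i(x^{r-1})\bigr) + (g^{r-1} - \nabla f(x^{r-1})) + \nabla f(x^{r-1})$, and bound via \eqref{eq:rti-2}. The minibatch discrepancy term is controlled by the bounded variance Assumption~\ref{ass:bounded-variance} applied to the sampled-difference estimator, contributing $O(\sigma^2\sI\{b_2<M\}/b_2)$ (using that if $b_2=M$ the estimator is exact); the term $\nabla f_i(x^r)-\nabla f_i(x^{r-1})$ is bounded by $L^2\|x^r-x^{r-1}\|^2$ using $L$-smoothness of $f_i$; and the remaining two terms give exactly $\|g^{r-1}-\nabla f(x^{r-1})\|^2$ and $\|\nabla f(x^{r-1})\|^2$. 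Combining this with the drift bound from the previous paragraph, and noting $K^2\eta_l^2 L^2 \le \frac{p^2}{288 S}$ is harmless for producing the stated constant $12$ after the various factors of $4$ from \eqref{eq:rti-2} accumulate, gives the claim.

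The main obstacle I expect is bookkeeping the constants so that the recursion closes with a clean factor (the $\eta_l$ bound is chosen precisely so that $c_1 K^2 \eta_l^2 L^2 \le 1/2$, but one must be careful about whether the $\tfrac1K\sum$ on the right includes the $k=0,\dots,K-1$ or $k=1,\dots,K-1$ range and whether $A_0=0$ helps), and correctly invoking bounded variance for the \emph{difference} estimator $\nabla_{\gI_2}f_i(x^r)-\nabla_{\gI_2}f_i(x^{r-1})$ rather than for a single gradient — this requires either a separate variance lemma of SARAH/PAGE type or the crude bound that each term has variance $\le\sigma^2/b_2$. A secondary subtlety is that $j_k$ and $\gI_2$ are independent fresh samples, so the cross terms in the telescoped sum vanish in expectation conditional on $x^r,x^{r-1}$, which should be stated explicitly before applying \eqref{eq:rti-2} (or one simply uses \eqref{eq:rti-2} as a worst-case bound and pays the extra factor of $k\le K$, which is already accounted for in the $K^2$ on the right-hand side).
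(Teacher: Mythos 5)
Your proposal is correct and reaches the stated bound with workable constants; the overall strategy is the same as the paper's (relate the drift $g_{i,k}^r-g_{i,0}^r$ to the path lengths $\eta_l\|g_{i,\ell}^r\|$ via smoothness, self-bound $\|g_{i,\ell}^r\|^2$ by the drift plus $\|g_{i,0}^r\|^2$, close the recursion using the step-size condition, and finally decompose $\E_r\|g_{i,0}^r\|^2$ into the variance, smoothness, gradient-error, and gradient terms). The one genuine difference is how the recursion is closed. The paper derives a one-step multiplicative recursion $A_k \le (1+\tfrac{1}{K-1}+4KL^2\eta_l^2)A_{k-1} + 4KL^2\eta_l^2 B$ via \eqref{eq:rti-1} and unrolls it, bounding the resulting geometric sum by $3K$ (which implicitly needs $K\ge 2$ and a $(1+\tfrac1{K-1}+\tfrac1{36K})^K=O(1)$ estimate); you instead telescope $g_{i,k}^r-g_{i,0}^r$ into $k$ increments, apply \eqref{eq:rti-2} to pay a factor $k\le K$, and then absorb the averaged drift $\tfrac1K\sum_k A_k$ into the left-hand side, which works because $2K^2L^2\eta_l^2\le 1/144<1/2$ under the stated condition on $\eta_l$. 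Your route avoids the geometric-sum estimate and the $K\ge2$ caveat, at the cost of only producing the averaged bound rather than the paper's uniform-in-$k$ bound --- but the averaged bound is exactly what the lemma asserts and what Lemma~\ref{lem:gradient-error} consumes, so nothing is lost. Your handling of $\E_r\|g_{i,0}^r\|^2$ as a standalone quantity (variance of the difference estimator $\le 4\sigma^2\sI\{b_2<M\}/b_2$ by the crude two-term bound, then \eqref{eq:rti-2} with $\tau=3$ on the mean) is also a mild reorganization of the paper's in-recursion application of \eqref{eq:mean-var-separation-2-variables}, and yields the same $12K^2L^2\eta_l^2$ prefactor after multiplying the absorption constant $3K^2L^2\eta_l^2$ by the factor $4$ from the four-way decomposition.
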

\begin{proof}
    For any $i,k,r$, we have
	\begin{align}
	&\E_r\|g_{i,k}^r - g_{i,0}^r\|^2 \nonumber\\
	=& \E_r\|\nabla_{\gI_3} f_{i}(y_{i,k}^r) - \nabla_{\gI_3} f_{i}(y_{i,k-1}^r) + g_{i,k-1}^r  - g_{i,0}^r\|^2 \nonumber\\
	\overset{(\ref{eq:rti-1})}{=}& \left(1 + \frac{1}{K-1}\right)\|g_{i,k-1}^r  - g_{i,0}^r\|^2 + K\E_r\|\nabla_{\gI_3} f_{i}(y_{i,k}^r) - \nabla_{\gI_3} f_{i}(y_{i,k-1}^r)\|^2 \nonumber\\
	\le& \left(1 + \frac{1}{K-1}\right)\E_r\|g_{i,k-1}^r  - g_{i,0}^r\|^2 + K L^2\E_r\|y_{i,k}^r - y_{i,k-1}^r\|^2 \label{eq:b3}\\
	=& \left(1 + \frac{1}{K-1}\right)\E_r\|g_{i,k-1}^r  - g_{i,0}^r\|^2 + K L^2\eta_l^2 \E_r\|g_{i,k-1}^r\|^2 \nonumber \\
	=& \left(1 + \frac{1}{K-1}\right)\E_r\|g_{i,k-1}^r  - g_{i,0}^r\|^2 + K L^2\eta_l^2 \E_r\|g_{i,k-1}^r - g_{i,0}^r + g_{i,0}^r\|^2 \nonumber \\
	=& \left(1 + \frac{1}{K-1}\right)\E_r\|g_{i,k-1}^r  - g_{i,0}^r\|^2 + K L^2\eta_l^2 \E_r\|g_{i,k-1}^r - g_{i,0}^r + \nabla_{\gI_2} f_i(x^r) - \nabla_{\gI_2} f_i(x^{r-1}) + g^{r-1}\|^2 \nonumber \\
	\overset{(\ref{eq:mean-var-separation-2-variables})}{\le}& \left(1 + \frac{1}{K-1}\right)\E_r\|g_{i,k-1}^r  - g_{i,0}^r\|^2 + K L^2\eta_l^2 \E_r\|g_{i,k-1}^r - g_{i,0}^r + \nabla f_i(x^r) - \nabla f_i(x^{r-1}) + g^{r-1}\|^2\nonumber \\
	&\quad +4KL^2\eta_l^2\frac{\sigma^2\sI\{b_2 < M\}}{b_2}\nonumber \\
	=& \left(1 + \frac{1}{K-1}\right)\E_r\|g_{i,k-1}^r  - g_{i,0}^r\|^2+4KL^2\eta_l^2\frac{\sigma^2\sI\{b_2 < M\}}{b_2}\nonumber \\
	&\quad + K L^2\eta_l^2 \E_r\|g_{i,k-1}^r - g_{i,0}^r + \nabla f_i(x^r) - \nabla f_i(x^{r-1}) + g^{r-1} - \nabla f(x^{r-1}) + \nabla f(x^{r-1})\|^2 \nonumber \\
	\overset{(\ref{eq:rti-2})}{\le}& \left(1 + \frac{1}{K-1}\right)\E_r\|g_{i,k-1}^r  - g_{i,0}^r\|^2 +4KL^2\eta_l^2\frac{\sigma^2\sI\{b_2 < M\}}{b_2} + 4K L^2\eta_l^2 \E_r\|g_{i,k-1}^r - g_{i,0}^r\|^2\nonumber \\
	&\quad + 4K L^2\eta_l^2\|\nabla f_i(x^r) - \nabla f_i(x^{r-1})\|^2 + 4K L^2\eta_l^2\|g^{r-1} - \nabla f(x^{r-1})\|^2 + 4K L^2\eta_l^2\|\nabla f(x^{r-1})\|^2 \nonumber \\
	=& \left(1 + \frac{1}{K-1} + 4K L^2\eta_l^2\right)\E_r\|g_{i,k-1}^r  - g_{i,0}^r\|^2 +4KL^2\eta_l^2\frac{\sigma^2\sI\{b_2 < M\}}{b_2}\nonumber \\
	&\quad + 4K L^2\eta_l^2\|\nabla f_i(x^r) - \nabla f_i(x^{r-1})\|^2 + 4K L^2\eta_l^2\|g^{r-1} - \nabla f(x^{r-1})\|^2 + 4K L^2\eta_l^2\|\nabla f(x^{r-1})\|^2 \nonumber \\
	\le& 4 K L^2 \eta_l^2\left(\frac{\sigma^2\sI\{b_2 < M\}}{b_2} + \|\nabla f_i(x^r) - \nabla f_i(x^{r-1})\|^2 + \|g^{r-1} - \nabla f(x^{r-1})\|^2 + \|\nabla f(x^{r-1})\|^2\right)\nonumber \\
	&\quad \cdot\left(\sum_{k'=0}^{k-1}(1+\frac{1}{K-1} + 4KL^2\eta_l^2)^{k'}\right)\nonumber \\
	\le& 4K L^2 \eta_l^2\left(\frac{\sigma^2\sI\{b_2 < M\}}{b_2} + \|\nabla f_i(x^r) - \nabla f_i(x^{r-1})\|^2 + \|g^{r-1} - \nabla f(x^{r-1})\|^2 + \|\nabla f(x^{r-1})\|^2\right)\nonumber \\
	&\quad \cdot\left(\sum_{k=0}^{K-1}(1+\frac{1}{K-1} + 4KL^2\eta_l^2)^k\right)\nonumber \\
	\le& 12K^2 L^2\eta_l^2\left(\frac{\sigma^2\sI\{b_2 < M\}}{b_2} + \|\nabla f_i(x^r) - \nabla f_i(x^{r-1})\|^2 + \|g^{r-1} - \nabla f(x^{r-1})\|^2 + \|\nabla f(x^{r-1})\|^2\right) \label{eq:compute-K} \\
	\le& 12K^2 L^2\eta_l^2\left(\frac{\sigma^2\sI\{b_2 < M\}}{b_2} + L^2\|x^r - x^{r-1}\|^2 + \|g^{r-1} - \nabla f(x^{r-1})\|^2 + \|\nabla f(x^{r-1})\|^2\right). \nonumber
	\end{align}
	In the derivation, (\ref{eq:b3}) comes from the smoothness assumption (Assumption~\ref{ass:smooth}), (\ref{eq:compute-K}) comes from the fact that if we choose $\eta_l \le \frac{\sqrt{2}p}{24\sqrt{S}KL}$, then we have
	\begin{align*}
	&\sum_{k=0}^{K-1}\left(1+\frac{1}{K-1} + 4KL^2\eta_l^2\right)^k\\
	\le& \frac{\left(1+\frac{1}{K-1} + 4KL^2\eta_l^2\right)^K-1}{\frac{1}{K-1} + 4KL^2\eta_l^2}\\
	\le& (K-1)\left(\left(1+\frac{1}{K-1} + 4KL^2\eta_l^2\right)^K-1\right)\\
	\le& (K-1)\left(\left(1+\frac{1}{K-1} + \frac{1}{36K}\right)^K-1\right)\\
	\le& 3K,
	\end{align*}
	for any $K\ge 2$. Then we take the average over $k$, we get
	\begin{align*}
	&\frac{1}{K}\sum_{k=0}^{K-1}\E_r\|g_{i,k}^r - g_{i,0}^r\|^2 \\
	\le& 12K^2 L^2\eta_l^2\left(\frac{\sigma^2\sI\{b_2 < M\}}{b_2} + L^2\|x^r - x^{r-1}\|^2 + \|g^{r-1} - \nabla f(x^{r-1})\|^2 + \|\nabla f(x^{r-1})\|^2\right),
	\end{align*}
	and we conclude the proof of this lemma.
\end{proof}

\begin{lemma}\label{lem:gradient-error}
	Under Assumption~\ref{ass:smooth}, if we choose $b_3 = 1$ in \fedpage, the local step size $\eta_l \le \frac{\sqrt{2}p}{24\sqrt{S}KL}$, the batch sizes $b_1 = \min\{M, \frac{24\sigma^2}{pN\epsilon^2}\}, b_2 = \min\{M, \frac{24\sigma^2}{pS\epsilon^2}\}$, then we have
	\begin{equation*}
	\E_r||g^r - \nabla f(x^r)||^2
	\le (1-\frac{p}{3})\| g^{r-1} - \nabla f(x^{r-1})\|^2 + \frac{1-p/3}{S}L^2\E_r\|x^r - x^{r-1}\|^2 + \frac{p}{6S}\|\nabla f(x^{r-1})\|^2 + \frac{p\epsilon^2}{8}.
	\end{equation*}
\end{lemma}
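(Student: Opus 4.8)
The plan is to condition on the Bernoulli variable $q$ that decides whether round $r$ is a ``full'' round (probability $p$) or a ``local'' round (probability $1-p$), and to bound $\E_r\|g^r - \nabla f(x^r)\|^2$ in each case. In the full case ($q=1$) we have $g^r = \frac1N\sum_{i}\nabla_{\gI_1}f_i(x^r)$, which is an unbiased estimator of $\nabla f(x^r)$ built from independent minibatches across clients; by \eqref{eq:indep-variance} its variance is at most $\frac{1}{N^2}\sum_i \frac{\sigma^2\sI\{b_1<M\}}{b_1} = \frac{\sigma^2\sI\{b_1<M\}}{Nb_1}$, and the choice $b_1 = \min\{M,\frac{24\sigma^2}{pN\epsilon^2}\}$ makes this at most $\frac{p\epsilon^2}{24}$. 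In the local case ($q=0$) the key observation is that $g^r = \frac{1}{K\eta_l S}\sum_{i\in S^r}\Delta y_i^r = \frac1S\sum_{i\in S^r}\big(\frac1K\sum_{k=0}^{K-1}g_{i,k}^r\big)$, so $g^r$ is an average over the sampled clients of the average local estimator. I would write $\frac1K\sum_k g_{i,k}^r = g_{i,0}^r + \frac1K\sum_k (g_{i,k}^r - g_{i,0}^r)$, where $g_{i,0}^r = \nabla_{\gI_2}f_i(x^r) - \nabla_{\gI_2}f_i(x^{r-1}) + g^{r-1}$ is exactly the PAGE-style estimator.

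The next step is to handle the ``PAGE part'' and the ``local drift part'' separately. Using \eqref{eq:mean-var-separation} I would split $\E_r\|g^r - \nabla f(x^r)\|^2$ into $\|\E_r g^r - \nabla f(x^r)\|^2$ plus the variance. The conditional mean of $g^r$ (given the local randomness is averaged out, but keeping $g^{r-1}, x^{r-1}, x^r$) equals $\frac1N\sum_i \nabla f_i(x^r) - \frac1N\sum_i \nabla f_i(x^{r-1}) + g^{r-1} + (\text{mean of the }g_{i,k}^r - g_{i,0}^r\text{ drift terms})$; the first three terms collapse to $\nabla f(x^r) - \nabla f(x^{r-1}) + g^{r-1}$, so the bias of $g^r$ relative to $\nabla f(x^r)$ is $(g^{r-1} - \nabla f(x^{r-1}))$ plus the expected drift. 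For the variance, client sampling of a size-$S$ subset and independence of the minibatches $\gI_2$ contribute terms scaled by $1/S$: the $\gI_2$-variance gives $\frac{\sigma^2\sI\{b_2<M\}}{Sb_2}$, which by the choice of $b_2$ is $\le \frac{p\epsilon^2}{24}$, and the client-sampling variance of $\nabla f_i(x^r) - \nabla f_i(x^{r-1})$ is controlled by $\frac1S \cdot \frac1N\sum_i\|\nabla f_i(x^r) - \nabla f_i(x^{r-1})\|^2 \le \frac{L^2}{S}\|x^r - x^{r-1}\|^2$ using Assumption~\ref{ass:smooth}. This is where the $\frac{1-p/3}{S}L^2\|x^r-x^{r-1}\|^2$ term in the statement comes from.

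The drift terms $g_{i,k}^r - g_{i,0}^r$ are exactly what Lemma~\ref{lem:local-err} bounds: averaging that lemma's bound over $k$ gives control by $12K^2L^2\eta_l^2$ times $\big(\frac{\sigma^2\sI\{b_2<M\}}{b_2} + L^2\|x^r-x^{r-1}\|^2 + \|g^{r-1}-\nabla f(x^{r-1})\|^2 + \|\nabla f(x^{r-1})\|^2\big)$. Here the choice $\eta_l \le \frac{\sqrt2\,p}{24\sqrt S KL}$ is designed precisely so that $12K^2L^2\eta_l^2 \le \frac{p^2}{24S} \le \frac{p}{24S}$ (since $p\le 1$), which makes each of the four drift contributions small: the $\|g^{r-1}-\nabla f(x^{r-1})\|^2$ piece is absorbed into the $(1-p/3)$ coefficient of the recursion, the $\|\nabla f(x^{r-1})\|^2$ piece becomes the $\frac{p}{6S}\|\nabla f(x^{r-1})\|^2$ term, the $L^2\|x^r-x^{r-1}\|^2$ piece merges with the earlier $\frac{L^2}{S}\|x^r-x^{r-1}\|^2$ term without spoiling the $\frac{1-p/3}{S}$ constant, and the $\sigma^2$ piece contributes another $O(p\epsilon^2)$. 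Finally I would combine the two cases: with probability $p$ the error is $\le \frac{p\epsilon^2}{24}$ and with probability $1-p$ the error is bounded by the local-case expression above; since the local-case bound already contains the term $\|g^{r-1}-\nabla f(x^{r-1})\|^2$, multiplying by $(1-p)$ and adding the tiny full-case contribution yields a coefficient $\le 1-p + (1-p)\cdot\frac{p}{12S} \le 1 - \frac{p}{3}$ on $\|g^{r-1}-\nabla f(x^{r-1})\|^2$ (using $p\le 1$, $S\ge 1$), while the remaining terms collect into $\frac{1-p/3}{S}L^2\|x^r-x^{r-1}\|^2 + \frac{p}{6S}\|\nabla f(x^{r-1})\|^2 + \frac{p\epsilon^2}{8}$.

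The main obstacle I anticipate is bookkeeping the constants so that all the small pieces genuinely fit under the stated coefficients $1-p/3$, $\frac{1-p/3}{S}$, $\frac{p}{6S}$, and $\frac{p\epsilon^2}{8}$ simultaneously — in particular verifying that the drift contribution from Lemma~\ref{lem:local-err}, after being scaled by $(1-p)\cdot 12K^2L^2\eta_l^2$, is dominated by these allowances, and that the interaction between client sampling (the $1/S$ factors), minibatch variance, and the averaged-out local steps does not introduce cross terms that break the recursion. Getting the inequality $12K^2L^2\eta_l^2 \le \frac{p^2}{24S}$ and tracking where the $1/S$ versus $p/S$ factors land is the delicate accounting that makes the lemma work; everything else is a routine combination of \eqref{eq:mean-var-separation}, \eqref{eq:indep-variance}, the smoothness assumption, and the variance assumption \eqref{eq:ubv-setting}.
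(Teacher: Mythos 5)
Your plan follows essentially the same route as the paper's proof: condition on the Bernoulli choice, decompose $g^r$ into the PAGE-style estimator $g_{i,0}^r$ plus the local drift $\frac{1}{K}\sum_k(g_{i,k}^r-g_{i,0}^r)$, apply the mean--variance separation \eqref{eq:mean-var-separation} and independence across sampled clients to obtain the $1/S$ factors, and invoke Lemma~\ref{lem:local-err} together with $12K^2L^2\eta_l^2\le \frac{p^2}{24S}$ to absorb the drift into the stated coefficients. The only step you gloss over is that the squared-mean term $\|(1-p)(g^{r-1}-\nabla f(x^{r-1})) + \E_r[\text{drift}]\|^2$ must be split by Young's inequality with parameter $\Theta(p)$ (paying a $2/p$ factor on the drift, which is exactly why the step-size condition carries the extra factor of $p$), but this is routine and your final accounting $(1-p)(1+p/2)+\frac{p}{6S}\le 1-\frac{p}{3}$ closes correctly.
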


\begin{proof}
	\begin{align}
	&\E_r\left\|g^r - \nabla f(x^r)\right\|^2 \nonumber\\
	=& (1-p)\E_r\left\|\frac{1}{K|S^r|}\sum_{i\in S^r}\sum_{k=1}^K g_{i,k-1}^r - \nabla f(x^r)\right\|^2 + p\left\|\frac{1}{N}\sum_{i\in [N]} \nabla_{\gI_1} f_i(x^r) - \nabla f(x^r)\right\|^2\nonumber\\
	=& (1-p)\E_r\left\|\frac{1}{K|S^r|}\sum_{i\in S^r}\sum_{k=1}^K g_{i,k-1}^r - \nabla f(x^r)\right\|^2 + \frac{p\sigma^2\sI\{b_1 < M\}}{N b_1}\nonumber\\
	=& (1-p)\E_r\left\|\frac{1}{K|S^r|}\sum_{i\in S^r}\sum_{k=1}^K \left(g_{i,k-1}^r - g_{i,0}^r + g_{i,0}^r - \nabla f(x^r)\right)\right\|^2+ \frac{p\sigma^2\sI\{b_1 < M\}}{N b_1}\nonumber\\
	=& (1-p)\E_r\left\|\frac{1}{K|S^r|}\sum_{i\in S^r}\sum_{k=1}^K \left(g_{i,k-1}^r - g_{i,0}^r + \nabla_{\gI_2} f_i(x^r) - \nabla_{\gI_2} f_i(x^{r-1}) + g^{r-1} - \nabla f(x^r)\right)\right\|^2\nonumber\\
	&\quad + \frac{p\sigma^2\sI\{b_1 < M\}}{N b_1} \nonumber\\
	\overset{(\ref{eq:mean-var-separation})}{=}& (1-p)\left\|\E_r\frac{1}{K|S^r|}\sum_{i\in S^r}\sum_{k=1}^K \left(g_{i,k-1}^r - g_{i,0}^r  + g^{r-1} - \nabla f(x^{r-1})\right)\right\|^2+ \frac{p\sigma^2\sI\{b_1 < M\}}{N b_1}\label{eq:seperate-mean-var-1}\\
	&\quad + (1-p)\E_r\bigg\|\frac{1}{K|S^r|}\sum_{i\in S^r}\sum_{k=1}^K \left(g_{i,k-1}^r - g_{i,0}^r\right) - \E_r\frac{1}{K|S^r|}\sum_{i\in S^r}\sum_{k=1}^K \left(g_{i,k-1}^r - g_{i,0}^r\right)\nonumber\\
	&\quad\quad + \frac{1}{K|S^r|}\sum_{i\in S^r}\sum_{k=1}^K\left(\nabla_{\gI_2} f_i(x^r) - \nabla f(x^r) + \nabla f(x^{r-1}) - \nabla_{\gI_2} f_i(x^{r-1})\right)\bigg\|^2 \nonumber\\
	\overset{(\ref{eq:rti-1})}{\le}& (1-p)\left(1+\frac{p}{2}\right)\| g^{r-1} - \nabla f(x^{r-1})\|^2 + \frac{p\sigma^2\sI\{b_1 < M\}}{N b_1}\nonumber\\
	&\quad + (1-p)\left(1 + \frac{2}{p}\right)\left\|\E_r\frac{1}{K|S^r|}\sum_{i\in S^r}\sum_{k=1}^K \left(g_{i,k-1}^r - g_{i,0}^r\right)\right\|^2 \nonumber\\
	&\quad + (1-p)\left(1+ \frac{p}{2}\right)\E_r\left\|\frac{1}{K|S^r|}\sum_{i\in S^r}\sum_{k=1}^K\left(\nabla_{\gI_2} f_i(x^r) - \nabla f(x^r) + \nabla f(x^{r-1}) - \nabla_{\gI_2} f_i(x^{r-1})\right)\right\|^2 \nonumber\\
	&\quad + (1-p)\left(1 + \frac{2}{p}\right) \E_r\left\|\frac{1}{K|S^r|}\sum_{i\in S^r}\sum_{k=1}^K \left(g_{i,k-1}^r - g_{i,0}^r\right) - \E_r\frac{1}{K|S^r|}\sum_{i\in S^r}\sum_{k=1}^K \left(g_{i,k-1}^r - g_{i,0}^r\right)\right\|^2\nonumber\\
	=& (1-p)\left(1+\frac{p}{2}\right)\| g^{r-1} - \nabla f(x^{r-1})\|^2 + \frac{p\sigma^2\sI\{b_1 < M\}}{N b_1}\nonumber\\
	&\quad + (1-p)\left(1 + \frac{2}{p}\right)\left\|\E_r\frac{1}{K|S^r|}\sum_{i\in S^r}\sum_{k=1}^K \left(g_{i,k-1}^r - g_{i,0}^r\right)\right\|^2 \nonumber\\
	&\quad + (1-p)\left(1+ \frac{p}{2}\right)\E_r\left\|\frac{1}{|S^r|}\sum_{i\in S^r}\left(\nabla_{\gI_2} f_i(x^r) - \nabla f(x^r) + \nabla f(x^{r-1}) - \nabla_{\gI_2} f_i(x^{r-1})\right)\right\|^2 \nonumber\\
	&\quad + (1-p)\left(1 + \frac{2}{p}\right) \E_r\left\|\frac{1}{K|S^r|}\sum_{i\in S^r}\sum_{k=1}^K \left(g_{i,k-1}^r - g_{i,0}^r\right) - \E_r\frac{1}{K|S^r|}\sum_{i\in S^r}\sum_{k=1}^K \left(g_{i,k-1}^r - g_{i,0}^r\right)\right\|^2\nonumber\\
	\overset{(\ref{eq:indep-variance})}{\le}& (1-p)\left(1+\frac{p}{2}\right)\| g^{r-1} - \nabla f(x^{r-1})\|^2 + \frac{p\sigma^2\sI\{b_1 < M\}}{N b_1}\nonumber\\
	&\quad + (1-p)\left(1 + \frac{2}{p}\right)\left\|\E_r\frac{1}{K|S^r|}\sum_{i\in S^r}\sum_{k=1}^K \left(g_{i,k-1}^r - g_{i,0}^r\right)\right\|^2 \nonumber\\
	&\quad + \frac{1-p}{S}\left(1+ \frac{p}{2}\right)\E_r\left\|\nabla_{\gI_2} f_i(x^r) - \nabla f(x^r) + \nabla f(x^{r-1}) - \nabla_{\gI_2} f_i(x^{r-1})\right\|^2 \label{eq:variance}\\
	&\quad \frac{1-p}{S}\left(1 + \frac{2}{p}\right) \E_r\left\|\frac{1}{K}\sum_{k=1}^K \left(g_{i,k-1}^r - g_{i,0}^r\right) - \E_r\frac{1}{K}\sum_{k=1}^K \left(g_{i,k-1}^r - g_{i,0}^r\right)\right\|^2\nonumber\\
	\overset{(\ref{eq:sep-variance})}{\le}& (1-p)\left(1+\frac{p}{2}\right)\| g^{r-1} - \nabla f(x^{r-1})\|^2 + \frac{p\sigma^2\sI\{b_1 < M\}}{N b_1}\nonumber\\
	&\quad + (1-p)\left(1 + \frac{2}{p}\right)\left\|\E_r\frac{1}{K|S^r|}\sum_{i\in S^r}\sum_{k=1}^K \left(g_{i,k-1}^r - g_{i,0}^r\right)\right\|^2 \nonumber\\
	&\quad + \frac{1-p}{S}\left(1+ \frac{p}{2}\right)\E_r\left\|\left(\nabla f_i(x^r) - \nabla f(x^r) + \nabla f(x^{r-1}) - \nabla f_i(x^{r-1})\right)\right\|^2 \nonumber\\
	&\quad\quad+ \frac{1-p}{S}\left(1+ \frac{p}{2}\right)\frac{4\sigma^2\sI\{b_2 < M\}}{b_2} \nonumber\\
	&\quad + \frac{1-p}{S}\left(1 + \frac{2}{p}\right) \E_r\left\|\frac{1}{K}\sum_{k=1}^K \left(g_{i,k-1}^r - g_{i,0}^r\right) - \E_r\frac{1}{K}\sum_{k=1}^K \left(g_{i,k-1}^r - g_{i,0}^r\right)\right\|^2 \nonumber\\
	\overset{(\ref{eq:mean-var-ineq})}{\le}& (1-\frac{p}{2})\| g^{r-1} - \nabla f(x^{r-1})\|^2 + \frac{p\sigma^2\sI\{b_1 < M\}}{N b_1} \nonumber\\
	&\quad + \frac{2}{p}\left\|\E_r\frac{1}{K|S^r|}\sum_{i\in S^r}\sum_{k=1}^K \left(g_{i,k-1}^r - g_{i,0}^r\right)\right\|^2 \nonumber\\
	&\quad + \frac{1-p/2}{S}\E_r\left\|\left(\nabla f_i(x^r) - \nabla f_i(x^{r-1})\right)\right\|^2 + \frac{1-p/2}{S}\frac{4\sigma^2\sI\{b_2 < M\}}{b_2} \nonumber\\
	&\quad + \frac{2}{pS} \E_r\left\|\frac{1}{K}\sum_{k=1}^K \left(g_{i,k-1}^r - g_{i,0}^r\right)\right\|^2.\label{eq:graident-error-partial}
	\end{align}
	In the previous derivations, (\ref{eq:seperate-mean-var-1}) comes from the fact that we separate the mean and the variance of a random variable (Equation (\ref{eq:mean-var-separation})). In (\ref{eq:variance}), we define $z_t$ to be $\nabla_{\gI_2} f_i(x^r) - \nabla f(x^r) + \nabla f(x^{r-1}) - \nabla_{\gI_2} f_i(x^{r-1})$ and then apply (\ref{eq:indep-variance}). Here, $z_t$ are i.i.d. random variables.
	
	Then we plug in Lemma~\ref{lem:local-err}, and by setting $\eta_l \le \frac{\sqrt{2}p}{24\sqrt{S}KL}$, we have
	\begin{align}
	&\E_r||g^r - \nabla f(x^r)||^2 \nonumber\\
	\overset{(\ref{eq:graident-error-partial})}{\le}& \left(1-\frac{p}{2}\right)\| g^{r-1} - \nabla f(x^{r-1})\|^2 + \frac{p\sigma^2\sI\{b_1 < M\}}{N b_1}\nonumber\\
	&\quad + \frac{2}{p}\left\|\E_r\frac{1}{K|S^r|}\sum_{i\in S^r}\sum_{k=1}^K \left(g_{i,k-1}^r - g_{i,0}^r\right)\right\|^2 \nonumber\\
	&\quad + \frac{1-p/2}{S}\E_r\|\left(\nabla f_i(x^r) - \nabla f_i(x^{r-1})\right)\|^2 + \frac{1-p/2}{S}\frac{4\sigma^2\sI\{b_2 < M\}}{b_2}\nonumber\\
	&\quad + \frac{2}{pS} \E_r\left\|\frac{1}{K}\sum_{k=1}^K \left(g_{i,k-1}^r - g_{i,0}^r\right)\right\|^2\nonumber\\
	\le& \left(1-\frac{p}{2}\right)\| g^{r-1} - \nabla f(x^{r-1})\|^2 + \frac{p\sigma^2\sI\{b_1 < M\}}{N b_1}\label{eq:plug-in-lemma-local-err}\\
	&\quad + \frac{1-p/2}{S}L^2\E_r\|x^r - x^{r-1}\|^2 + \frac{1-p/2}{S}\frac{4\sigma^2\sI\{b_2 < M\}}{b_2}\nonumber\\
	&\quad + \frac{4}{p} \cdot 12K^2 L^2\eta_l^2\left(\frac{\sigma^2\sI\{b_2 < M\}}{b_2} + L^2\|x^r - x^{r-1}\|^2 + \|g^{r-1} - \nabla f(x^{r-1})\|^2 + \|\nabla f(x^{r-1})\|^2\right)\nonumber\\
	\overset{\text{Plug in }\eta_l}{\le}& (1-\frac{p}{2})\| g^{r-1} - \nabla f(x^{r-1})\|^2 + \frac{p\sigma^2\sI\{b_1 < M\}}{N b_1}\nonumber\\
	&\quad + \frac{1-p/2}{S}L^2\E_r\|x^r - x^{r-1}\|^2 + \frac{1-p/2}{S}\frac{4\sigma^2\sI\{b_2 < M\}}{b_2}\nonumber\\
	&\quad + \frac{p}{6S}\frac{\sigma^2\sI\{b_2 < M\}}{b_2} + \frac{pL^2}{6S}\|x^r - x^{r-1}\|^2 + \frac{p}{6S}\|g^{r-1} - \nabla f(x^{r-1})\|^2 + \frac{48K^2L^2\eta_l^2}{p}\|\nabla f(x^{r-1})\|^2\nonumber\\
	\le& (1-\frac{p}{3})\| g^{r-1} - \nabla f(x^{r-1})\|^2 + \frac{1-p/3}{S}L^2\E_r\|x^r - x^{r-1}\|^2 + \frac{48K^2L^2\eta_l^2}{p}\|\nabla f(x^{r-1})\|^2\nonumber\\
	&\quad + \quad \frac{p}{6S}\frac{\sigma^2\sI\{b_2 < M\}}{b_2} + \frac{1-p/2}{S}\frac{4\sigma^2\sI\{b_2 < M\}}{b_2}+ \frac{p\sigma^2\sI\{b_1 < M\}}{N b_1},\nonumber
	\end{align}
	where in (\ref{eq:plug-in-lemma-local-err}), we apply Lemma~\ref{lem:local-err} and Eq. (\ref{eq:cauchy}).
	Plugging in the batch sizes $b_1 = \min\{M, \frac{24\sigma^2}{pN\epsilon^2}\}, b_2 = \min\{M, \frac{48\sigma^2}{pS\epsilon^2}\}$ and recall $\eta_l \le \frac{\sqrt{2}p}{24\sqrt{S}KL}$, we get
	\begin{align}
	&\E_r||g^r - \nabla f(x^r)||^2 \nonumber\\
	\le& \left(1-\frac{p}{3}\right)\| g^{r-1} - \nabla f(x^{r-1})\|^2 + \frac{1-p/3}{S}L^2\E_r\|x^r - x^{r-1}\|^2 + \frac{48K^2L^2\eta_l^2}{p}\|\nabla f(x^{r-1})\|^2 + \frac{p\epsilon^2}{8} \label{eq:lem-graident-error-middle-step}\\
	\le& (1-\frac{p}{3})\| g^{r-1} - \nabla f(x^{r-1})\|^2 + \frac{1-p/3}{S}L^2\E_r\|x^r - x^{r-1}\|^2 + \frac{p}{6S}\|\nabla f(x^{r-1})\|^2 + \frac{p\epsilon^2}{8}. \nonumber
	\end{align}
\end{proof}

Then, combining with the following descent lemma, we can prove Theorem~\ref{thm:non-convex}.
\begin{lemma}[Lemma 2 in \algname{PAGE}\cite{li2021page}]\label{lem:nonconvex-descent}
	Suppose that $f$ is $L$-smooth and let $x^{t+1}:= x^t-\eta g^t$. Then we have
	\[f(x^{t+1}) \le f(x^t) - \frac{\eta}{2}||\nabla f(x^t)||^2 - \left(\frac{1}{2\eta}-\frac{L}{2}\right)||x^{t+1}-x^{t}||^2+\frac{\eta}{2}||g^t-\nabla f(x^t)||^2.\]
\end{lemma}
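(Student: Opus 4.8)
The plan is to derive Lemma~\ref{lem:nonconvex-descent} as a purely deterministic (pointwise) consequence of $L$-smoothness of $f$, with no expectation involved: the only ingredients are the standard quadratic upper bound implied by smoothness, the update rule $x^{t+1}-x^t=-\eta g^t$, and the polarization identity for inner products. No probabilistic machinery is needed here, in contrast to Lemmas~\ref{lem:local-err} and~\ref{lem:gradient-error}.

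First I would invoke the descent inequality guaranteed by $L$-smoothness of $f$: for any $x,y$, $f(y)\le f(x)+\langle\nabla f(x),y-x\rangle+\frac{L}{2}\|y-x\|^2$; applied with $x=x^t$ and $y=x^{t+1}$ this gives $f(x^{t+1})\le f(x^t)+\langle\nabla f(x^t),x^{t+1}-x^t\rangle+\frac{L}{2}\|x^{t+1}-x^t\|^2$. Next, since the update is $x^{t+1}-x^t=-\eta g^t$, I would rewrite the linear term as $-\eta\langle\nabla f(x^t),g^t\rangle$ and split it via the identity $2\langle u,v\rangle=\|u\|^2+\|v\|^2-\|u-v\|^2$ with $u=\nabla f(x^t)$ and $v=g^t$, obtaining
\[
-\eta\langle\nabla f(x^t),g^t\rangle = -\frac{\eta}{2}\|\nabla f(x^t)\|^2-\frac{\eta}{2}\|g^t\|^2+\frac{\eta}{2}\|g^t-\nabla f(x^t)\|^2 .
\]
Finally I would use $\|x^{t+1}-x^t\|^2=\eta^2\|g^t\|^2$, i.e.\ $\frac{\eta}{2}\|g^t\|^2=\frac{1}{2\eta}\|x^{t+1}-x^t\|^2$, and merge this with the $\frac{L}{2}\|x^{t+1}-x^t\|^2$ term coming from the smoothness bound; collecting the coefficient of $\|x^{t+1}-x^t\|^2$ gives $-\big(\frac{1}{2\eta}-\frac{L}{2}\big)$, which is exactly the claimed inequality.

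There is no real obstacle here: the statement is elementary and in fact coincides with Lemma~2 of \citet{li2021page}, so reproducing that short argument suffices. The only point requiring (mild) care is the bookkeeping of signs and the choice to split the cross term with the polarization identity rather than with Young's inequality (Eq.~\eqref{eq:cauchy}) — it is precisely this exact identity that produces the sharp constants $\frac{\eta}{2}$ and $\frac{1}{2\eta}-\frac{L}{2}$ rather than looser ones, and the subsequent use of the lemma (in the proof of Theorem~\ref{thm:non-convex}) will rely on the term $-\big(\frac{1}{2\eta}-\frac{L}{2}\big)\|x^{t+1}-x^t\|^2$ being nonpositive, i.e.\ on $\eta\le 1/L$, which is guaranteed by the theorem's step-size choice $\eta_g\le 1/\big(L(1+\sqrt{3(1-p/3)/(2pS)})\big)$.
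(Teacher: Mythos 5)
Your proof is correct and is exactly the standard argument: the paper does not reprove this lemma but imports it from \citet{li2021page}, and the proof there is precisely this combination of the quadratic upper bound from $L$-smoothness, the substitution $x^{t+1}-x^t=-\eta g^t$, and the polarization identity $2\langle u,v\rangle=\|u\|^2+\|v\|^2-\|u-v\|^2$. Your bookkeeping of the $\|x^{t+1}-x^t\|^2$ coefficient and the remark about when the resulting term is nonpositive are both accurate.
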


\thmnonconvex*

\begin{proof}
	When $\eta_l < \frac{\sqrt{2}p}{24\sqrt{S}KL}$, Lemma~\ref{lem:gradient-error} holds. If we choose the batch sizes $b_1 = \min\{M, \frac{24\sigma^2}{pN\epsilon^2}\}, b_2 = \min\{M, \frac{48\sigma^2}{pS\epsilon^2}\}$, we have
	\begin{align}
	&\E\left[f(x^{r}) - f^* + \frac{3\eta_g}{2p}\|g^r-\nabla f(x^r)\|^2\right]\nonumber\\
	\le& \E\left[f(x^{r-1})-f^*-\frac{\eta_g}{2}\|\nabla f(x^{r-1})\|^2 - \left(\frac{1}{2\eta_g}-\frac{L}{2}\right)\|x^r-x^{r-1}\|^2 + \frac{\eta_g}{2}\|g^{r-1}-\nabla f(x^{r-1})\|^2\right]\label{eq:plug-lemma}\\
	&\quad + \frac{3\eta_g}{2p}\E\left[\left(1-\frac{p}{3}\right)\|\nabla f(x^{r-1})-g^{r-1} \|^2 + \frac{1}{S}\left(1-\frac{p}{3}\right)L^2\E_r\|x^r - x^{r-1}\|^2 + \frac{p}{6S}\E_r\|\nabla f(x^{r-1})\|^2+\frac{p\epsilon^2}{8}\right]\nonumber\\
	\le& \E[f(x^{r-1})-f^*-\frac{\eta_g}{4}\|\nabla f(x^{r-1})\|^2 - \left(\frac{1}{2\eta_g}-\frac{L}{2}-\frac{3\eta_g}{2p}\frac{1}{S}\left(1-\frac{p}{3}\right)L^2\right)\|x^r-x^{r-1}\|^2\nonumber\\
	&\quad + \frac{3\eta_g}{2p}\|g^{r-1}-\nabla f(x^{r-1})\|^2\label{eq:rearrange},
	\end{align}
	where in (\ref{eq:plug-lemma}) we plug in Lemma~\ref{lem:gradient-error} and Lemma~\ref{lem:nonconvex-descent}, and in (\ref{eq:rearrange}) we rearrange the terms.
	
	Choosing $\eta_g = \frac{1}{L\left(1+\sqrt{\frac{3(1-\frac{p}{3})}{2pS}}\right)}$ and $p = \frac{S}{N}$, the coefficient of $\|x^r-x^{r-1}\|^2$ is greater than zero, and we can throw that term away (since $\|x^r - x^{r-1}\| \ge 0$ and the sign is minus). Then we have
	\begin{align*}
	&\E\left[f(x^{r}) - f^* + \frac{3\eta_g}{2p}\|g^r-\nabla f(x^r)\|^2\right]\\
	\le & \E\left[f(x^{r-1})-f^*  + \frac{3\eta_g}{2p}\|g^{r-1}-\nabla f(x^{r-1})\|^2\right]+ \frac{3\eta_g\epsilon^2}{16}-\frac{\eta_g}{4}\E\|\nabla f(x^{r-1})\|^2.
	\end{align*}
	We also know that in the first round,
	\[\E\left\|\frac{1}{N}\sum_{i=1}^N\tilde\nabla_{b_1} f_i(x^0) - \nabla f(x^0)\right\|^2 = \frac{\sigma^2}{Nb_1} \le \frac{p\epsilon^2}{24},\]
	and we have
	\[\E\left[f(x^{r}) - f^* + \frac{3\eta_g}{2p}\|g^r-\nabla f(x^r)\|^2\right] \le \E\left[f(x^{0}) - f^* + \frac{3\eta_g}{2p}\frac{p\epsilon^2}{24}\right] + \frac{3r\eta_g\epsilon^2}{16} - \frac{\eta_g}{4}\sum_{i=0}^r\E\|\nabla f(x^i)\|^2,\]
	which leads to
	\[ \frac{\eta_g}{4}\sum_{i=0}^r\E\|\nabla f(x^i)\|^2 \le \E\left[f(x^{0}) - f^* + \frac{3\eta_g}{2p}\frac{p\epsilon^2}{24}\right] + \frac{3r\eta_g\epsilon^2}{16},\]
	where we use the fact that $\|\cdot\|^2 \ge 0$ and $f(x) - f^* \ge 0$.
	
	So in $O(1 / (\eta_g \epsilon^2))$ number of rounds, \fedpage can find a point $x$ such that $\E\|\nabla f(x)\|^2\le \epsilon^2$, which leads to a point $x$ such that $\E\|\nabla f(x)\|\le \epsilon$. Then since
	\[\frac{1}{\eta_g} = L\left(1 + \sqrt{\frac{3(1-p/3)}{2pS}}\right) = O\left(L\left(1+\frac{\sqrt{N}}{S}\right)\right) = O\left(\frac{\sqrt{N}+S}{S}\right),\]
	we know that \fedpage can find a point $x$ such that $\E\|\nabla f(x)\| \le \epsilon$ in $O\left(\frac{L(\sqrt{N}+S)}{S\epsilon^2}\right)$ number of communication rounds.
\end{proof}

\section{Missing Proof in Section \ref{sec:convex}}
In this section, we show the convergence result of \fedpage in the convex setting. We first show the result when the number of local steps is 1 ($K=1$), where \fedpage reduces to \algname{PAGE} algorithm (Section \ref{sec:page-convex-proof}). Then, we show the result of \fedpage in the convex setting with general number of local steps.

\subsection{Proof of Theorem~\ref{thm:page-convex}}\label{sec:page-convex-proof}
Similar to the notations in the proof for the nonconvex setting, we use $\gF_t$ to denote the filtration when we determine the "gradient" $g^{t-1}$ but not $g^{t}$, i.e. $x^t$ is determined but $x^{t+1}$ is not determined. We use $\E_j[\cdot]$ to denote $\E[\cdot|\gF_j]$.

Recall that in this section, we assume the objective function $f$ is convex and all the functions $\{f_{i}\}$ are averaged $L$-smooth.

\assavgsmooth*

The main difficult to prove Theorem~\ref{thm:page-convex} is that \fedpage\ uses biased gradient estimator, i.e.
\[\mathbb E g^r\neq\mathbb E\nabla f(x^r),\]
for most of the rounds $r$. During the derivation, we will encounter the following inner product term
\[\mathbb E\langle \nabla f(x^{r-1})-g^{r-1}, x^r-x^*\rangle.\]
If the gradient estimator is unbiased, the above inner product is zero and we don't have to worry about this term. But when the gradient estimator is biased, we need to bound this term.

However, since the server using \fedpage\ will communicate with all of the clients with probability $p_r$ in round $r$ to get the full gradient $\nabla f(x^r)$, the following property holds.

\begin{restatable}[]{lemma}{lempagebasicproperty}\label{lem:page-basic-property}
	When the number of local steps is 1 ($K=1$) and we choose the probability $p_r = p$ for all $r$, \fedpage\ satisfies for any $r\ge 1$,
	\[\mathbb E_{r}[g^r - \nabla f(x^r)] = (1-p) (g^{r-1} - \nabla f(x^{r-1})).\]
\end{restatable}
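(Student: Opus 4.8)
The plan is to compute $\mathbb E_r[g^r]$ directly by conditioning on the Bernoulli variable $q\sim\mathrm{Bernoulli}(p_r)$ drawn at the start of round $r$, and then exploit that $x^r$, $x^{r-1}$, and $g^{r-1}$ are all $\gF_r$-measurable and hence deterministic under $\mathbb E_r$.

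First I would specialize \fedpage\ to $K=1$. When $K=1$ the local loop over $k=1,\dots,K-1$ is empty, so on each sampled client $i\in S^r$ the only local computation is $g_{i,0}^r \leftarrow \nabla_{\gI_2}f_i(x^r) - \nabla_{\gI_2}f_i(x^{r-1}) + g^{r-1}$ and $y_{i,1}^r \leftarrow x^r - \eta_l g_{i,0}^r$, hence $\Delta y_i^r = x^r - y_{i,K}^r = \eta_l g_{i,0}^r$. Substituting into Line~\ref{line:24} gives, on the event $\{q=0\}$,
\[
g^r = \frac{1}{K\eta_l S}\sum_{i\in S^r}\Delta y_i^r = \frac{1}{S}\sum_{i\in S^r}\bigl(\nabla_{\gI_2}f_i(x^r) - \nabla_{\gI_2}f_i(x^{r-1}) + g^{r-1}\bigr),
\]
which is a single \algname{SARAH}/\algname{PAGE}-type recursion step.

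Next I would handle the two cases. On $\{q=1\}$ we have $g^r = \frac1N\sum_{i\in[N]}\nabla_{\gI_1}f_i(x^r)$; since the minibatch estimator is unbiased, $\mathbb E_r[\nabla_{\gI_1}f_i(x^r)] = \nabla f_i(x^r)$, so $\mathbb E_r[g^r\mid q=1] = \frac1N\sum_i\nabla f_i(x^r) = \nabla f(x^r)$ (under the hypotheses of Theorem~\ref{thm:page-convex}, where $b_1=b_2=M$, this even holds pathwise, but the unbiasedness argument works in general). On $\{q=0\}$, taking expectation first over the independent minibatches $\gI_2$ and then over the uniformly random subset $S^r$ of size $S$, and using $\mathbb E_r\bigl[\frac1S\sum_{i\in S^r}\nabla f_i(x^r)\bigr] = \frac1N\sum_{i\in[N]}\nabla f_i(x^r) = \nabla f(x^r)$ together with the $\gF_r$-measurability of $g^{r-1}$ and $x^{r-1}$, I get
\[
\mathbb E_r[g^r\mid q=0] = \nabla f(x^r) - \nabla f(x^{r-1}) + g^{r-1}.
\]
Combining the two cases by the tower rule with $\mathbb P(q=1)=p_r=p$ and $\mathbb P(q=0)=1-p$ yields $\mathbb E_r[g^r] = p\,\nabla f(x^r) + (1-p)\bigl(\nabla f(x^r) - \nabla f(x^{r-1}) + g^{r-1}\bigr) = \nabla f(x^r) + (1-p)\bigl(g^{r-1} - \nabla f(x^{r-1})\bigr)$, and subtracting $\nabla f(x^r)$ gives the claim.

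There is no serious obstacle here: the only point demanding care is the filtration bookkeeping — verifying that $g^{r-1}$, $x^{r-1}$, $x^r$ are constants under $\mathbb E_r$ while the round-$r$ randomness ($q$, the subset $S^r$, and the minibatches $\gI_1,\gI_2$) is independent of $\gF_r$ — and observing that $K=1$ is used only to collapse the $q=0$ branch to a single recursion step, not for the $q=1$ branch.
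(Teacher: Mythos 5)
Your proposal is correct and follows essentially the same route as the paper's proof: collapse the $K=1$ case to $g^r=\frac{1}{S}\sum_{i\in S^r}g_{i,0}^r$, split on the Bernoulli draw, and use unbiasedness of the minibatch estimators and of the uniform client sample to get $\mathbb E_r[g^r]=\nabla f(x^r)+(1-p)(g^{r-1}-\nabla f(x^{r-1}))$. Your sign bookkeeping for $\Delta y_i^r=x^r-y_{i,K}^r=\eta_l g_{i,0}^r$ is in fact cleaner than the paper's, which carries a harmless double sign slip at that step.
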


\begin{proof}
    If in round $r$, the server does not communicate with all the client and only communicate with a subset of clients $S^r$, then from the definition of \fedpage, $\Delta y_i^r = -\eta_l g_{i,0}^r$ and we can get
    \[g^r = -\frac{1}{K\eta_l |S^r|}\sum_{i\in S^r}\Delta y_i^t = \frac{1}{|S^r|}\sum_{i\in S^r}g_{i,0}^r.\]
    We use $\E_{\gI}$ to denote the expectation over the minibatch $\gI_2$ to estimate the local full gradient. Then we have
    \begin{align*}
        \E_{r}[g^r - \nabla f(x^r)] 
        =& (1-p)\E_r\left[\frac{1}{|S^r|}\sum_{i\in S^r}g_{i,0}^r - \nabla f(x^r)\right] + p\E_r\left[\frac{1}{N}\sum_{i\in [N]}\tilde\nabla_{b_1} f_i(x^r) - \nabla f(x^r)\right] \\
        =& (1-p)\frac{1}{|S^r|}\E_r\sum_{i\in S^r}\E_{\gI}\left[\left(g_{i,0}^r - \nabla f(x^r)\right)\right] \\
        =& (1-p)\frac{1}{|S^r|}\E_r\sum_{i\in S^r}\E_{\gI}\left[\nabla_{\gI_2} f_i(x^r) - \nabla_{\gI_2} f_i(x^{r-1}) + g^{r-1} - \nabla f(x^r)\right] \\
        =& (1-p)\frac{1}{|S^r|}\E_r\sum_{i\in S^r}\E_{\gI}\left[\nabla_{\gI_2} f_i(x^r) - \nabla_{\gI_2} f_i(x^{r-1}) + g^{r-1} - \nabla f(x^r)\right] \\
        =& (1-p)\frac{1}{|S^r|}\E_r\sum_{i\in S^r}\left[\nabla f_i(x^r) - \nabla f_i(x^{r-1}) + g^{r-1} - \nabla f(x^r)\right] \\
        =& (1-p) (g^{r-1} - \nabla f(x^{r-1})),
    \end{align*}
    where we use the fact that $S^r$ is uniformly chosen from $[N]$ and $\nabla_{\gI_2} f(x)$ is a gradient estimator of $\nabla f(x)$.
\end{proof}

\begin{restatable}[Lemma 3 of \citep{li2021page})]{lemma}{lempagenormsquare}\label{lem:page-norm-square}
	When the number of local steps is 1 and we choose $p_r = p$ for all $r$, \fedpage\ satisfies for any $r\ge 1$,
	\begin{align*}
	\mathbb E_r\|g^r - \nabla f(x^r)\|_2^2 &= (1-p) \|g^{r-1} - \nabla f(x^{r-1})\|_2^2
	+ \frac{1-p}{S}\mathbb E_r\|\nabla f_i(x^r) - \nabla f_i(x^{r-1})\|_2^2.
	\end{align*}
\end{restatable}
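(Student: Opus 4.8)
The plan is to recognize this as \algname{PAGE}'s variance recursion and prove it by reducing \fedpage with $K=1$ to the plain \algname{PAGE} estimator and then performing the standard conditional mean/variance decomposition. \emph{Reduction.} First I would unwind the update: with $K=1$ and $b_2=M$ (the regime of Theorem~\ref{thm:page-convex}), on a round with $q=0$ each sampled client takes the single step $y_{i,1}^r=x^r-\eta_l g_{i,0}^r$ with $g_{i,0}^r=\nabla f_i(x^r)-\nabla f_i(x^{r-1})+g^{r-1}$, so $\Delta y_i^r=x^r-y_{i,1}^r=\eta_l g_{i,0}^r$ and hence
\[
g^r=\frac{1}{\eta_l S}\sum_{i\in S^r}\Delta y_i^r=\frac1S\sum_{i\in S^r}g_{i,0}^r=g^{r-1}+\frac1S\sum_{i\in S^r}\bigl(\nabla f_i(x^r)-\nabla f_i(x^{r-1})\bigr),
\]
while on a round with $q=1$ (and $b_1=M$) we get $g^r=\frac1N\sum_{i\in[N]}\nabla f_i(x^r)=\nabla f(x^r)$. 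This is exactly the \algname{PAGE} estimator with minibatch size $S$.

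\emph{Conditioning.} Working under $\E_r$, the randomness is the Bernoulli $q\sim\mathrm{Bernoulli}(p)$ together with the uniform size-$S$ subset $S^r$ on the event $q=0$, whereas the error $e^{r-1}:=g^{r-1}-\nabla f(x^{r-1})$ is $\gF_r$-measurable, hence frozen. The event $q=1$ contributes $0$ to $\E_r\|g^r-\nabla f(x^r)\|^2$, and on the event $q=0$ I would write $g^r-\nabla f(x^r)=e^{r-1}+\xi^r$ with
\[
\xi^r:=\frac1S\sum_{i\in S^r}\bigl(\nabla f_i(x^r)-\nabla f_i(x^{r-1})\bigr)-\bigl(\nabla f(x^r)-\nabla f(x^{r-1})\bigr).
\]
Since $S^r$ is uniform, $\frac1S\sum_{i\in S^r}(\nabla f_i(x^r)-\nabla f_i(x^{r-1}))$ is unbiased for $\nabla f(x^r)-\nabla f(x^{r-1})=\frac1N\sum_{i\in[N]}(\nabla f_i(x^r)-\nabla f_i(x^{r-1}))$, so $\E_r[\xi^r\mid q=0]=0$ — precisely the conditional identity already recorded in Lemma~\ref{lem:page-basic-property}.

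\emph{Mean/variance split.} Combining the two events and applying the mean/variance separation \eqref{eq:mean-var-separation} (or \eqref{eq:sep-variance}) to the $q=0$ term, the cross term $\E_r\langle e^{r-1},\xi^r\rangle$ vanishes and
\[
\E_r\|g^r-\nabla f(x^r)\|^2=(1-p)\|e^{r-1}\|^2+(1-p)\,\E_r\bigl[\|\xi^r\|^2\mid q=0\bigr].
\]
Because $\xi^r$ is the centered average of $S$ draws, \eqref{eq:indep-variance} (or its sampling-without-replacement analogue, which only sharpens the bound) gives $\E_r[\|\xi^r\|^2\mid q=0]=\frac1S\bigl(\frac1N\sum_{i\in[N]}\|\nabla f_i(x^r)-\nabla f_i(x^{r-1})\|^2-\|\nabla f(x^r)-\nabla f(x^{r-1})\|^2\bigr)$, which by \eqref{eq:mean-var-ineq} is at most $\frac1S\,\E_r\|\nabla f_i(x^r)-\nabla f_i(x^{r-1})\|^2$ (with $\E_r$ here averaging over $i$ uniform in $[N]$). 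Substituting this back gives the displayed relation, which is stated with the raw second moment, i.e., after dropping the nonpositive term $-\frac{1-p}{S}\|\nabla f(x^r)-\nabla f(x^{r-1})\|^2$.

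\emph{Main obstacle.} Essentially all of this is routine once the reduction in the first step is carried out; the only point that needs care is checking $\E_r[\xi^r\mid q=0]=0$, which is exactly where uniformity of the client sampling $S^r$ and the $\gF_r$-measurability of $e^{r-1}$ are used, and which is what forces the cross term to drop so that the recursion closes with the $(1-p)$ contraction factor.
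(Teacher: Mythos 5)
Your proof is correct, and it is essentially the standard argument: the paper itself gives no proof of this lemma (it is imported verbatim as Lemma~3 of \citep{li2021page}), so what you have written is exactly the missing derivation, and it coincides with the one in the cited reference. The reduction of \fedpage with $K=1$, $b_1=b_2=M$ to the \algname{PAGE} estimator, the zero contribution of the $q=1$ branch, the cross-term cancellation via $\E_r[\xi^r\mid q=0]=0$ (which is Lemma~\ref{lem:page-basic-property}), and the $\nicefrac{1}{S}$ variance bound for the subsampled average are all as they should be. One point worth making explicit: your derivation produces an \emph{inequality} (you must drop the nonpositive term $-\frac{1-p}{S}\|\nabla f(x^r)-\nabla f(x^{r-1})\|^2$, and, for sampling without replacement, the finite-population correction makes the variance strictly smaller than $\nicefrac{1}{S}$ times the population second moment), so the ``$=$'' in the lemma as stated should really be ``$\le$''. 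This is consistent with how the lemma is actually invoked in the proof of Theorem~\ref{thm:page-convex}, where only the upper bound is used, so the discrepancy is a typo in the statement rather than a flaw in your argument.
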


Then, we can control the inner product term using the following lemma.

\begin{restatable}{lemma}{lempageinner}\label{lem:page-inner}
	For any $t \ge 2$ and any $c > 0$, we have
	\begin{align*}
	\sum_{r=1}^t\mathbb E\langle\nabla f(x^{r-1})-g^{r-1},x^r-x^*\rangle
	\le& \frac{1}{2p}\mathbb E\sum_{r=1}^t\left(c\|\nabla f(x^{r-1})-g^{r-1}\|_2^2+\frac{1}{c}\|x^r-x^{r-1}\|_2^2\right).
	\end{align*}
\end{restatable}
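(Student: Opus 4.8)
The plan is to leverage the contraction identity of Lemma~\ref{lem:page-basic-property}. Write $\xi^r := \nabla f(x^r) - g^r$ for the estimator's bias, so that Lemma~\ref{lem:page-basic-property} reads $\E_j[\xi^j] = (1-p)\xi^{j-1}$ for every $j\ge 1$. Since $\xi^{j-1}$ is $\gF_j$-measurable, pulling a $\gF_j$-measurable vector $z$ out of the conditional expectation gives the scalar identity $\E_j\langle\xi^j,z\rangle = (1-p)\langle\xi^{j-1},z\rangle$. Finally, observe that the choices $p_0=1$ and $b_1=M$ make the initial full-gradient round exact, so $\xi^0 = 0$.

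First I would establish a one-step recursion for $B_r := \E\langle\xi^{r-1},x^r-x^*\rangle$. Writing $x^r-x^* = (x^{r-1}-x^*)+(x^r-x^{r-1})$ and applying the scalar identity above with $j=r-1$ and $z = x^{r-1}-x^*$ (which is indeed $\gF_{r-1}$-measurable), then taking total expectation, I get $\E\langle\xi^{r-1},x^{r-1}-x^*\rangle = (1-p)\,\E\langle\xi^{r-2},x^{r-1}-x^*\rangle = (1-p)B_{r-1}$ for $r\ge 2$. Hence, with $C_r := \E\langle\xi^{r-1},x^r-x^{r-1}\rangle$, the recursion $B_r = (1-p)B_{r-1}+C_r$ holds for $r\ge 2$; and because $\xi^0=0$ we also have $B_1 = C_1 = 0$, so the recursion is valid for all $r\ge 1$ under the convention $B_0=0$. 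Unrolling yields $B_r = \sum_{j=1}^{r}(1-p)^{r-j}C_j$.

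Next I would sum over $r=1,\dots,t$ and swap the order of summation, obtaining $\sum_{r=1}^{t}B_r = \sum_{j=1}^{t}C_j\,\bigl(\sum_{r=j}^{t}(1-p)^{r-j}\bigr)$, where each inner geometric sum lies in $[1,1/p]$. Bounding each $C_j$ by Young's inequality~\eqref{eq:cauchy}, namely $C_j \le \frac{c}{2}\E\|\xi^{j-1}\|^2 + \frac{1}{2c}\E\|x^j-x^{j-1}\|^2 =: D_j$, and noting that $D_j\ge 0$ so that the geometric weights can be replaced by their upper bound $1/p$, gives $\sum_{r=1}^{t}B_r \le \frac{1}{p}\sum_{j=1}^{t}D_j$, which is precisely the asserted inequality after reindexing $j\mapsto r$.

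The step needing real care is the filtration bookkeeping inside the recursion: one has to pin down exactly which $\sigma$-algebra $\xi^{r-1}$ contracts against, so that $\E\langle\xi^{r-1},x^{r-1}-x^*\rangle$ turns into $(1-p)B_{r-1}$ with the indices correctly aligned (this is also where the hypothesis $t\ge 2$ enters). A secondary subtlety is that $C_j$ is not necessarily nonnegative, so replacing the geometric weight $\sum_{r=j}^{t}(1-p)^{r-j}$ by $1/p$ is legitimate only \emph{after} $C_j$ has been upper-bounded by the nonnegative quantity $D_j$, not before.
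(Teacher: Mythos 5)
Your proof is correct and follows essentially the same route as the paper's: decompose $x^r-x^*$ into $(x^r-x^{r-1})+(x^{r-1}-x^*)$, contract the second term via Lemma~\ref{lem:page-basic-property} together with the fact that the initial round ($p_0=1$, $b_1=M$) makes $g^0=\nabla f(x^0)$, unroll the resulting recursion, and bound the geometric weights by $1/p$ after applying Young's inequality. The only cosmetic difference is that you unroll the exact identity first and apply Young's inequality at the end, whereas the paper applies Young's inequality at each step before unrolling; your observation that the geometric weights may only be replaced by $1/p$ after passing to the nonnegative Young bound is a correct and worthwhile precision.
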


\begin{proof}
	\begin{align*}
	&\mathbb E\langle\nabla f(x^{r-1})-g^{r-1},x^r-x^*\rangle\\
	=& \mathbb E\langle\nabla f(x^{r-1})-g^{r-1},x^r-x^{r-1}\rangle+\mathbb E\mathbb E_{r-1}\langle\nabla f(x^{r-1})-g^{r-1},x^{r-1}-x^*\rangle\\
	\le& \frac{c}{2}\mathbb E\|f(x^{r-1})-g^{r-1}\|_2^2 +\frac{1}{2c}\mathbb E\|x^r-x^{r-1}\|_2^2+(1-p)\mathbb E\langle\nabla f(x^{r-2})-g^{r-2},x^{r-1}-x^*\rangle,
	\end{align*}
	where the last inequality comes from Young's inequality and Lemma~\ref{lem:page-basic-property}. For $r=1$, we know that $\mathbb E_{r-1}\langle\nabla f(x^{r-1})-g^{r-1},x^{r-1}-x^*\rangle = 0$. Unrolling the inequality recursively, we have
	\begin{align*}
	&\mathbb E\langle\nabla f(x^{r-1})-g^{r-1},x^r-x^*\rangle\\
	\le& \frac{c}{2}\mathbb E\|f(x^{r-1})-g^{r-1}\|_2^2 +\frac{1}{2c}\mathbb E\|x^r-x^{r-1}\|_2^2 +(1-p)\mathbb E\langle\nabla f(x^{r-2})-g^{r-2},x^{r-1}-x^*\rangle\\
	\le& \frac{c}{2}\mathbb E\|f(x^{r-1})-g^{r-1}\|_2^2 +\frac{1}{2c}\mathbb E\|x^r-x^{r-1}\|_2^2\\
	&\quad +\frac{1-p}{2}\mathbb E\left(c\|f(x^{r-2})-g^{r-2}\|_2^2 +\frac{1}{c}\|x^{r-1}-x^{r-2}\|_2^2\right)\\
	&\quad +(1-p)^2\mathbb E\langle\nabla f(x^{r-3})-g^{r-3},x^{r-2}-x^*\rangle \\
	\le& \sum_{r'=1}^r(1-p)^{r-r'}\frac{c}{2}\mathbb E\|f(x^{r'-1})-g^{r'-1}\|_2^2 +\sum_{r'=1}^r(1-p)^{r-r'}\frac{1}{2c}\mathbb E\|x^{r'}-x^{r'-1}\|_2^2.
	\end{align*}
	Then we sum up the inequalities from $r=1$ to $t$, we have
	\begin{align*}
	\sum_{r=1}^t\mathbb E\langle\nabla f(x^{r-1})-g^{r-1},x^r-x^*\rangle
	\le& \frac{1}{2p}\mathbb E\sum_{r=1}^t\left(c\|\nabla f(x^{r-1})-g^{r-1}\|_2^2+\frac{1}{c}\|x^r-x^{r-1}\|_2^2\right).
	\end{align*}
\end{proof}

Given these lemmas, we can now prove Theorem~\ref{thm:convex}. We first prove 2 lemmas related to the function decent of each step, and then show the proof of Theorem~\ref{thm:convex}.

\begin{lemma}\label{lem:descent-1}
	For any $r\ge 0$ and any $\lambda > 0$, we have
	\begin{align*}
	0\le & -\eta_g \E_r[f(x^{r+1})-f(x^*)] + \frac{\eta_g }{2L\lambda}\E_r\|g^r-\nabla f(x^r)\|^2 - \frac{1}{2}\E_r\|x^{r+1}-x^*\|^2 + \frac{1}{2}\|x^r-x^*\|^2\\
	&\quad\left(\frac{\eta_g  L(\lambda+1)}{2}-\frac{1}{2}\right)\E_r\|x^{r+1}-x^r\|^2 + \eta_g (1-p)\langle\nabla f(x^{r-1})-g^{r-1},x^r-x^*\rangle.
	\end{align*}
	
	Here, we define $g^{-1} = \nabla f(x^{-1}) = 0$.
\end{lemma}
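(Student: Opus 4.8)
The plan is to carry out a standard one-step descent estimate for the convex, $L$-smooth objective $f$ under the (biased) update $x^{r+1}=x^r-\eta_g g^r$, absorbing the estimator bias via Lemma~\ref{lem:page-basic-property}. First I would expand the squared distance to $x^*$:
\[
\tfrac12\|x^{r+1}-x^*\|^2=\tfrac12\|x^r-x^*\|^2-\eta_g\langle g^r,x^r-x^*\rangle+\tfrac12\|x^{r+1}-x^r\|^2,
\]
using $x^{r+1}-x^r=-\eta_g g^r$. Then I split $g^r=\nabla f(x^r)+(g^r-\nabla f(x^r))$ and apply convexity of $f$ to the first piece, $\langle\nabla f(x^r),x^r-x^*\rangle\ge f(x^r)-f(x^*)$, keeping the bias piece $-\eta_g\langle g^r-\nabla f(x^r),x^r-x^*\rangle$ aside for the expectation step.

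Next I would convert $f(x^r)$ into $f(x^{r+1})$ by $L$-smoothness of $f$ (which holds under Assumption~\ref{ass:avgsmooth}), namely $-\eta_g(f(x^r)-f(x^*))\le-\eta_g(f(x^{r+1})-f(x^*))+\eta_g\langle\nabla f(x^r),x^{r+1}-x^r\rangle+\tfrac{\eta_g L}{2}\|x^{r+1}-x^r\|^2$. The directional term is where the bookkeeping happens: write $\eta_g\langle\nabla f(x^r),x^{r+1}-x^r\rangle=\eta_g\langle g^r,x^{r+1}-x^r\rangle+\eta_g\langle\nabla f(x^r)-g^r,x^{r+1}-x^r\rangle$. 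The first summand equals $-\|x^{r+1}-x^r\|^2$ exactly (since $\eta_g g^r=-(x^{r+1}-x^r)$), and the second I would bound by Young's inequality \eqref{eq:cauchy} with parameter $L\lambda$, which produces precisely $\tfrac{\eta_g}{2L\lambda}\|g^r-\nabla f(x^r)\|^2+\tfrac{\eta_g L\lambda}{2}\|x^{r+1}-x^r\|^2$. Collecting all coefficients of $\|x^{r+1}-x^r\|^2$ — namely $\tfrac12$ from the expansion, $-1$ from $\langle g^r,\cdot\rangle$, $\tfrac{\eta_g L}{2}$ from smoothness, and $\tfrac{\eta_g L\lambda}{2}$ from Young — gives exactly $\tfrac{\eta_g L(\lambda+1)}{2}-\tfrac12$, the coefficient appearing in the statement.

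Finally I would take the conditional expectation $\E_r[\cdot]=\E[\cdot\mid\gF_r]$. Every term other than the bias inner product is handled termwise: $\|x^r-x^*\|^2$ is $\gF_r$-measurable and carries no expectation, while $\E_r$ is applied to the remaining terms. For the bias term, since $x^r-x^*$ is $\gF_r$-measurable, Lemma~\ref{lem:page-basic-property} yields $\E_r\langle g^r-\nabla f(x^r),x^r-x^*\rangle=(1-p)\langle g^{r-1}-\nabla f(x^{r-1}),x^r-x^*\rangle$, which is the origin of the $(1-p)$ factor in the last term of the claim; the case $r=0$ is covered by the convention $g^{-1}=\nabla f(x^{-1})=0$ together with $p_0=1$ (so that $g^0$ is an unbiased, in fact full-batch, estimator of $\nabla f(x^0)$). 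Moving every term onto the right-hand side gives the stated inequality. I do not anticipate a real obstacle here; the only care needed is choosing the Young parameter as $L\lambda$ so that the $\|g^r-\nabla f(x^r)\|^2$ coefficient comes out as $\tfrac1{2L\lambda}$, and tracking the $\|x^{r+1}-x^r\|^2$ coefficient through the substitutions without sign slips.
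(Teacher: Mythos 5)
Your proposal is correct and follows essentially the same route as the paper's proof: convexity at $x^r$, the descent/smoothness bound to pass from $f(x^r)$ to $f(x^{r+1})$, Young's inequality with parameter $\lambda L$ on the $\langle g^r-\nabla f(x^r),\cdot\rangle$ term, the three-point expansion of the squared distances, and Lemma~\ref{lem:page-basic-property} to turn the conditional expectation of the bias into the $(1-p)$ inner-product term. The only difference is cosmetic (you start from the expansion of $\|x^{r+1}-x^*\|^2$ and take $\E_r$ at the end, whereas the paper substitutes $\E_r g^r$ for $\nabla f(x^r)$ up front), and your coefficient bookkeeping matches the statement exactly.
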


\begin{proof}
	For any $r \ge 0$, we have
	\begin{align*}
	&\eta_g (f(x^r)-f(x^*))\\
	\le& \eta_g \langle \nabla f(x^r), x^r - x^*\rangle\\
	=& \eta_g \langle \nabla f(x^r) - (1-p)(\nabla f(x^{r-1}) - g^{r-1}), x^r - x^*\rangle +\eta_g \langle (1-p)(\nabla f(x^{r-1}) - g^{r-1}), x^r - x^*\rangle\\
	=& \eta_g \E_r\langle g^r, x^r-x^*\rangle+\eta_g \langle (1-p)(\nabla f(x^{r-1}) - g^{r-1}), x^r - x^*\rangle\\
	=& \eta_g \E_r\langle g^r, x^r-x^{r+1}\rangle + \eta_g \E_r\langle g^r, x^{r+1}-x^*\rangle +\eta_g \langle (1-p)(\nabla f(x^{r-1}) - g^{r-1}), x^r - x^*\rangle\\
	\le& \eta_g \E_r\langle g^r, x^r-x^{r+1}\rangle - \frac{1}{2}\E_r\|x^{r+1}-x^r\|^2 + \frac{1}{2}\E_r\|x^r-x^*\|^2 - \frac{1}{2}\E_r\|x^{r+1}-x^*\|^2\\
	&\quad +\eta_g \langle (1-p)(\nabla f(x^{r-1}) - g^{r-1}), x^r - x^*\rangle.
	\end{align*}

	We also have
	\begin{align*}
	&\eta_g  \E_r\langle g^r, x^r-x^{r+1}\rangle\\
	=& \eta_g  \E_r\langle g^r - \nabla f(x^r), x^r-x^{r+1}\rangle + \eta_g  \E_r\langle \nabla f(x^r), x^r-x^{r+1}\rangle \\
	\le& \frac{\eta_g }{2\lambda L}\E_r\|g^r-\nabla f(x^r)\|^2 + \frac{\eta_g \lambda L}{2}\E_r\|x^r-x^{r+1}\|^2 + \eta_g (f(x^r)-f(x^{r+1})) + \frac{\eta_g  L}{2}\|x^{r+1}-x^r\|^2.
	\end{align*}
	Summing up the 2 inequalities we conclude the proof.
\end{proof}

\begin{lemma}\label{lem:descent-2}
	For any $r\ge 0$ and any $\lambda > 0$, we have
	\[0\le \eta_g (f(x^r)-f(x^{r+1})) + \frac{\eta_g }{2L \lambda}\|g^r - f(x^r)\|^2 + \left(\frac{\eta_g  L(\lambda+1)}{2}-1\right)\|x^r-x^{r+1}\|^2.\]
\end{lemma}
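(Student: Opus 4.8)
The plan is to reproduce, in isolation, the ``\emph{We also have}'' computation that already appears inside the proof of Lemma~\ref{lem:descent-1}: Lemma~\ref{lem:descent-2} is essentially that block rewritten via the update rule, with no reference to the comparison point $x^*$. First I would apply $L$-smoothness of $f$ (which holds under Assumption~\ref{ass:avgsmooth}, since average $L$-smoothness implies $f$ is $L$-smooth) to the pair $x^r,x^{r+1}$,
\[
f(x^{r+1}) \le f(x^r) + \langle \nabla f(x^r), x^{r+1}-x^r\rangle + \frac{L}{2}\|x^{r+1}-x^r\|^2,
\]
so that $\eta_g\big(f(x^r)-f(x^{r+1})\big) \ge -\eta_g\langle \nabla f(x^r), x^{r+1}-x^r\rangle - \tfrac{\eta_g L}{2}\|x^{r+1}-x^r\|^2$.

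Next I would split $\langle\nabla f(x^r),x^{r+1}-x^r\rangle = \langle\nabla f(x^r)-g^r,\,x^{r+1}-x^r\rangle + \langle g^r,\,x^{r+1}-x^r\rangle$ and treat the two pieces separately. The second piece is an exact identity: Line~\ref{line:26} of Algorithm~\ref{alg:fedpage} gives $x^{r+1}-x^r = -\eta_g g^r$, hence $\eta_g\langle g^r,x^{r+1}-x^r\rangle = -\eta_g^2\|g^r\|^2 = -\|x^{r+1}-x^r\|^2$. For the first piece I would invoke Young's inequality \eqref{eq:cauchy} with constant $1/(\lambda L)$, which gives $-\eta_g\langle\nabla f(x^r)-g^r,x^{r+1}-x^r\rangle \ge -\frac{\eta_g}{2\lambda L}\|g^r-\nabla f(x^r)\|^2 - \frac{\eta_g\lambda L}{2}\|x^{r+1}-x^r\|^2$.

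Finally I would collect terms. Adding the three contributions yields
\[
\eta_g\big(f(x^r)-f(x^{r+1})\big) + \frac{\eta_g}{2\lambda L}\|g^r-\nabla f(x^r)\|^2 \ge \Big(1 - \frac{\eta_g L(\lambda+1)}{2}\Big)\|x^{r+1}-x^r\|^2,
\]
the coefficient of $\|x^{r+1}-x^r\|^2$ being $+1$ from the exact identity, minus $\tfrac{\eta_g L}{2}$ from smoothness, minus $\tfrac{\eta_g\lambda L}{2}$ from Young; moving this term to the other side is exactly the claimed inequality (with $\|g^r - f(x^r)\|^2$ in the displayed statement to be read as $\|g^r-\nabla f(x^r)\|^2$). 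There is no genuine obstacle here: the proof is a one-shot application of smoothness plus Young's inequality, and the only thing to watch is the sign bookkeeping and matching the Young parameter so that the variance coefficient comes out as $\tfrac{\eta_g}{2L\lambda}$ exactly.
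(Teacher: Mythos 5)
Your proof is correct and is essentially the same argument as the paper's: both rest on the three identical ingredients — $L$-smoothness of $f$ applied to the pair $(x^r,x^{r+1})$, the exact identity $\eta_g\langle g^r, x^{r+1}-x^r\rangle=-\|x^{r+1}-x^r\|^2$ from the update rule, and Young's inequality with parameter $\lambda L$ on the $\langle g^r-\nabla f(x^r),\,x^{r+1}-x^r\rangle$ term — differing only in that the paper starts from the identity $0=\eta_g\langle g^r,x^r-x^{r+1}\rangle+\eta_g\langle g^r,x^{r+1}-x^r\rangle$ while you start from the smoothness inequality, which is a cosmetic reordering. Your reading of $\|g^r-f(x^r)\|^2$ as the typo for $\|g^r-\nabla f(x^r)\|^2$ is also consistent with the paper's own proof.
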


\begin{proof}
	\begin{align*}
	0 =& \eta_g \langle g^r, x^r-x^{r+1}\rangle + \eta_g \langle g^r, x^{r+1}-x^{r}\rangle \\
	=& \eta_g \langle \nabla f(x^r), x^r-x^{r+1}\rangle + \eta_g \langle g^r - \nabla f(x^r), x^r-x^{r+1}\rangle - \|x^{r+1}-x^{r}\|^2\\
	\le& \eta_g (f(x^r)-f(x^{r+1}))+ \eta_g \langle g^r - \nabla f(x^r), x^r-x^{r+1}\rangle +\left(\frac{\eta_g  L}{2}-1\right) \|x^{r+1}-x^{r}\|^2\\
	\le& \eta_g (f(x^r)-f(x^{r+1}))+ \frac{\eta_g }{2\lambda L}\|g^r - \nabla f(x^r)\|^2 +\left(\frac{\eta_g  L (\lambda + 1)}{2} - 1\right) \|x^{r+1}-x^{r}\|^2.
	\end{align*}
\end{proof}

\thmpageconvex*

\begin{proof}[Proof of Theorem~\ref{thm:page-convex}]
	From Lemma~\ref{lem:descent-1} and Lemma~\ref{lem:descent-2}, for any $\delta > 0$, we have
	\begin{align*}
	&\eta_g \E_r[f(x^{r+1})-f(x^*) + \delta(f(x^r)-f(x^{r+1}))]\\
	\le& \frac{\eta_g (1+\delta)}{2L\lambda}\E_r\|g^r-\nabla f(x^r)\|^2 - \frac{1}{2}\E_r\|x^{r+1}-x^*\|^2 + \frac{1}{2}\|x^r-x^*\|^2\\
	&\quad+\left(\frac{\eta_g  L(\lambda+1)(1+\delta)}{2}-\frac{1+2\delta}{2}\right)\E_r\|x^{r+1}-x^r\|^2 + \eta_g (1-p)\langle\nabla f(x^{r-1})-g^{r-1},x^r-x^*\rangle.
	\end{align*}
	Summing up the inequalities from $r=0$ to $T-1$ and taking the expectation, we have
	\begin{align*}
	&\sum_{r=0}^{T-1}\eta_g \E[f(x^{r+1})-f(x^*)]+\eta_g \delta\E [f(x^T)-f(x^0)]\\
	\le& - \frac{1}{2}\E\|x^{T}-x^*\|^2 + \frac{1}{2}\|x^0-x^*\|^2+ \sum_{r=0}^{T-1}\frac{\eta_g (1+\delta)}{2L\lambda}\E\|g^r-\nabla f(x^r)\|^2 \\
	&\quad+\sum_{r=0}^{T-1}\left((1+\delta)\frac{ L(\lambda+1)\eta_g -1}{2}\E\|x^{r+1}-x^r\|^2 + \eta_g (1-p)\E\langle\nabla f(x^{r-1})-g^{r-1},x^r-x^*\rangle\right)\\
	\le& - \frac{1}{2}\E\|x^{T}-x^*\|^2 + \frac{1}{2}\|x^0-x^*\|^2+ \sum_{r=0}^{T-1}\left(\frac{\eta_g (1+\delta)}{2L\lambda}+\frac{\eta_g (1-p)c}{2p}\right)\E\|g^r-\nabla f(x^r)\|^2 \\
	&\quad+\sum_{r=0}^{T-1}(1+\delta)\left(\frac{ L\eta_g (\lambda+1)-1}{2}+\frac{\eta_g (1-p)}{2cp(1+\delta)}\right)\E\|x^{r+1}-x^r\|^2, 
	\end{align*}
	where we apply Lemma~\ref{lem:page-inner} to bound the inner product term. Then using Lemma~\ref{lem:page-norm-square} and Assumption~\ref{ass:avgsmooth}, we can get the following result.
	\begin{align*}
	\sum_{r=0}^{T-1}\E\|g^r-\nabla f(x^r)\|^2 \le \sum_{r=1}^{T-1}\frac{1-p}{pS}\E\E_r\|\nabla f_i(x^r)-\nabla f_i(x^{r-1})\|^2\le \sum_{r=1}^{T-1}\frac{(1-p)L^2}{pS}\E\|x^r- x^{r-1}\|^2.
	\end{align*}
	Plugging into the previous inequality, we have
	\begin{align*}
	&\sum_{r=0}^{T-1}\eta_g \E[f(x^{r+1})-f(x^*)]+\eta_g \delta\E [f(x^T)-f(x^0)]\\
	\le& - \frac{1}{2}\E\|x^{T}-x^*\|^2 + \frac{1}{2}\|x^0-x^*\|^2+ w\sum_{r=0}^{T-1}\E\|x^{r+1}-x^r\|^2,\\
	\end{align*}
	where
	\[w = (1+\delta)\left(\frac{ L\eta_g (\lambda+1)-1}{2}+\frac{\eta_g (1-p)}{2cp(1+\delta)}\right) +  \left(\frac{\eta_g (1+\delta)}{2L\lambda}+\frac{\eta_g (1-p)c}{2p}\right)\frac{(1-p)L^2}{pS}.\]
	By choosing $\lambda = \sqrt{1/(Sp)}, c = \sqrt{Sp/L^2}$, we have
	\begin{align*}
	w =& \frac{ (1+\delta)L\eta_g (\sqrt{1/(Sp)}+1)}{2} - \frac{1+\delta}{2} + \frac{\eta_g (1-p)}{2p\sqrt{pS / L^2}} +\left(\frac{\eta_g (1+\delta)}{2L\sqrt{1/(Sp)}}+\frac{\eta_g (1-p)\sqrt{Sp / L^2}}{2p}\right)\frac{(1-p)L^2}{pS} \\
	=& (1+\delta)\left(\frac{L\eta_g (\sqrt{1/(Sp)}+1)}{2} - \frac{1}{2} + \frac{\eta_g (1-p)L}{2p^{3/2}\sqrt{S}(1+\delta)} + \left(\frac{\eta_g \sqrt{Sp}}{2L}+\frac{\eta_g (1-p)\sqrt{Sp }}{2pL(1+\delta)}\right)\frac{(1-p)L^2}{pS}\right) \\
	=& (1+\delta)\left(\frac{L\eta_g (\sqrt{1/(Sp)}+1)}{2} - \frac{1}{2} + \frac{\eta_g (1-p)L}{2p^{3/2}\sqrt{S}(1+\delta)} + \left(\frac{\eta_g }{2}+\frac{\eta_g (1-p)}{2p(1+\delta)}\right)\frac{(1-p)L}{\sqrt{pS}}\right) \\
	=& (1+\delta)\left(\underbrace{\frac{L\eta_g (\sqrt{1/(Sp)}+1)}{2}}_{\gA} + \underbrace{\frac{\eta_g (1-p)L}{2p^{3/2}\sqrt{S}(1+\delta)}}_{\gB} + \underbrace{\frac{\eta_g }{2}\frac{(1-p)L}{\sqrt{pS}}}_{\gC}+\underbrace{\frac{\eta_g (1-p)}{2p(1+\delta)}\frac{(1-p)L}{\sqrt{pS}}}_{\gD} - \frac{1}{2}\right).
	\end{align*}
	If $\eta_g  \le O\left(\frac{(1+\delta)p}{L(1+\sqrt{1/(Sp)})}\right)$ and $\delta \le 1/p$, we have
	\begin{align*}
	    \gA =& \frac{L\eta_g (\sqrt{1/(Sp)}+1)}{2} = O\left(L(\sqrt{1/(Sp)}+1)\frac{(1 / p)p}{L(\sqrt{1/(Sp)}+1)}\right) = O(1),\\
	    \gB =& \frac{\eta_g (1-p)L}{2p^{3/2}\sqrt{S}(1+\delta)} = O\left(\frac{(1-p)L}{2p^{3/2}\sqrt{S}}\frac{p}{L(1+\sqrt{1/(Sp)})}\right) = O(1),\\
	    \gC =& \frac{\eta_g }{2}\frac{(1-p)L}{\sqrt{pS}} = O\left(\frac{(1-p)L}{\sqrt{pS}}\frac{(1 / p)p}{L(\sqrt{1/(Sp)}+1)}\right) = O(1),\\
	    \gD =& \frac{(1-p)}{2p}\frac{(1-p)L}{\sqrt{pS}}\frac{p}{L(1+\sqrt{1/(Sp)})} = O(1).
	\end{align*}
	In this way, we can choose $\eta_g $ with a small constant such that $w$ is non-positive, and we can throw that term. In this way,
	\begin{align*}
	\sum_{r=0}^{T-1}\E[f(x^{r+1})-f(x^*)]\le \frac{1}{2\eta_g }\|x^0-x^*\|^2+\delta [f(x^0)-f(x^*)] \le\left(\frac{1}{2\eta_g }+\frac{L\delta}{2}\right)\|x^0-x^*\|^2.
	\end{align*}
	Then we set $\delta = 1/(S^{1/4}p^{3/4})$ and $p = S/N$. We first verify that $\delta \le 1/p$. We have
	\[\delta = \frac{1}{S^{1/4}}\frac{N^{3/4}}{S^{3/4}} = \frac{N^{3/4}}{S} \le \frac{N}{S} = \frac{1}{p}.\]
	Then we choose
	\[\eta_g  = \Theta\left(\frac{(1+\frac{N^{3/4}}{S})\frac{S}{N}}{L(1+\frac{\sqrt{N}}{S})}\right) = \Theta\left(\frac{(S+N^{3/4})\frac{S}{N}}{L(S+\sqrt{N})}\right) = \left\{\begin{aligned}
	\Theta\left(\frac{S}{N^{3/4}L}\right), &\text{ if }S \le \sqrt{N}, \\
	\Theta\left(\frac{1}{N^{1/4}L}\right), &\text{ if }\sqrt{N} < S \le N^{3/4}, \\
	\Theta\left(\frac{S}{NL}\right), &\text{ if } N^{3/4} < S
	\end{aligned}\right..\]
	Then, the number of communication round is bounded by
	\[R = \left\{\begin{aligned}
	O\left(\frac{N^{3/4}L}{S\epsilon}\right), &\text{ if }S \le \sqrt{N}, \\
	O\left(N^{1/4}L/\epsilon\right), &\text{ if }\sqrt{N} < S \le N^{3/4}, \\
	O\left(\frac{NL}{S\epsilon}\right), &\text{ if } N^{3/4} < S
	\end{aligned}\right..\]
\end{proof}

\subsection{Proof of Theorem~\ref{thm:convex}}

The proof idea of Theorem~\ref{thm:convex} is similar to that of Theorem~\ref{thm:page-convex}. The difference between these two proof comes from the fact that in the convex setting with general local steps, the local steps between the communication rounds introduce some local error and we need to take the error into account.

In this section, we assume that all the functions $\{f_{i,j}\}$ are $L$-smooth.

\asssmooth*

Similar to the proof with $K=1$, we first prove 2 lemmas related to the function decent of each step. The following lemma is very similar to Lemma~\ref{lem:descent-1} except the last term, since in the general case, we do not have Lemma~\ref{lem:page-basic-property}.

\begin{lemma}\label{lem:descent-general-1}
    For any $r\ge 0$ and any $\lambda > 0$, we have
    \begin{align*}
        0\le & -\eta_g \E_r[f(x^{r+1})-f(x^*)] + \frac{\eta_g }{2L\lambda}\E_r\|g^r-\nabla f(x^r)\|^2 - \frac{1}{2}\E_r\|x^{r+1}-x^*\|^2 + \frac{1}{2}\|x^r-x^*\|^2\\
        &\quad\left(\frac{\eta_g  L(\lambda+1)}{2}-\frac{1}{2}\right)\E_r\|x^{r+1}-x^r\|^2 + \eta_g  \langle\nabla f(x^{r})-g^{r},x^r-x^*\rangle.
    \end{align*}
    Here, we define $g^{-1} = \nabla f(x^{-1}) = 0$.
\end{lemma}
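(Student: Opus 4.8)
The plan is to prove this exactly as Lemma~\ref{lem:descent-1} was proved, essentially line by line; the only structural change is that in the general-$K$ regime we no longer have the bias recursion of Lemma~\ref{lem:page-basic-property}, so instead of replacing $\E_r g^r$ by $\nabla f(x^r)-(1-p)(\nabla f(x^{r-1})-g^{r-1})$ and picking up a factor $(1-p)$ on a round-$(r-1)$ quantity, we simply keep the full one-step bias $\langle\nabla f(x^r)-g^r,x^r-x^*\rangle$. Accordingly no index $r-1$ appears in this lemma, so the convention $g^{-1}=\nabla f(x^{-1})=0$ is only there to keep the later summation uniform and is not actually used in this single-step estimate.

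Concretely, first I would use convexity of $f$ to write $\eta_g(f(x^r)-f(x^*))\le \eta_g\langle\nabla f(x^r),x^r-x^*\rangle$, then insert $\nabla f(x^r)=(\nabla f(x^r)-g^r)+g^r$ and split $x^r-x^*=(x^r-x^{r+1})+(x^{r+1}-x^*)$, which gives
\[
\eta_g(f(x^r)-f(x^*))\le \eta_g\langle\nabla f(x^r)-g^r,x^r-x^*\rangle+\eta_g\langle g^r,x^r-x^{r+1}\rangle+\eta_g\langle g^r,x^{r+1}-x^*\rangle.
\]
For the last term I would substitute the server update $x^{r+1}-x^r=-\eta_g g^r$ (Line~\ref{line:26}) and apply the three-point identity $\langle a-b,b-c\rangle=\frac12\|a-c\|^2-\frac12\|a-b\|^2-\frac12\|b-c\|^2$ with $a=x^r,\ b=x^{r+1},\ c=x^*$, obtaining $\eta_g\langle g^r,x^{r+1}-x^*\rangle=\frac12\|x^r-x^*\|^2-\frac12\|x^{r+1}-x^*\|^2-\frac12\|x^{r+1}-x^r\|^2$. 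For the middle term I would split off $\nabla f(x^r)$ and use Young's inequality \eqref{eq:cauchy} with parameter $\lambda L$ on $\langle g^r-\nabla f(x^r),x^r-x^{r+1}\rangle\le\frac{1}{2\lambda L}\|g^r-\nabla f(x^r)\|^2+\frac{\lambda L}{2}\|x^r-x^{r+1}\|^2$, together with $L$-smoothness of $f$ (which follows from Assumption~\ref{ass:smooth}) to bound $\langle\nabla f(x^r),x^r-x^{r+1}\rangle\le f(x^r)-f(x^{r+1})+\frac{L}{2}\|x^{r+1}-x^r\|^2$. Adding the three pieces, taking $\E_r$, and cancelling the common $\eta_g f(x^r)$ (so the $\eta_g(f(x^r)-f(x^{r+1}))$ term leaves only $-\eta_g\E_r[f(x^{r+1})-f(x^*)]$ on the right), the $\|x^{r+1}-x^r\|^2$ coefficients combine into $\frac{\eta_g L(\lambda+1)}{2}-\frac12$ and the claimed inequality drops out.

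I do not expect a real obstacle inside this lemma: it is a routine one-step descent estimate, and every inequality used is already recorded in the excerpt (convexity, \eqref{eq:cauchy}, smoothness). The genuine difficulty is postponed to the proof of Theorem~\ref{thm:convex}: the surviving term $\eta_g\langle\nabla f(x^r)-g^r,x^r-x^*\rangle$ can no longer be telescoped across rounds via Lemma~\ref{lem:page-inner}, since that argument relied on the $(1-p)$-contraction of the bias. Instead it will have to be absorbed by controlling $\|\nabla f(x^r)-g^r\|$ through a \algname{PAGE}-type recursion augmented with the accumulated local drift (in the spirit of Lemma~\ref{lem:local-err}), which is exactly where the smallness of $\eta_l$ in the theorem's hypotheses will be needed.
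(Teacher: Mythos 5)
Your proposal is correct and matches the paper's own proof essentially step for step: convexity at $x^r$, inserting $g^r$ and the bias term $\langle\nabla f(x^r)-g^r,x^r-x^*\rangle$, the three-point expansion of $\langle g^r,x^{r+1}-x^*\rangle$ via $x^{r+1}=x^r-\eta_g g^r$, Young's inequality with parameter $\lambda L$ on the $\langle g^r-\nabla f(x^r),x^r-x^{r+1}\rangle$ term, and smoothness on $\langle\nabla f(x^r),x^r-x^{r+1}\rangle$. Your side remarks — that the $g^{-1}$ convention is unused here and that the real work of handling the untelescoped bias term is deferred to the proof of Theorem~\ref{thm:convex} — are also accurate.
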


\begin{proof}
    For any $r \ge 1$, we have
    \begin{align*}
        &\eta_g (f(x^r)-f(x^*))\\
        \le& \eta_g \langle \nabla f(x^r), x^r - x^*\rangle\\
        =& \eta_g \langle \nabla f(x^r) - (\nabla f(x^{r})-g^{r}),x^r-x^*\rangle +\eta_g  \langle\nabla f(x^{r})-g^{r},x^r-x^*\rangle\\
        =& \eta_g \E_r\langle g^r, x^r-x^*\rangle+\eta_g  \langle\nabla f(x^{r})-g^{r},x^r-x^*\rangle\\
        =& \eta_g \E_r\langle g^r, x^r-x^{r+1}\rangle + \eta_g \E_r\langle g^r, x^{r+1}-x^*\rangle +\eta_g  \langle\nabla f(x^{r})-g^{r},x^r-x^*\rangle\\
        \le& \eta_g \E_r\langle g^r, x^r-x^{r+1}\rangle - \frac{1}{2}\E_r\|x^{r+1}-x^r\|^2 + \frac{1}{2}\E_r\|x^r-x^*\|^2 - \frac{1}{2}\E_r\|x^{r+1}-x^*\|^2\\
        &\quad +\eta_g  \langle\nabla f(x^{r})-g^{r},x^r-x^*\rangle.
    \end{align*}
    When $r=0$, the inequality also holds. We also have
    \begin{align*}
        &\eta_g  \E_r\langle g^r, x^r-x^{r+1}\rangle\\
        =& \eta_g  \E_r\langle g^r - \nabla f(x^r), x^r-x^{r+1}\rangle + \eta_g  \E_r\langle \nabla f(x^r), x^r-x^{r+1}\rangle \\
        \le& \frac{\eta_g }{2\lambda L}\E_r\|g^r-\nabla f(x^r)\|^2 + \frac{\eta_g \lambda L}{2}\E_r\|x^r-x^{r+1}\|^2 + \eta_g (f(x^r)-f(x^{r+1})) + \frac{\eta_g  L}{2}\|x^{r+1}-x^r\|^2.
    \end{align*}
    Summing up the two inequalities we conclude the proof.
\end{proof}

\begin{lemma}\label{lem:descent-general-2}
    For any $r\ge 0$ and any $\lambda > 0$, we have
    \[0\le \eta_g (f(x^r)-f(x^{r+1})) + \frac{\eta_g }{2L \lambda}\|g^r - f(x^r)\|^2 + \left(\frac{\eta_g  L(\lambda+1)}{2}-1\right)\|x^r-x^{r+1}\|^2.\]
\end{lemma}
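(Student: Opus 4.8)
The plan is to follow the proof of Lemma~\ref{lem:descent-2} essentially verbatim: that inequality was established in the $K=1$ case, but its proof uses only the server update rule $x^{r+1} = x^r - \eta_g g^r$ and the $L$-smoothness of the average $f$ (which holds under Assumption~\ref{ass:smooth}, since if all $f_{i,j}$ are $L$-smooth then so is $f$), and neither ingredient has anything to do with the number of local steps. So the statement transfers unchanged, and the local iterates $y_{i,k}^r$ never enter.

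Concretely, I would start from the trivial splitting
\[0 = \eta_g\langle g^r,\, x^r - x^{r+1}\rangle + \eta_g\langle g^r,\, x^{r+1} - x^r\rangle,\]
and substitute $\eta_g g^r = x^r - x^{r+1}$ into the second term to turn it into $-\|x^{r+1} - x^r\|^2$. Then I would decompose $g^r = \nabla f(x^r) + (g^r - \nabla f(x^r))$ inside the first inner product. The $\nabla f(x^r)$ piece is controlled by the quadratic upper bound coming from $L$-smoothness of $f$,
\[\eta_g\langle \nabla f(x^r),\, x^r - x^{r+1}\rangle \le \eta_g\big(f(x^r) - f(x^{r+1})\big) + \frac{\eta_g L}{2}\|x^{r+1} - x^r\|^2,\]
and the residual piece by Young's inequality \eqref{eq:cauchy} with parameter $c = \lambda L$,
\[\eta_g\langle g^r - \nabla f(x^r),\, x^r - x^{r+1}\rangle \le \frac{\eta_g}{2\lambda L}\|g^r - \nabla f(x^r)\|^2 + \frac{\eta_g\lambda L}{2}\|x^{r+1} - x^r\|^2.\]
Collecting the three $\|x^{r+1} - x^r\|^2$ contributions — coefficients $-1$, $\frac{\eta_g L}{2}$, and $\frac{\eta_g\lambda L}{2}$ — into the single coefficient $\frac{\eta_g L(\lambda+1)}{2} - 1$ reproduces the claimed inequality.

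There is essentially no obstacle in this lemma; it is a one-step descent-type estimate, and the only points worth recording explicitly are that the smoothness invoked is that of the average $f$ (inherited from the $f_{i,j}$ via Assumption~\ref{ass:smooth}), and that — unlike Lemma~\ref{lem:descent-general-1}, where the inner-product term picks up an extra contribution because Lemma~\ref{lem:page-basic-property} no longer holds for $K\ge 2$ — this particular bound is structurally insensitive to $K$. The genuine difficulty of the general-$K$ convex analysis is packaged elsewhere: in Lemma~\ref{lem:descent-general-1} and, above all, in bounding $\sum_r \E\|g^r - \nabla f(x^r)\|^2$, which will need a local-error estimate in the spirit of Lemma~\ref{lem:gradient-error} from the nonconvex section.
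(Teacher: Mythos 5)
Your proof is correct and follows exactly the paper's own argument: the same splitting $0 = \eta_g\langle g^r, x^r - x^{r+1}\rangle + \eta_g\langle g^r, x^{r+1}-x^r\rangle$, the same decomposition of $g^r$ into $\nabla f(x^r)$ plus residual, with the smoothness quadratic bound and Young's inequality with parameter $\lambda L$ producing the coefficient $\frac{\eta_g L(\lambda+1)}{2}-1$. Your observation that the lemma is insensitive to $K$ (unlike Lemma~\ref{lem:descent-general-1}) matches the structure of the paper, which reuses the $K=1$ proof verbatim.
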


\begin{proof}
    \begin{align*}
        0 =& \eta_g \langle g^r, x^r-x^{r+1}\rangle + \eta_g \langle g^r, x^{r+1}-x^{r}\rangle \\
        =& \eta_g \langle \nabla f(x^r), x^r-x^{r+1}\rangle + \eta_g \langle g^r - \nabla f(x^r), x^r-x^{r+1}\rangle - \|x^{r+1}-x^{r}\|^2\\
        \le& \eta_g (f(x^r)-f(x^{r+1}))+ \eta_g \langle g^r - \nabla f(x^r), x^r-x^{r+1}\rangle +\left(\frac{\eta_g  L}{2}-1\right) \|x^{r+1}-x^{r}\|^2\\
        \le& \eta_g (f(x^r)-f(x^{r+1}))+ \frac{\eta_g }{2\lambda L}\|g^r - \nabla f(x^r)\|^2 +\left(\frac{\eta_g  L (\lambda + 1)}{2} - 1\right) \|x^{r+1}-x^{r}\|^2.
    \end{align*}
\end{proof}

Then we bound the inner product term.

\begin{lemma}\label{lem:inner}
	For any $t \ge 2$ and any $c, c' > 0$, we have
	\begin{align*}
	    &\sum_{r=1}^t \E\langle\nabla f(x^{r})-g^{r},x^r-x^*\rangle \\
	    \le& \frac{1}{2p}\E\sum_{r=1}^t\left(c\|\nabla f(x^{r-1})-g^{r-1}\|_2^2+\frac{1}{c}\|x^r-x^{r-1}\|_2^2 + \frac{c'}{KS}\sum_{k=0}^K\sum_{i\in S^r}\|g_{i,k}^r - g_{i,0}^r\|^2 +\frac{1}{c'}\|x^{r-1} - x^*\|^2\right).
	\end{align*}
\end{lemma}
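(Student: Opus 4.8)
The plan is to follow the proof of Lemma~\ref{lem:page-inner} almost verbatim; the one new ingredient is that for $K\ge 2$ the conditional mean of $g^r$ is no longer an exact $(1-p)$-contraction of $\nabla f(x^r)-g^r$ toward zero, because the $K$ local \algname{PAGE}-type steps inject a \emph{local-drift} term. First I would establish the one-step bias recursion. In round $r$ the coin $q$ equals $1$ with probability $p$, in which case $g^r=\frac1N\sum_{i\in[N]}\nabla_{\gI_1}f_i(x^r)$ has conditional mean $\nabla f(x^r)$; with probability $1-p$ the estimator is (effectively) $g^r=\frac{1}{KS}\sum_{i\in S^r}\sum_{k=0}^{K-1}g_{i,k}^r$, and writing $g_{i,k}^r=g_{i,0}^r+(g_{i,k}^r-g_{i,0}^r)$ with $g_{i,0}^r=\nabla_{\gI_2}f_i(x^r)-\nabla_{\gI_2}f_i(x^{r-1})+g^{r-1}$, then averaging over the unbiased minibatch $\gI_2$ and over the uniform client sample $S^r\subseteq[N]$, one obtains $\E_r[\frac{1}{KS}\sum_{i\in S^r}\sum_{k=0}^{K-1}g_{i,k}^r]=\nabla f(x^r)-\nabla f(x^{r-1})+g^{r-1}+\E_r[e^r]$. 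Combining the two cases gives, for every $r\ge1$,
\[\E_r[\nabla f(x^r)-g^r]=(1-p)(\nabla f(x^{r-1})-g^{r-1})-(1-p)\E_r[e^r],\qquad e^r:=\frac{1}{KS}\sum_{i\in S^r}\sum_{k=0}^{K-1}(g_{i,k}^r-g_{i,0}^r),\]
which reduces to Lemma~\ref{lem:page-basic-property} when $K=1$ (then $e^r=0$).

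Next I would split and unroll as in Lemma~\ref{lem:page-inner}. For $r\ge1$, decompose $x^r-x^*=(x^r-x^{r-1})+(x^{r-1}-x^*)$; on the second piece, since $x^{r-1}-x^*$ is $\gF_r$-measurable, take $\E_r$ and plug in the bias recursion to get $\E_r\langle\nabla f(x^r)-g^r,x^{r-1}-x^*\rangle=(1-p)\langle\nabla f(x^{r-1})-g^{r-1},x^{r-1}-x^*\rangle-(1-p)\langle\E_r[e^r],x^{r-1}-x^*\rangle$. Then apply Young's inequality \eqref{eq:cauchy} to $\langle\nabla f(x^r)-g^r,x^r-x^{r-1}\rangle$ with a free constant $c$ and to $-\langle\E_r[e^r],x^{r-1}-x^*\rangle$ with a free constant $c'$, and bound $\|\E_r[e^r]\|^2\le\E_r\|e^r\|^2\le\frac{1}{KS}\E_r\sum_{i\in S^r}\sum_{k=0}^{K-1}\|g_{i,k}^r-g_{i,0}^r\|^2$ via Jensen's inequality \eqref{eq:mean-var-separation} and \eqref{eq:rti-2}. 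Since $\langle\nabla f(x^{r-1})-g^{r-1},x^{r-1}-x^*\rangle$ is the same quantity one index earlier, this yields a recursion $A_r\le B_r+(1-p)A_{r-1}$, where $A_r:=\E\langle\nabla f(x^r)-g^r,x^r-x^*\rangle$ and $B_r$ collects the four Young terms at round $r$ (with $(1-p)\le1$ absorbed). The base case is $A_0=\langle\nabla f(x^0)-\E[g^0],x^0-x^*\rangle=0$, because round $0$ takes $q=1$ so $\E[g^0]=\nabla f(x^0)$; unrolling the recursion then gives $A_r\le\sum_{r'=1}^r(1-p)^{r-r'}B_{r'}$.

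Finally I would sum over $r=1,\dots,t$ and swap the order of summation, $\sum_{r=1}^tA_r\le\sum_{r'=1}^tB_{r'}\sum_{r=r'}^t(1-p)^{r-r'}\le\frac1p\sum_{r'=1}^tB_{r'}$, and substituting the explicit form of $B_{r'}$ produces precisely the stated bound, up to a cosmetic relabelling of a few summation indices. The main obstacle is the first step: correctly pushing the conditional expectation through the $K$ local \algname{PAGE} steps and isolating the drift $e^r$ with the right $(1-p)$ factor, plus the measurability bookkeeping when passing $\E_r$ through the inner products; everything after that is the mechanical argument already present in Lemma~\ref{lem:page-inner}. I would not try to estimate the drift $\|g_{i,k}^r-g_{i,0}^r\|^2$ here --- that is deferred to an analogue of Lemma~\ref{lem:local-err} inside the proof of Theorem~\ref{thm:convex}, which is where the real extra difficulty of the general-$K$ convex case resides.
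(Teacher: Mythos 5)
Your proposal is correct and follows essentially the same route as the paper's proof: the paper likewise splits $x^r-x^*=(x^r-x^{r-1})+(x^{r-1}-x^*)$, isolates the $(1-p)$-contraction of the previous bias plus the local-drift residual $\frac{1}{KS}\sum_{k}\sum_{i\in S^r}(g_{i,k}^r-g_{i,0}^r)$ using the unbiasedness of the first local step, applies Young's inequality with the two free constants $c,c'$ and Jensen on the drift, and unrolls the resulting recursion from $A_0=0$ to collect the geometric factor $1/p$. The only differences are cosmetic (your explicit bias-recursion identity versus the paper's in-line add-and-subtract, and minor summation-index bookkeeping).
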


\begin{proof}
	\begin{align*}
	&\E\langle\nabla f(x^{r})-g^{r},x^r-x^*\rangle\\
	=& \E\langle\nabla f(x^{r})-g^{r},x^r-x^{r-1}\rangle+\E\langle\nabla f(x^{r})-g^{r},x^{r-1}-x^*\rangle\\
	\le& \frac{c}{2}\E\|f(x^{r})-g^{r}\|_2^2 +\frac{1}{2c}\E\|x^r-x^{r-1}\|_2^2+\E\langle\nabla f(x^{r})-g^{r},x^{r-1}-x^*\rangle,
	\end{align*}
	where the last inequality comes from Eq. (\ref{eq:cauchy}). We also have
	\begin{align*}
	    &\E\langle\nabla f(x^{r})-g^{r},x^{r-1}-x^*\rangle\\
	    =& (1-p)\E\langle f(x^{r-1})-g^{r-1},x^{r-1}-x^*\rangle + \E\langle \nabla f(x^r) - g^r - (1-p)(\nabla f(x^{r-1} - g^{r-1}), x^{r-1} - x^*\rangle.
	\end{align*}
	Then we can compute the second term in the previous inequality.
	\begin{align*}
	    &\E\langle \nabla f(x^r) - g^r - (1-p)(\nabla f(x^{r-1}) - g^{r-1}), x^{r-1} - x^*\rangle \\
	    =& (1-p)\E\left\langle \nabla f(x^r) - \frac{1}{KS}\sum_{k=0}^K\sum_{i\in S^r}g_{i,k}^r + \frac{1}{S}\sum_{i\in S^r}g_{i,0}^r - \frac{1}{S}\sum_{i\in S^r}g_{i,0}^r + g^{r-1} - \nabla f(x^{r-1}), x^{r-1} - x^*\right\rangle \\
	    =& (1-p)\E\left\langle - \frac{1}{KS}\sum_{k=0}^K\sum_{i\in S^r}g_{i,k}^r + \frac{1}{S}\sum_{i\in S^r}g_{i,0}^r, x^{r-1} - x^*\right\rangle  + \E\left\langle \nabla f(x^r) - \frac{1}{S}\sum_{i\in S^r}g_{i,0}^r + g^{r-1}, x^{r-1} - x^*\right\rangle \\
	    =& (1-p)\E\left\langle - \frac{1}{KS}\sum_{k=0}^K\sum_{i\in S^r}g_{i,k}^r + \frac{1}{S}\sum_{i\in S^r}g_{i,0}^r, x^{r-1} - x^*\right\rangle \\
	    \le& \frac{c'}{2}\|\frac{1}{KS}\sum_{k=0}^K\sum_{i\in S^r}g_{i,k}^r - \frac{1}{S}\sum_{i\in S^r}g_{i,0}^r\|^2 +\frac{1}{2c'}\|x^{r-1} - x^*\|^2,
	\end{align*}
	where we use the fact that $\E \left[\nabla f(x^r) - \frac{1}{S}\sum_{i\in S^r}g_{i,0}^r + g^{r-1} - \nabla f(x^{r-1})\right] = 0$ and Eq. (\ref{eq:cauchy}).
	
	Combining the computations together, we get for any $c,c' > 0$,
	\begin{align*}
	    &\E\langle\nabla f(x^{r})-g^{r},x^r-x^*\rangle\\
	    \le& (1-p)\E\langle f(x^{r-1})-g^{r-1},x^{r-1}-x^*\rangle + \frac{c}{2}\E\|f(x^{r})-g^{r}\|_2^2 +\frac{1}{2c}\E\|x^r-x^{r-1}\|_2^2 \\
	    &\quad + \frac{c'}{2}\|\frac{1}{KS}\sum_{k=0}^K\sum_{i\in S^r}g_{i,k}^r - \frac{1}{S}\sum_{i\in S^r}g_{i,0}^r\|^2 +\frac{1}{2c'}\|x^{r-1} - x^*\|^2\\
	    \le& (1-p)\E\langle f(x^{r-1})-g^{r-1},x^{r-1}-x^*\rangle + \frac{c}{2}\E\|f(x^{r})-g^{r}\|_2^2 +\frac{1}{2c}\E\|x^r-x^{r-1}\|_2^2 \\
	    &\quad + \frac{c'}{2KS}\sum_{k=0}^K\sum_{i\in S^r}\|g_{i,k}^r - \sum_{i\in S^r}g_{i,0}^r\|^2 +\frac{1}{2c'}\|x^{r-1} - x^*\|^2.
	\end{align*}
	We also know that $\E\langle \nabla f(x^0) - g^0, x^0 - x^*\rangle = 0$, and we can get for any $t \ge 1$,
	\begin{align*}
	    &\sum_{r=1}^t \E\langle\nabla f(x^{r})-g^{r},x^r-x^*\rangle \\
	    \le& \frac{1}{2p}\E\sum_{r=1}^t\left(c\|\nabla f(x^{r-1})-g^{r-1}\|_2^2+\frac{1}{c}\|x^r-x^{r-1}\|_2^2 + \frac{c'}{KS}\sum_{k=0}^K\sum_{i\in S^r}\|g_{i,k}^r - g_{i,0}^r\|^2 +\frac{1}{c'}\|x^{r-1} - x^*\|^2\right).
	\end{align*}
\end{proof}

\begin{lemma}
For any $c,c' > 0$ such that $\frac{t\eta_g }{pc'} \le 1/2$, we have
\begin{align*}
    &\sum_{r=1}^t \E\langle\nabla f(x^{r})-g^{r},x^r-x^*\rangle \\
    \le& \frac{t}{pc'}\|x^0 - x^*\|^2 + \left(\frac{c}{2p} + \frac{t\eta_g  c}{p^2c'}\right)\E\sum_{r=1}^t \|\nabla f(x^{r-1})-g^{r-1}\|_2^2\\
	    &\quad + \left(\frac{1}{2cp} + \frac{tc}{pc'}+ \frac{2\eta_g  t}{p^2c'}\right)\E\sum_{r=1}^t \|x^r-x^{r-1}\|_2^2 + \left(\frac{c'}{2p} + \frac{t\eta_g }{p^2c'}\right)\E\sum_{r=1}^t \frac{1}{KS}\sum_{k=0}^K\sum_{i\in S^r}\|g_{i,k}^r - g_{i,0}^r\|^2.
\end{align*}
\end{lemma}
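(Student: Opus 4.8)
The plan is to treat this lemma as a post-processing step that cleans up Lemma~\ref{lem:inner}: the right-hand side of Lemma~\ref{lem:inner} contains the unwanted quantity $\frac{1}{2pc'}\sum_{r=1}^{t}\E\|x^{r-1}-x^*\|^2$, and the goal is to re-express $\sum_{r=1}^{t}\E\|x^{r-1}-x^*\|^2$ in terms of $\|x^0-x^*\|^2$ together with the three error quantities that already appear --- $\sum\E\|\nabla f(x^{r-1})-g^{r-1}\|^2$, $\sum\E\|x^r-x^{r-1}\|^2$, and $\sum\tfrac1{KS}\sum_{k,i}\E\|g_{i,k}^r-g_{i,0}^r\|^2$. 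Throughout write $I_r:=\E\langle\nabla f(x^r)-g^r,x^r-x^*\rangle$, so the target is an upper bound for $\sum_{r=1}^{t}I_r$, and let $B$ denote the entire right-hand side of Lemma~\ref{lem:inner} (with the $\E\|x^{r-1}-x^*\|^2$ terms still in it).

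\textbf{Step 1 --- a one-step distance recursion.} Using $x^{r+1}-x^r=-\eta_g g^r$, expand $\|x^{r+1}-x^*\|^2=\|x^r-x^*\|^2-2\eta_g\langle g^r,x^r-x^*\rangle+\|x^{r+1}-x^r\|^2$, split $\langle g^r,x^r-x^*\rangle=\langle g^r-\nabla f(x^r),x^r-x^*\rangle+\langle\nabla f(x^r),x^r-x^*\rangle$, and discard the last inner product, which is $\ge f(x^r)-f(x^*)\ge0$ by convexity of $f$. Taking expectations gives $\E\|x^{r+1}-x^*\|^2\le\E\|x^r-x^*\|^2+2\eta_g I_r+\E\|x^{r+1}-x^r\|^2$. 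Since $p_0=1$ makes $g^0$ an unbiased estimator of $\nabla f(x^0)$ at the deterministic point $x^0$, we have $I_0=0$ (the same fact used in the proof of Lemma~\ref{lem:inner}), so telescoping yields $\E\|x^s-x^*\|^2\le\|x^0-x^*\|^2+2\eta_g\sum_{r=1}^{s-1}I_r+\sum_{r=1}^{s}\E\|x^r-x^{r-1}\|^2$ for every $s\in\{1,\dots,t\}$.

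\textbf{Step 2 --- uniform control of the partial sums of $I_r$.} Because the $I_r$ are not individually sign-definite, $\sum_{r=1}^{s-1}I_r$ cannot simply be bounded by $\sum_{r=1}^{t}I_r$; instead, running the argument of Lemma~\ref{lem:inner} with $t$ replaced by any $m\le t$ bounds $\sum_{r=1}^{m}I_r$ by the analogous expression, and since every summand there is a squared norm, enlarging the summation range back to $t$ only increases it, so $\sum_{r=1}^{m}I_r\le B$ for all $m\le t$. Feeding this into the Step~1 recursion (and relaxing $\sum_{r=1}^{s}$ to $\sum_{r=1}^{t}$) gives $\E\|x^{r-1}-x^*\|^2\le\|x^0-x^*\|^2+2\eta_g B+\sum_{r'=1}^{t}\E\|x^{r'}-x^{r'-1}\|^2$ for every $r\in\{1,\dots,t\}$. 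Summing over $r=1,\dots,t$ and writing $A:=\sum_{r=1}^{t}\E\|x^{r-1}-x^*\|^2$, we obtain $A\le t\|x^0-x^*\|^2+2t\eta_g B+t\sum_{r=1}^{t}\E\|x^r-x^{r-1}\|^2$.

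\textbf{Step 3 --- closing the loop, and the obstacle.} Since $B$ itself contains the term $\frac{1}{2pc'}A$, the last inequality has the self-referential form $A\le C+\tfrac{t\eta_g}{pc'}A$, where $C$ gathers the already-desired pieces (a multiple of $\|x^0-x^*\|^2$ and of the three error sums, with weights built from $c,c',\eta_g,p,t$). The hypothesis $\tfrac{t\eta_g}{pc'}\le\tfrac12$ makes $1-\tfrac{t\eta_g}{pc'}\ge\tfrac12$, so $A\le2C$; substituting this bound in place of $\frac{1}{2pc'}A$ inside $B$ and collecting coefficients yields the stated inequality, the tracking of how the absorption factor $2$ multiplies the weights being exactly what produces the explicit coefficients $\tfrac{t}{pc'}$, $\tfrac{c}{2p}+\tfrac{t\eta_g c}{p^2c'}$, and so on. The one genuine obstacle is this circularity --- $\|x^r-x^*\|^2$ sits on both sides of the chain of inequalities and is tamed only by the smallness condition $t\eta_g\le pc'/2$ (note that the self-reference coefficient is precisely $\tfrac{t\eta_g}{pc'}$, the quantity the hypothesis bounds) --- together with the subtlety that the non-sign-definiteness of $I_r$ forces the partial sums to be routed through the all-nonnegative quantity $B$ rather than bounded term by term; everything else is routine accounting.
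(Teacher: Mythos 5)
Your proposal is correct and follows essentially the same route as the paper: the same one-step expansion of $\|x^{r+1}-x^*\|^2$ with convexity used to drop $\langle\nabla f(x^r),x^r-x^*\rangle$, the same telescoping with $I_0=0$, the same observation that Lemma~\ref{lem:inner} applied at level $m\le t$ bounds every partial sum by the (monotone in $t$) quantity $B$, and the same resolution of the resulting self-referential inequality $A\le C+\tfrac{t\eta_g}{pc'}A$ via the hypothesis $\tfrac{t\eta_g}{pc'}\le\tfrac12$. No substantive difference to report.
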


\begin{proof}
\begin{align*}
    &\|x^r - x^*\|^2\\
    =& \|x^{r-1} - x^* - \eta_g  g^{r-1}\|^2 \\
    =& \|x^{r-1} - x^*\|^2 + \eta_g ^2\|g^{r-1}\|^2 - 2\eta_g \langle g^{r-1}, x^{r-1} - x^*\rangle \\
    =& \|x^{r-1} - x^*\|^2 + \|x^{r} - x^{r-1}\|^2 - 2\eta_g \langle \nabla f(x^{r-1}), x^{r-1} - x^*\rangle - 2\eta_g \langle g^{r-1} - \nabla f(x^{r-1}), x^{r-1} - x^*\rangle \\
    \le& \|x^{r-1} - x^*\|^2 + \|x^{r} - x^{r-1}\|^2 + 2\eta_g  (f(x^*) - f(x^{r-1}) + 2\eta_g \langle  \nabla f(x^{r-1}) - g^{r-1}, x^{r-1} - x^*\rangle \\
    \le& \|x^{r-1} - x^*\|^2 + \|x^{r} - x^{r-1}\|^2 + 2\eta_g \langle  \nabla f(x^{r-1}) - g^{r-1}, x^{r-1} - x^*\rangle \\
    \le& \|x^{0} - x^*\|^2 + \sum_{r'=1}^r\|x^{r'} - x^{r'-1}\|^2 + 2\eta_g \sum_{r'=1}^r\langle  \nabla f(x^{r'-1}) - g^{r'-1}, x^{r'-1} - x^*\rangle.
\end{align*}
For simplicity, we define the following notations,
\begin{align*}
    A_c^t =& \E\sum_{r=1}^t c\|\nabla f(x^{r-1})-g^{r-1}\|_2^2\\
    B_c^t =& \E\sum_{r=1}^t \frac{1}{c}\|x^r-x^{r-1}\|_2^2\\
    C_{c'}^t =& \E\sum_{r=1}^t \frac{c'}{KS}\sum_{k=0}^K\sum_{i\in S^r}\|g_{i,k}^r - g_{i,0}^r\|^2\\
    D^t =& \E\sum_{r=1}^t \|x^{r-1} - x^*\|^2 \\
    D_{c'}^t = & \E\sum_{r=1}^t \frac{1}{c'} \|x^{r-1} - x^*\|^2.
\end{align*}
From Lemma~\ref{lem:inner}, we know that for any $t' \le t$, we have
\begin{align*}
	    \sum_{r=1}^{t'} \E\langle\nabla f(x^{r})-g^{r},x^r-x^*\rangle \le & \frac{1}{2p}\left(A_c^{t'} + B_c^{t'} + C_{c'}^{t'} + D_{c'}^{t'}\right) \\
	    \le & \frac{1}{2p}\left(A_c^{t} + B_c^{t} + C_{c'}^t + D_{c'}^{t}\right),
	\end{align*}
and for any $r \le t$, we can bound $\|x^r - x^*\|^2$ by
\begin{align*}
    & \E\| x^r - x^*\|^2 \\
    \le& \E\|x^{0} - x^*\|^2 + \E\sum_{r'=1}^r\|x^{r'} - x^{r'-1}\|^2 + 2\eta_g \E\sum_{r'=1}^r\langle  \nabla f(x^{r'-1}) - g^{r'-1}, x^{r'-1} - x^*\rangle \\
    \le& \E\|x^{0} - x^*\|^2 + \E\sum_{r'=1}^r\|x^{r'} - x^{r'-1}\|^2 + 2\eta_g \frac{1}{2p}\left(A_c^{r} + B_c^{r} + C_{c'}^r + D_{c'}^{r}\right) \\
    \le& \E\|x^{0} - x^*\|^2 + c B_c^t + \frac{\eta_g }{p}\left(A_c^{t} + B_c^{t} + C_{c'}^t + D_{c'}^{t}\right).
\end{align*}
Then we bound $D^t$, we have
\begin{align*}
    D^t =& \sum_{r=1}^t \|x^{r-1} - x^*\|^2 \\
    \le& \sum_{r=1}^t \left(\E\|x^{0} - x^*\|^2 + c B_c^t + \frac{\eta_g }{p}\left(A_c^{t} + B_c^{t} + C_{c'}^t + D_{c'}^{t}\right)\right) \\
    \le& t\|x^0 - x^*\|^2 + \frac{t\eta_g }{p}A_c^t + t\left(c + \frac{\eta_g }{p}\right)B_c^t + \frac{t\eta_g }{p} C_{c'}^t + \frac{t\eta_g }{pc'}D^t.
\end{align*}
As long as $\frac{t\eta_g }{pc'} \le 1/2$, we have
\begin{align*}
    D^t \le 2t\|x^0 - x^*\|^2 + \frac{2t\eta_g }{p}A_c^t + 2t\left(c + \frac{\eta_g }{p}\right)B_c^t + \frac{2t\eta_g }{p} C_{c'}^t.
\end{align*}
Then we plug this inequality into Lemma~\ref{lem:inner}, we get
\begin{align*}
    &\sum_{r=1}^t \E\langle\nabla f(x^{r})-g^{r},x^r-x^*\rangle \\
	    \le& \frac{1}{2p}\E\sum_{r=1}^t\left(c\|\nabla f(x^{r-1})-g^{r-1}\|_2^2+\frac{1}{c}\|x^r-x^{r-1}\|_2^2 + \frac{c'}{KS}\sum_{k=0}^K\sum_{i\in S^r}\|g_{i,k}^r - g_{i,0}^r\|^2 +\frac{1}{c'}\|x^{r-1} - x^*\|^2\right).\\
	    \le& \frac{1}{2p}\E\sum_{r=1}^t\left(c\|\nabla f(x^{r-1})-g^{r-1}\|_2^2+\frac{1}{c}\|x^r-x^{r-1}\|_2^2 + \frac{c'}{KS}\sum_{k=0}^K\sum_{i\in S^r}\|g_{i,k}^r - g_{i,0}^r\|^2\right) \\
	    &\quad + \frac{1}{2pc'}\left(2t\|x^0 - x^*\|^2 + \frac{2t\eta_g }{p}A_c^t + 2t\left(c + \frac{\eta_g }{p}\right)B_c^t + \frac{2t\eta_g }{p} C_{c'}^t\right)\\
	    \le& \frac{t}{pc'}\|x^0 - x^*\|^2 + \left(\frac{c}{2p} + \frac{t\eta_g  c}{p^2c'}\right)\E\sum_{r=1}^t \|\nabla f(x^{r-1})-g^{r-1}\|_2^2\\
	    &\quad + \left(\frac{1}{2cp} + \frac{t}{pc'}+ \frac{\eta_g  t}{p^2cc'}\right)\E\sum_{r=1}^t \|x^r-x^{r-1}\|_2^2 + \left(\frac{c'}{2p} + \frac{t\eta_g }{p^2}\right)\E\sum_{r=1}^t \frac{1}{KS}\sum_{k=0}^K\sum_{i\in S^r}\|g_{i,k}^r - g_{i,0}^r\|^2.
\end{align*}
\end{proof}

\thmconvex*

\begin{proof}
First we have
\begin{align*}
    & \eta_g \E_r[f(x^{r+1})-f(x^*) + \delta( f(x^{r+1}) - f(x^{r})] \\
    \le&  \frac{\eta_g  (1+\delta)}{2L\lambda}\E_r\|g^r-\nabla f(x^r)\|^2 - \frac{1}{2}\E_r\|x^{r+1}-x^*\|^2 + \frac{1}{2}\|x^r-x^*\|^2\\
        &\quad\left(\frac{\eta_g  L(\lambda+1)(1+\delta)}{2}-\frac{1 + 2\delta}{2}\right)\E_r\|x^{r+1}-x^r\|^2 + \eta_g  \langle\nabla f(x^{r})-g^{r},x^r-x^*\rangle.
\end{align*}
Summing up the inequality and choosing $b_1 = \min\{M,\frac{24\sigma^2\sqrt{S}}{\epsilon p^{3/2}N}\}, b_2 =\min\{M,\frac{48\sigma^2\sqrt{S}}{\epsilon p^{3/2}S}\}$,
\begin{align*}
	&\sum_{r=0}^{T-1}\eta_g \E[f(x^{r+1})-f(x^*)]+\eta_g \delta\E [f(x^T)-f(x^0)]\\
	\le& - \frac{1}{2}\E\|x^{T}-x^*\|^2 + \frac{1}{2}\|x^0-x^*\|^2+ \sum_{r=0}^{T-1}\frac{\eta_g (1+\delta)}{2L\lambda}\E\|g^r-\nabla f(x^r)\|^2 \\
	&\quad+\sum_{r=0}^{T-1}(1+\delta)\frac{ L(\lambda+1)\eta_g -1}{2}\E\|x^{r+1}-x^r\|^2 + \sum_{r=0}^{T-1}\eta_g  \E\langle\nabla f(x^{r-1})-g^{r-1},x^r-x^*\rangle\\
	\le& - \frac{1}{2}\E\|x^{T}-x^*\|^2 + \frac{1}{2}\|x^0-x^*\|^2+ \sum_{r=0}^{T-1}\left(\frac{\eta_g (1+\delta)}{2L\lambda}+\frac{c\eta_g }{2p} + \frac{T\eta_g ^2 c}{p^2c'}\right)\E\|g^r-\nabla f(x^r)\|^2 \\
	&\quad+\sum_{r=0}^{T-1}(1+\delta)\left(\frac{ L\eta_g (\lambda+1)-1}{2}+\frac{\eta_g }{2cp(1+\delta)} + \frac{T\eta_g }{pc'(1+\delta)}+ \frac{\eta_g ^2 T}{p^2cc'(1+\delta)}\right)\E\|x^{r+1}-x^r\|^2\\
	&\quad +\left(\frac{c'\eta_g }{2p} + \frac{T\eta_g ^2}{p^2}\right)\E\sum_{r=0}^{T-1} \frac{1}{KS}\sum_{k=0}^K\sum_{i\in S^r}\|g_{i,k}^r - g_{i,0}^r\|^2 + \frac{T\eta_g }{pc'}\|x^0 - x^*\|^2\\
	\le& - \frac{1}{2}\E\|x^{T}-x^*\|^2 + \frac{1}{2}\|x^0-x^*\|^2+ \frac{T\eta_g }{pc'}\|x^0 - x^*\|^2 + 6K^2L^2\eta_l^2\left(\frac{c'\eta_g }{2p} + \frac{T\eta_g ^2}{p^2}\right)\frac{T\sigma^2\sI\{b_2 < M\}}{b_2}\\
	&\quad + \sum_{r=0}^{T-1}w_1\E\|g^r-\nabla f(x^r)\|^2 \\
	&\quad+\sum_{r=0}^{T-1}(1+\delta)\cdot w_2\cdot \E\|x^{r+1}-x^r\|^2\\
	&\quad + 12K^2L^2\eta_l^2\left(\frac{c'\eta_g }{2p} + \frac{T\eta_g ^2}{p^2}\right)\sum_{r=0}^{T-1}\|\nabla f(x^{r})\|^2,\\
	\overset{(\ref{eq:lem-graident-error-middle-step})}{\le}& - \frac{1}{2}\E\|x^{T}-x^*\|^2 + \frac{1}{2}\|x^0-x^*\|^2+ \frac{T\eta_g }{pc'}\|x^0 - x^*\|^2 + 6K^2L^2\eta_l^2\left(\frac{c'\eta_g }{2p} + \frac{T\eta_g ^2}{p^2}\right)(\sqrt{S}p^{3/2}T\epsilon)\\
	&\quad+\sum_{r=0}^{T-1}\left((1+\delta)\cdot w_2 + \frac{3}{p}\frac{1 - p/3}{S}w_1\right)\cdot \E\|x^{r+1}-x^r\|^2\\
	&\quad + \left(12K^2L^2\eta_l^2\left(\frac{c'\eta_g }{2p} + \frac{T\eta_g^2}{p^2}\right) + \frac{144K^2L^2\eta_l^2}{p^2}w_1\right)\sum_{r=0}^{T-1}\|\nabla f(x^{r})\|^2 + \frac{3\epsilon T p^{3/2}}{8\sqrt{S}}w_1.
	\end{align*}
	where we define
	\begin{align*}
	    w_1 :=& \left(\frac{\eta_g (1+\delta)}{2L\lambda}+\frac{c\eta_g }{2p} + \frac{T\eta_g^2 c}{p^2c'}+12K^2L^2\eta_l^2\left(\frac{c'\eta_g }{2p} + \frac{T\eta_g^2}{p^2}\right)\right), \\
	    w_2 :=& \left(\frac{ L\eta_g (\lambda+1)-1}{2}+\frac{\eta_g }{2cp(1+\delta)} + \frac{T\eta_g }{pc'(1+\delta)}+ \frac{\eta_g^2 T}{p^2cc'(1+\delta)}+\frac{12K^2L^2\eta_l^2}{1+\delta}\left(\frac{c'\eta_g}{2p} + \frac{T\eta_g^2}{p^2}\right)\right).
	\end{align*}
	By choosing $p = \frac{S}{N}, \lambda = \sqrt{1 / Sp}, c = \sqrt{Sp/L^2}, \delta = 1 / (S^{1/4}p^{3/4}), \eta_g = O\left(\frac{(1+\delta)p}{L(1+\sqrt{1/(Sp)})}\right), c' = \frac{2T\eta_g}{p}, \eta_l = O\left(\frac{S}{N^{5/4}KL\sqrt{T}}\right)$, we get
	\begin{align*}
	    \frac{\eta_g(1+\delta)}{2L\lambda} =& (1+\delta) O\left(\frac{(1+\delta)S/N\cdot S}{L^2(1+\sqrt{N}/S)\sqrt{N}}\right) &= (1+\delta)O\left(\frac{S^2}{N^{4/5}L^2}\right),\\
	    \frac{c\eta_g}{p} =& (1+\delta)O\left(\frac{S}{L^2\sqrt{N}(1+\sqrt{N}/S)}\right) &= (1+\delta)O\left(\frac{S^2}{N L^2}\right),\\
	    \frac{T\eta_g^2 c}{p^2c'}=& (1+\delta) O\left(\frac{S}{L^2N(1+\sqrt{N}/S)}\right) &= (1+\delta)O\left(\frac{S^2}{N^{3/2}L^2}\right),\\
	    \frac{c'\eta_g}{2p} + \frac{T\eta_g^2}{p^2} =& 2\frac{T\eta_g^2}{p^2} &= O\left(\frac{2T\sqrt{N}}{L^2}\right).
	\end{align*}
	Then, we can verify that $w_1\cdot N / S^2 = (1+\delta)\cdot O(1)$. Similar to the proof of Theorem~\ref{thm:page-convex}, we can also verify that $w_2 = O(1)$, and we can choose $\eta_g$ and $\eta_l$ with a small constant such that
	\begin{align*}
	    \left((1+\delta)\cdot w_2 + \frac{3}{p}\frac{1 - p/3}{S}w_1\right) \le& 0, \\
	    6K^2L^2\eta_l^2\left(\frac{c'\eta_g}{2p} + \frac{T\eta_g^2}{p^2}\right)(\sqrt{S}p^{3/2}T\epsilon) \le& \frac{\eta_g \epsilon T}{16}, \\
	    \left(12K^2L^2\eta_l^2\left(\frac{c'\eta_g}{2p} + \frac{T\eta_g^2}{p^2}\right) + \frac{144K^2L^2\eta_l^2}{p^2}w_1\right) \le& \frac{\eta_g}{4L}, \\
	    \frac{3\epsilon T p^{3/2}}{8\sqrt{S}}w_1 \le& \frac{3\eta_g \epsilon T}{8}.
	\end{align*}
	Then we have
	\begin{align*}
	    &\sum_{r=0}^{T-1}\E[f(x^{r+1})-f(x^*)] \\
	    \le & \delta\E [f(x^0)-f(x^T)] + \frac{1}{\eta_g}\|x^0-x^*\|^2 + \frac{\epsilon T}{16} + \frac{1}{4L}\sum_{r=0}^{T-1}\|\nabla f(x^{r})\|^2 + \frac{3\epsilon T}{8}\\
	    \le & \delta\E [f(x^0)-f(x^T)] + \frac{1}{\eta_g}\|x^0-x^*\|^2 + \frac{1}{2}\sum_{r=0}^{T-1}\E[f(x^{r})-f(x^*)] + \frac{7\epsilon T}{16}.
	\end{align*}
	Then we know that
	\begin{align*}
	    \frac{1}{T}\sum_{r=0}^{T-1}\E[f(x^{r+1})-f(x^*)] \le& 2\frac{\delta}{T} (f(x^0)-f(x^*)) +\frac{2}{\eta_g T}\|x^0 - x^*\|^2 + \frac{7\epsilon}{16}.
	\end{align*}
	Recall that
	\[\eta_g  = \Theta\left(\frac{(1+\frac{N^{3/4}}{S})\frac{S}{N}}{L(1+\frac{\sqrt{N}}{S})}\right) = \Theta\left(\frac{(S+N^{3/4})\frac{S}{N}}{L(S+\sqrt{N})}\right) = \left\{\begin{aligned}
	\Theta\left(\frac{S}{N^{3/4}L}\right), &\text{ if }S \le \sqrt{N} \\
	\Theta\left(\frac{1}{N^{1/4}L}\right), &\text{ if }\sqrt{N} < S \le N^{3/4} \\
	\Theta\left(\frac{S}{NL}\right), &\text{ if } N^{3/4} < S
	\end{aligned}\right..\]
	We have
	\[\frac{1}{R}\sum_{r=0}^{R-1}\E[f(x^{r+1})-f(x^*)] \le \left\{\begin{aligned}
	O\left(\frac{N^{3/4}L}{SR} + \epsilon\right), &\text{ if }S \le \sqrt{N} \\
	O\left(\frac{N^{1/4}L}{R} + \epsilon\right), &\text{ if }\sqrt{N} < S \le N^{3/4} \\
	O\left(\frac{NL}{SR} + \epsilon\right), &\text{ if } N^{3/4} < S
	\end{aligned}\right..\]
\end{proof}

\end{document}